\documentclass{article}

\PassOptionsToPackage{numbers, compress}{natbib}

\usepackage[main, final]{neurips_2026}

\usepackage{amsmath,amsfonts,bm}

\def\eqref#1{equation~\ref{#1}}

\def\1{\bm{1}}

\DeclareMathAlphabet{\mathsfit}{\encodingdefault}{\sfdefault}{m}{sl}
\SetMathAlphabet{\mathsfit}{bold}{\encodingdefault}{\sfdefault}{bx}{n}

\newcommand{\R}{\mathbb{R}}

\usepackage[T1]{fontenc}    %
\usepackage{textcomp}       %
\usepackage{hyperref}       %
\usepackage{url}            %
\usepackage{booktabs}       %
\usepackage{amsmath}        %
\usepackage{amsthm}         %
\usepackage{amsfonts}       %
\usepackage{nicefrac}       %
\usepackage[nopatch=footnote]{microtype} %
\usepackage[table]{xcolor}  %
\usepackage{subcaption}     %
\usepackage{multirow}       %

\usepackage{xspace}
\usepackage{dsfont}
\usepackage{afterpage}

\usepackage{enumitem}
\usepackage{caption} %
\usepackage{colortbl} %
\usepackage{bbm}
\usepackage{algorithm}
\usepackage{algpseudocode}                  %
\usepackage{algxpar}                       %
\floatstyle{ruled}
\restylefloat{algorithm}
\usepackage{setspace}                       %
\usepackage{makecell}

\usepackage{wrapfig}
\usepackage{float}
\usepackage[section]{placeins}
\usepackage{needspace}

\usepackage{mathtools} %
\usepackage{soul} %
\usepackage{csquotes}

\usepackage{tikz}
\usepackage{bm}
\usetikzlibrary{shapes.geometric}

\definecolor{LightCyan}{rgb}{0.88,1,1}
\definecolor{mygreen}{RGB}{0, 150, 100}
\newcommand{\res}[2]{#1\color{gray}{\scriptsize{$\pm$#2}}}
\newcommand{\best}[3]{\textbf{#1}\color{gray}{\scriptsize{$\pm$#2}}\color{mygreen}{$^{\textbf{$\Delta$\scriptsize{#3}}}$}}

\definecolor{customred}{HTML}{d62728}
\definecolor{customblue}{HTML}{1f77b4}
\newcommand{\graydot}{\tikz[baseline=-0.6ex]\draw[gray, fill=gray, thick] (0,0) circle (0.35ex);}
\newcommand{\bluetriangle}{\tikz[baseline=-0.5ex]\fill[customblue] (0,1.0ex) -- (-0.866ex,-0.5ex) -- (0.866ex,-0.5ex) -- cycle;}
\newcommand{\redstar}{\tikz[baseline=-0.7ex]\node[star, star points=5, star point ratio=2, fill=customred, inner sep=1.3pt, anchor=center] {};}

\usepackage{pifont}%

\sethlcolor{aliceblue}

\definecolor{citecolor}{HTML}{2980b9}
\definecolor{linkcolor}{HTML}{c0392b}
\definecolor{urlcolor}{RGB}{157,49, 251}

\hypersetup{
    breaklinks=true,
    colorlinks=true,
    citecolor=citecolor,
    linkcolor=linkcolor,
    urlcolor=urlcolor
}

\newtheorem{proposition}{Proposition}
\newcommand{\motivationdivider}{\hspace{0.012\linewidth}{\color{gray!70}\rule{0.6pt}{0.12\textheight}}\hspace{0.012\linewidth}}

\newcommand{\LockedMotivationFigure}{%
\begin{figure}[htbp]
\centering
\captionsetup{skip=3pt}
\captionsetup{font=small}
\begin{subfigure}[t]{0.31\linewidth}
\centering
\includegraphics[width=\linewidth]{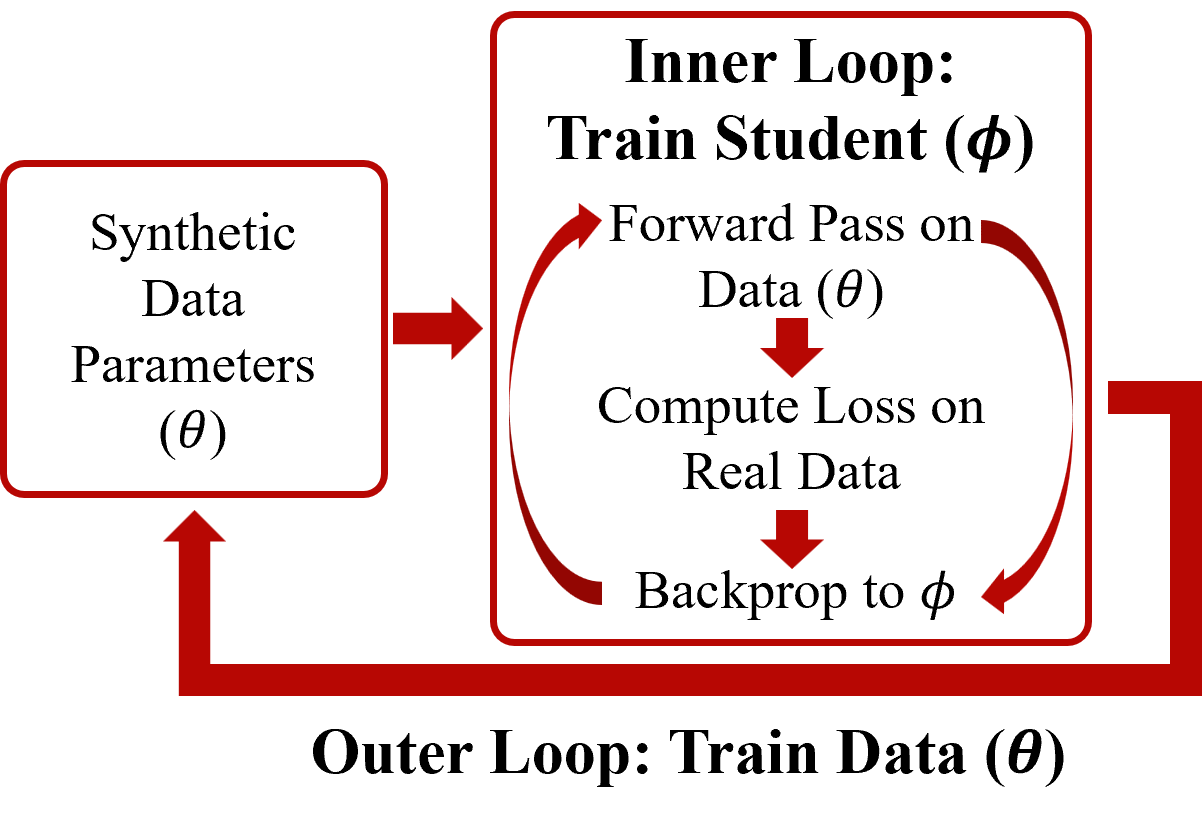}
\caption{Optimization-based methods}
\label{fig:motivation_optimization}
\end{subfigure}%
\motivationdivider
\begin{subfigure}[t]{0.31\linewidth}
\centering
\includegraphics[width=\linewidth]{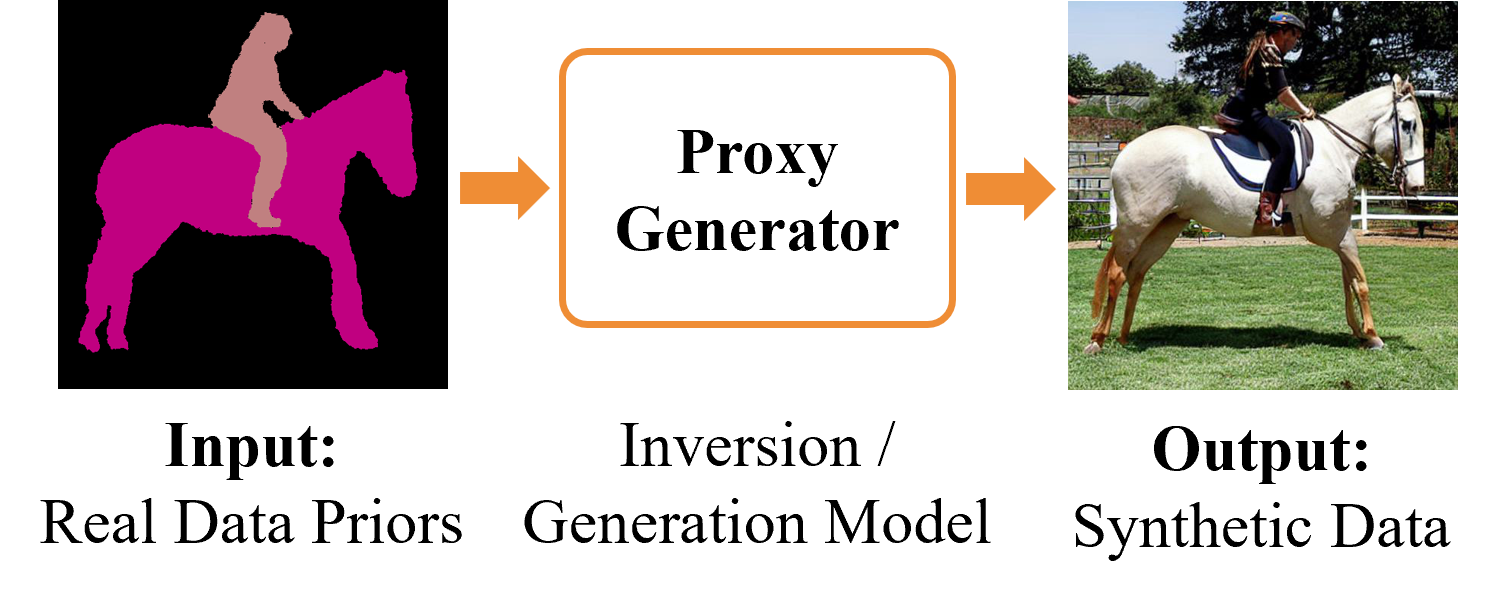}
\caption{Proxy-based methods}
\label{fig:motivation_proxy}
\end{subfigure}%
\motivationdivider
\begin{subfigure}[t]{0.31\linewidth}
\centering
\includegraphics[width=\linewidth]{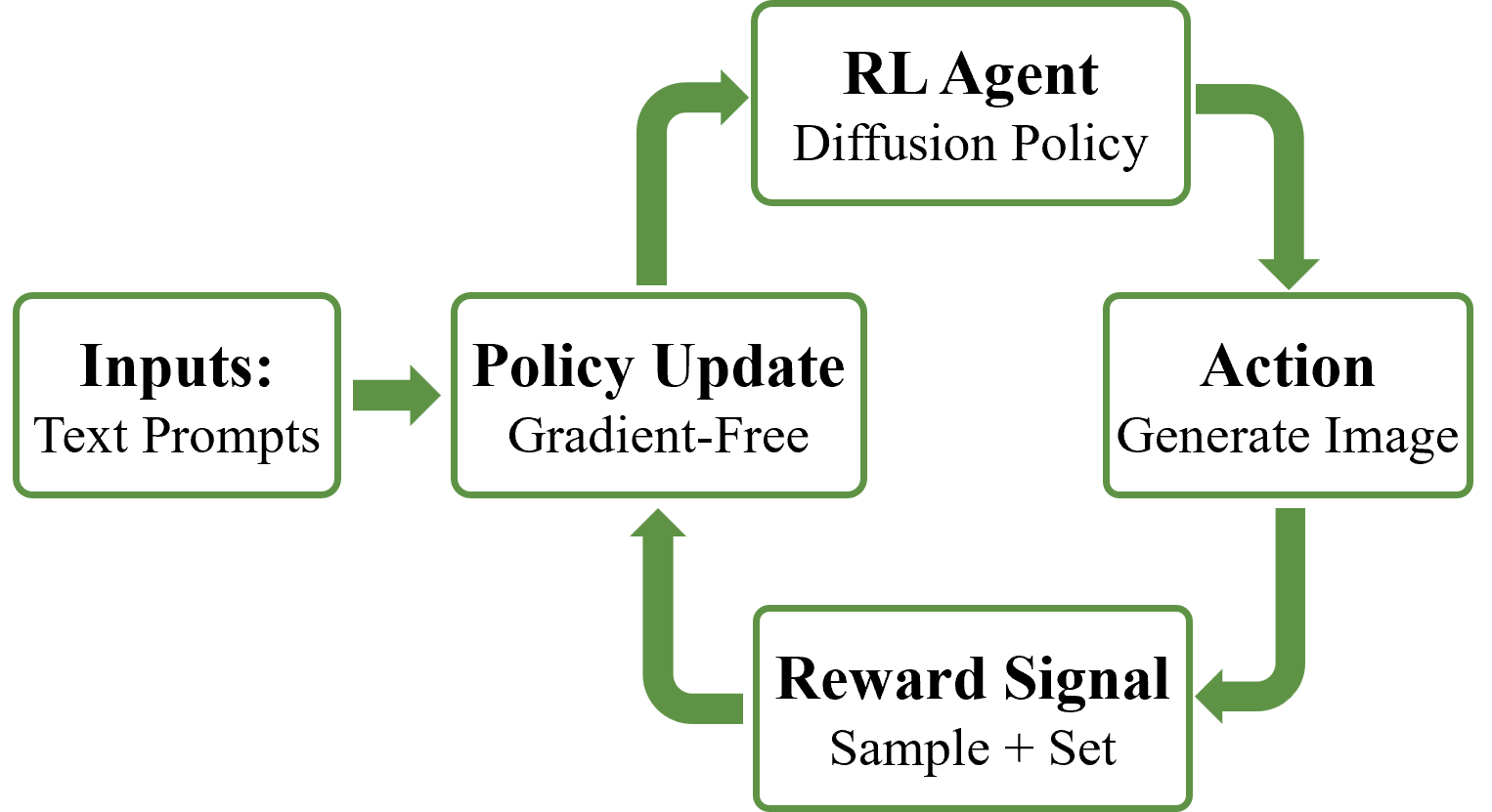}
\caption{S$^2$-RL (Ours)}
\label{fig:motivation_ours}
\end{subfigure}%
\caption{Comparison of paradigms. (Left) Optimization-based methods incur prohibitive memory costs from unrolling pixel-wise gradients. (Middle) Proxy-based generative methods require masks from real data, restricting synthesized datasets to existing layouts. (Right) S$^2$-RL treats distillation as active scene construction, using RL to generate diverse spatial topologies and valid scene layouts from text alone.}
\label{fig:motivation}
\end{figure}
}

\newcommand{\LockedMethodFigure}{%
\captionsetup{font=small}
\begin{figure}[t]
\centering
\includegraphics[width=0.98\linewidth]{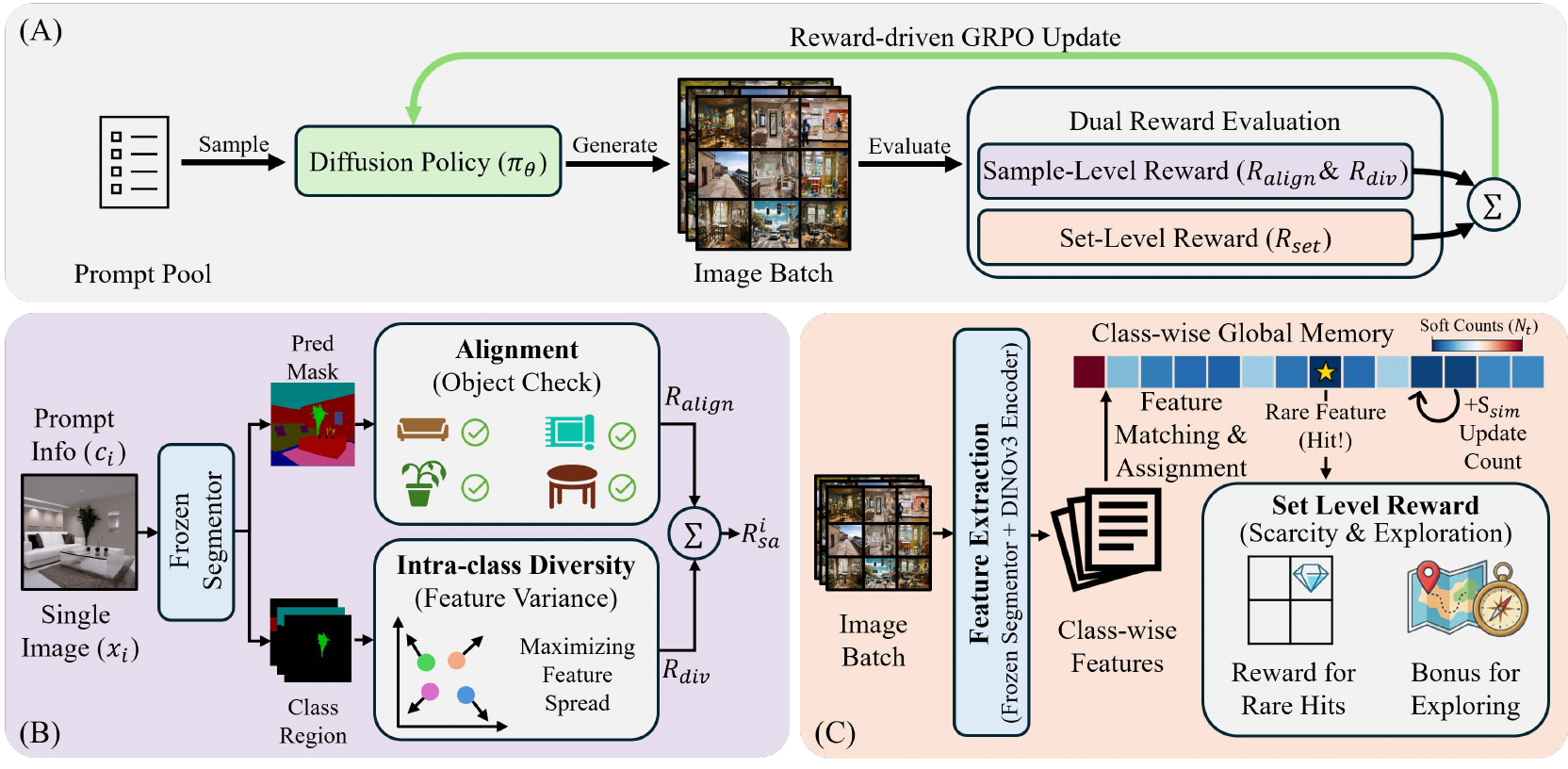}
\caption{The architecture of our proposed method. (A) The framework utilizes GRPO to finetune the diffusion policy with a dual reward evaluation system. (B) Sample-level reward ensures semantic alignment and local variance by evaluating individual images against prompt constraints. (C) Set-level reward encourages global exploration. By mapping batch features to a global prototype memory, the system identifies and rewards low-frequency prototypes, assigning high bonuses for scarcity and exploration to mitigate mode collapse.}
\label{fig:method}
\end{figure}
}

\newcommand{\LockedAdeTable}{%
\captionsetup{font=small}
\begin{table}[H]
\centering
\setlength{\tabcolsep}{4pt}
\caption{Quantitative comparison on ADE20K \cite{ade20k} using SegFormer-B5 \cite{segformer} as the downstream student. We report mAcc and mIoU (\%) at various distillation ratios. $^\ast$ denotes our reproduced implementation. The \colorbox{LightCyan}{highlighted row} denotes our method, and \textcolor{mygreen}{green} indicates the improvement over the strongest baseline.}
\label{tab:ade20k}
\resizebox{\linewidth}{!}{%
\begin{tabular}{lcccccc}
\toprule
\multirow{2.5}{*}{\textbf{Method}} & \multicolumn{2}{c}{\textbf{0.25\%}} & \multicolumn{2}{c}{\textbf{0.5\%}} & \multicolumn{2}{c}{\textbf{1.0\%}} \\
\cmidrule(lr){2-3} \cmidrule(lr){4-5} \cmidrule(lr){6-7}
& \textbf{mAcc} & \textbf{mIoU} & \textbf{mAcc} & \textbf{mIoU} & \textbf{mAcc} & \textbf{mIoU} \\
\midrule
Random           & \res{14.48}{0.46} & \res{9.59}{0.36}  & \res{21.11}{0.69} & \res{14.92}{0.45} & \res{27.08}{1.11} & \res{19.95}{0.67} \\
Uniform          & \res{18.48}{1.27} & \res{12.05}{0.68} & \res{27.48}{0.93} & \res{18.59}{0.47} & \res{34.95}{1.17} & \res{24.10}{0.55} \\
K-Center         & \res{18.02}{0.55} & \res{12.56}{0.55} & \res{24.74}{1.22} & \res{17.92}{0.93} & \res{31.69}{1.21} & \res{23.90}{0.96} \\
Herding          & \res{15.51}{0.23} & \res{10.54}{0.17} & \res{21.29}{0.27} & \res{15.44}{0.21} & \res{27.68}{0.25} & \res{21.00}{0.21} \\
\midrule
SD3.5 Baseline   & \res{20.34}{0.91} & \res{13.54}{0.56} & \res{25.92}{1.06} & \res{18.09}{0.90} & \res{30.05}{0.70} & \res{21.66}{0.40} \\
UniDD$^\ast$~\cite{ff2} & \res{17.83}{1.12} & \res{12.05}{0.64}  & \res{25.90}{1.07} & \res{17.81}{0.90} & \res{31.11}{0.83} & \res{22.70}{0.56} \\
\rowcolor{LightCyan}\textbf{S$^2$-RL (Ours)} & \best{23.07}{0.92}{2.73} & \best{15.60}{0.59}{2.06} & \best{30.74}{0.73}{3.26} & \best{21.52}{0.56}{2.93} & \best{35.58}{0.57}{0.63} & \best{25.47}{0.24}{1.37} \\
\midrule
\textit{\textcolor{gray}{Whole Dataset}} & \multicolumn{6}{c}{\textit{\textcolor{gray}{mAcc: 60.69 \hspace{10em} mIoU: 49.13}}} \\
\bottomrule
\end{tabular}%
}
\end{table}
}

\newcommand{\LockedTsneFigure}{%
\captionsetup{font=small}
\begin{figure}[H]
\centering
\begin{subfigure}[t]{0.24\linewidth}
\centering
\includegraphics[width=\linewidth]{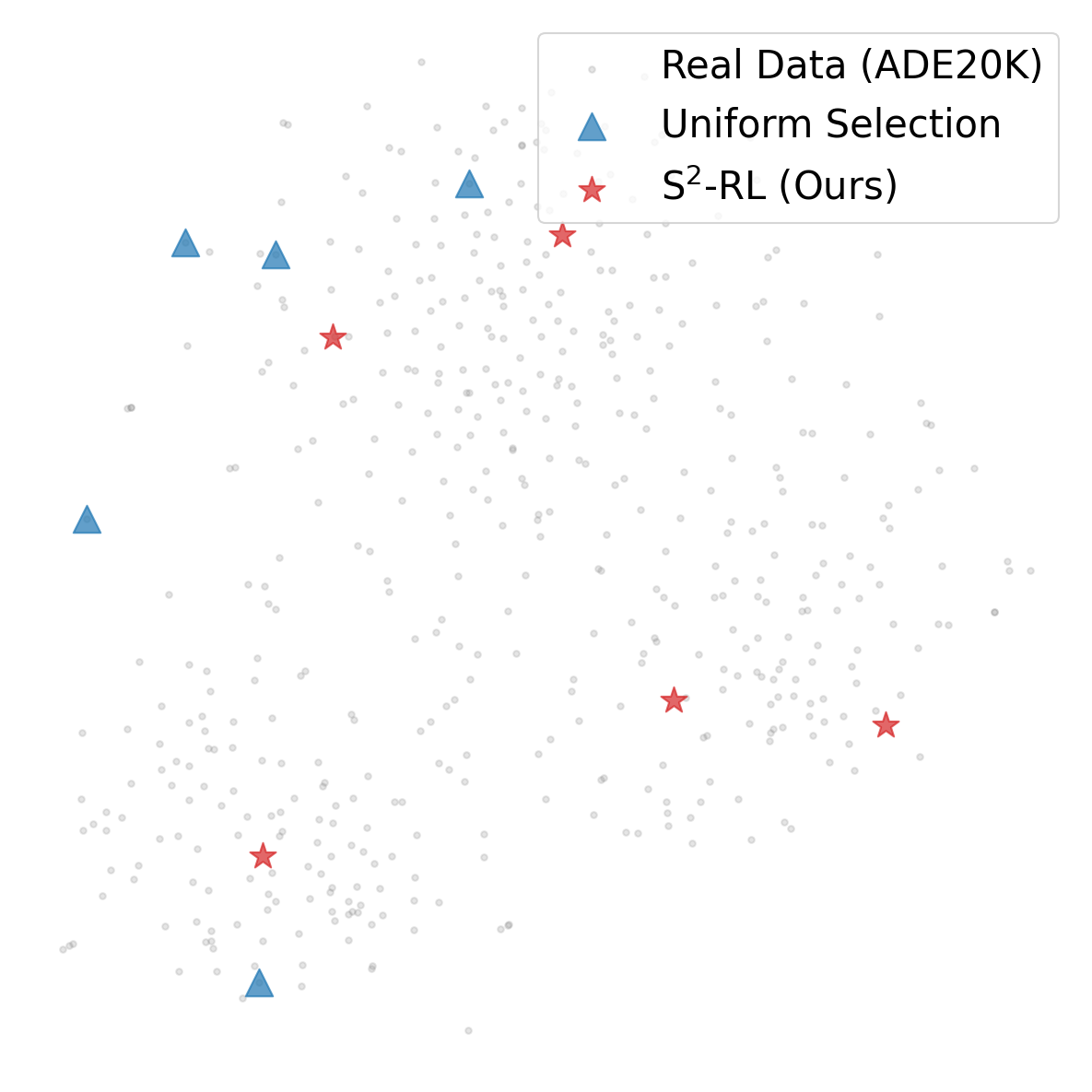}
\caption{seat}
\label{fig:tsne_seat}
\end{subfigure}%
\hfill
\begin{subfigure}[t]{0.24\linewidth}
\centering
\includegraphics[width=\linewidth]{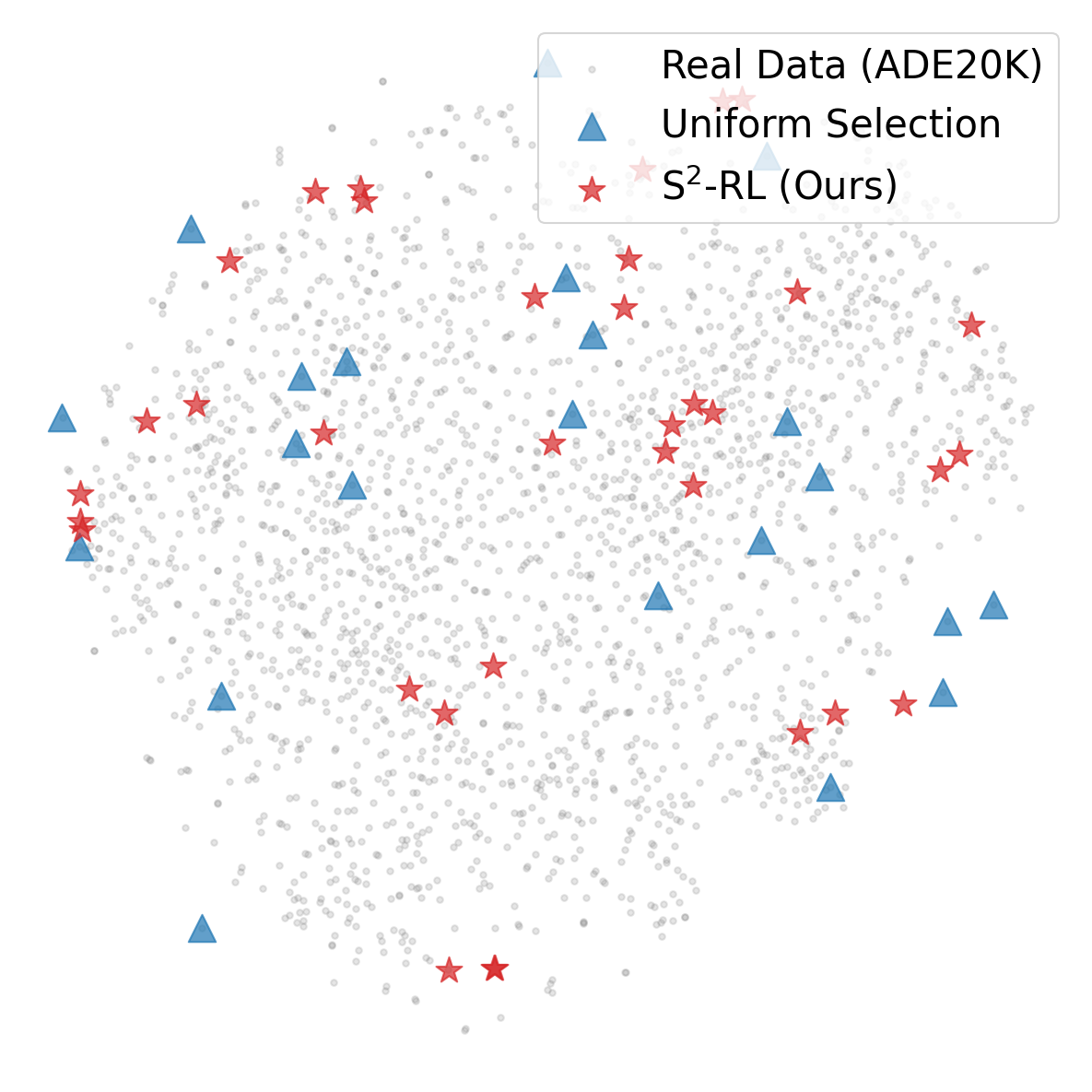}
\caption{grass}
\label{fig:tsne_grass}
\end{subfigure}%
\hfill
\begin{subfigure}[t]{0.24\linewidth}
\centering
\includegraphics[width=\linewidth]{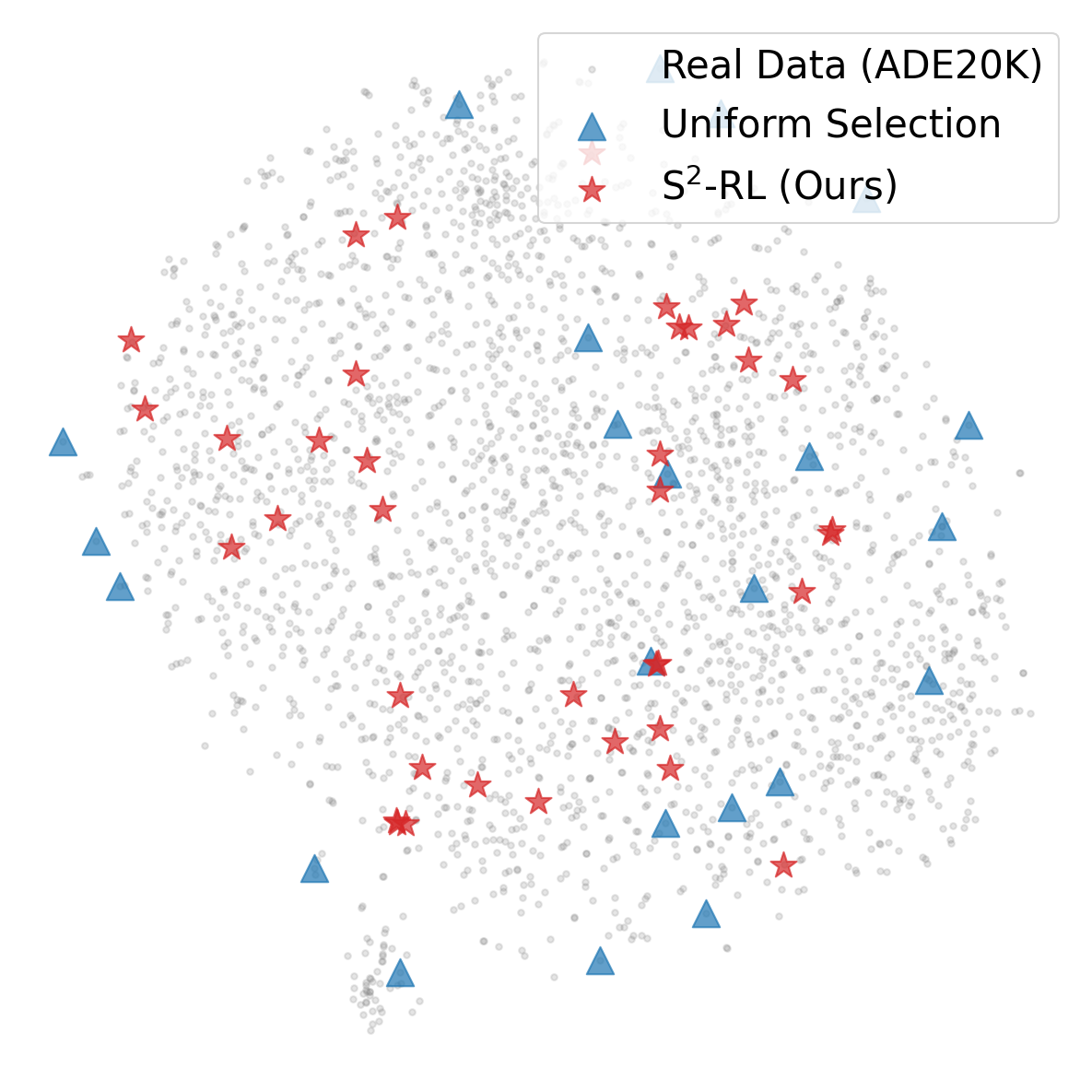}
\caption{car}
\label{fig:tsne_car}
\end{subfigure}%
\hfill
\begin{subfigure}[t]{0.24\linewidth}
\centering
\includegraphics[width=\linewidth]{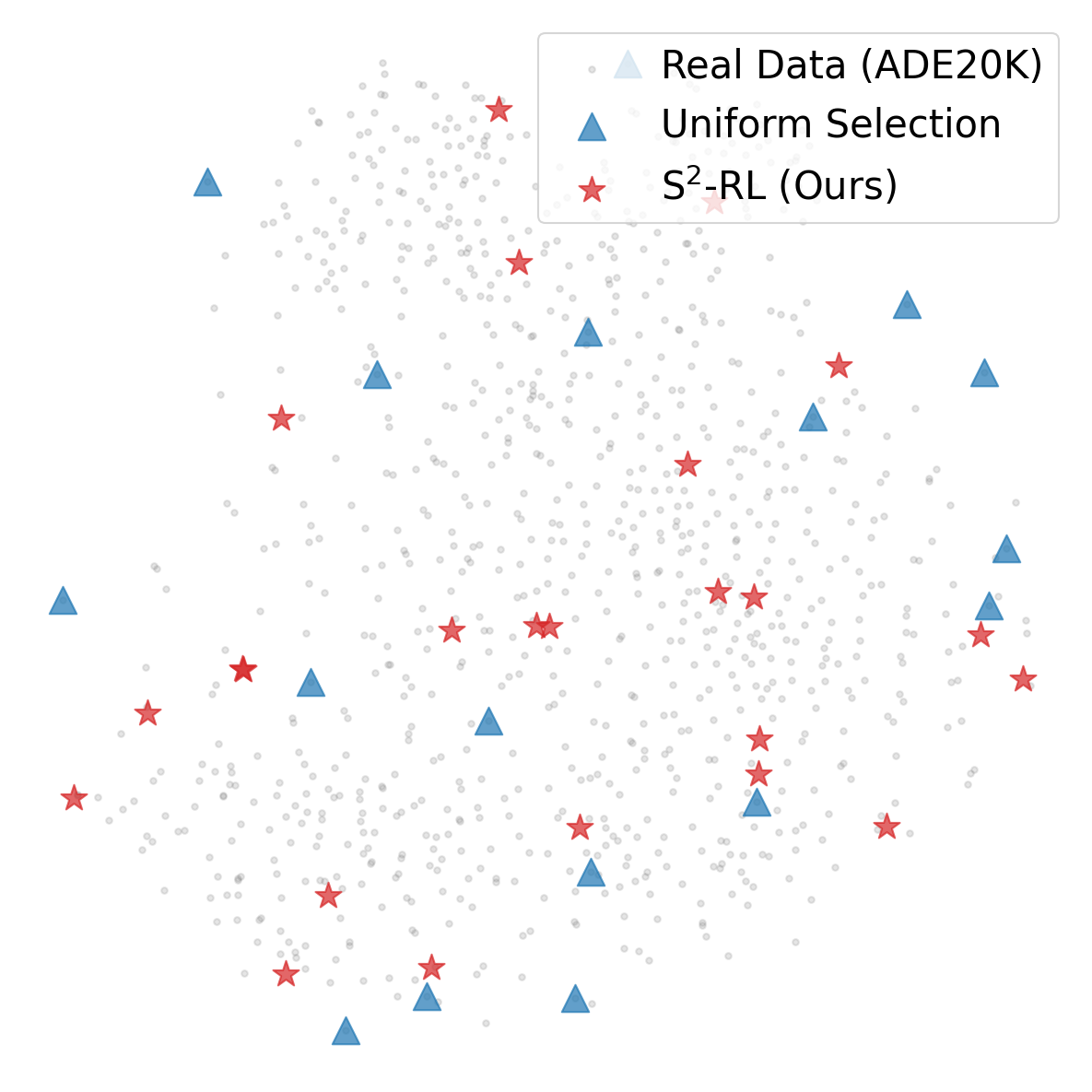}
\caption{sink}
\label{fig:tsne_sink}
\end{subfigure}
\caption{\textbf{Class-wise t-SNE embeddings under the 1\% data ratio.} Compared with Uniform Selection (\protect\bluetriangle), S$^2$-RL (\protect\redstar) spans broader regions of the ADE20K feature manifold (\protect\graydot) across all four categories.}
\label{fig:tsne}
\end{figure}
}

\newcommand{\LockedCocoTable}{%
\captionsetup{font=small}
\begin{table}[t]
\centering
\setlength{\tabcolsep}{4pt}
\caption{Quantitative comparison on COCO \cite{coco} using SegFormer-B5 \cite{segformer} as the downstream student. We report mAcc and mIoU (\%) at various distillation ratios. $^\ast$ denotes our reproduced implementation. The \colorbox{LightCyan}{highlighted row} denotes our method, and \textcolor{mygreen}{green} indicates the improvement over the strongest baseline.}
\label{tab:coco}
\resizebox{\linewidth}{!}{%
\begin{tabular}{lcccccc}
\toprule
\multirow{2.5}{*}{\textbf{Method}} & \multicolumn{2}{c}{\textbf{0.625\textperthousand}} & \multicolumn{2}{c}{\textbf{1.25\textperthousand}} & \multicolumn{2}{c}{\textbf{2.5\textperthousand}} \\
\cmidrule(lr){2-3} \cmidrule(lr){4-5} \cmidrule(lr){6-7}
& \textbf{mAcc} & \textbf{mIoU} & \textbf{mAcc} & \textbf{mIoU} & \textbf{mAcc} & \textbf{mIoU} \\
\midrule
Random           & \res{28.82}{1.32} & \res{19.71}{0.67} & \res{38.00}{0.40} & \res{27.75}{0.46} & \res{44.89}{0.40} & \res{34.31}{0.37} \\
Uniform          & \res{33.45}{0.83} & \res{23.37}{0.97} & \res{43.37}{0.97} & \res{32.18}{0.75} & \res{50.07}{0.43} & \res{37.39}{0.46} \\
K-Center         & \res{34.77}{0.67} & \res{24.99}{0.57} & \res{43.35}{0.72} & \res{32.30}{0.46} & \res{48.92}{0.47} & \res{37.19}{0.38} \\
Herding          & \res{32.07}{0.35} & \res{23.04}{0.24} & \res{40.20}{0.25} & \res{30.49}{0.22} & \res{47.56}{0.23} & \res{37.10}{0.20} \\
\midrule
SD3.5 Baseline   & \res{38.18}{0.73} & \res{25.98}{0.67} & \res{46.65}{0.48} & \res{32.88}{0.87} & \res{51.81}{0.35} & \res{37.74}{0.34} \\
UniDD$^\ast$~\cite{ff2} & \res{31.45}{0.73} & \res{21.42}{0.72} & \res{41.03}{0.97} & \res{30.02}{0.74} & \res{46.35}{0.97} & \res{34.80}{0.88} \\
\rowcolor{LightCyan}\textbf{S$^2$-RL (Ours)} & \best{41.13}{0.67}{2.95} & \best{28.46}{0.48}{2.48} & \best{48.48}{0.87}{1.83} & \best{35.03}{0.68}{2.15} & \best{53.03}{0.54}{1.22} & \best{38.83}{0.59}{1.09} \\
\midrule
\textit{\textcolor{gray}{Whole Dataset}} & \multicolumn{6}{c}{\textit{\textcolor{gray}{mAcc: 73.39 \hspace{10em} mIoU: 61.04}}} \\
\bottomrule
\end{tabular}%
}
\end{table}
}

\newcommand{\LockedArchTable}{%
\captionsetup{font=small}
\begin{table}[t]
\centering
\caption{Comprehensive robustness analysis on ADE20K \cite{ade20k}. We report mIoU (\%) across varying model capacities, backbone architectures, and semantic segmentation mechanisms at three distillation ratios.}
\label{tab:ablation-arch}
\resizebox{\linewidth}{!}{%
\begin{tabular}{lccccccc}
\toprule
\multirow{2.5}{*}{\textbf{Model Configuration}} & \multicolumn{2}{c}{\textbf{Ratio = 0.25\%}} & \multicolumn{2}{c}{\textbf{Ratio = 0.5\%}} & \multicolumn{2}{c}{\textbf{Ratio = 1.0\%}} & \multirow{2.5}{*}{\textit{\textbf{\textcolor{gray}{Full}}}} \\
\cmidrule(lr){2-3} \cmidrule(lr){4-5} \cmidrule(lr){6-7}
& Uniform & \textbf{S$^2$-RL (Ours)} & Uniform & \textbf{S$^2$-RL (Ours)} & Uniform & \textbf{S$^2$-RL (Ours)} & \\
\midrule
\multicolumn{8}{l}{\cellcolor{gray!10}\textit{\textbf{(a) Impact of Model Capacity}}} \\
ResNet-18 + UPerNet  & \res{5.32}{0.20} & \best{5.90}{0.29}{0.58} & \res{8.33}{0.42} & \best{9.12}{0.36}{0.79} & \res{12.10}{0.61} & \best{12.45}{0.25}{0.35} & \textit{\textcolor{gray}{35.58}} \\
ResNet-50 + UPerNet  & \res{6.72}{0.35} & \best{7.69}{0.36}{0.97} & \res{10.25}{0.39} & \best{11.84}{0.51}{1.59} & \res{14.83}{0.57} & \best{15.43}{0.20}{0.60} & \textit{\textcolor{gray}{42.05}} \\
ResNet-101 + UPerNet & \res{7.40}{0.49} & \best{8.64}{0.49}{1.24} & \res{10.99}{0.32} & \best{13.14}{0.46}{2.15} & \res{15.89}{0.63} & \best{16.82}{0.28}{0.93} & \textit{\textcolor{gray}{43.82}} \\
\midrule
\multicolumn{8}{l}{\cellcolor{gray!10}\textit{\textbf{(b) Impact of Backbone Architecture}}} \\
Swin-S + UPerNet     & \res{8.64}{0.41} & \best{10.03}{0.47}{1.39} & \res{13.40}{0.57} & \best{15.86}{0.61}{2.46} & \res{19.48}{0.83} & \best{21.44}{0.36}{1.96} & \textit{\textcolor{gray}{47.72}} \\
ConvNeXt-S + UPerNet & \res{9.94}{0.43} & \best{12.64}{0.58}{2.70} & \res{15.36}{0.54} & \best{18.92}{0.46}{3.56} & \res{21.83}{0.44} & \best{23.80}{0.24}{1.97} & \textit{\textcolor{gray}{48.56}} \\
\midrule
\multicolumn{8}{l}{\cellcolor{gray!10}\textit{\textbf{(c) Impact of Semantic Segmentation Mechanism}}} \\
ResNet-50 + DeepLabV3 & \res{6.42}{0.38} & \best{7.25}{0.36}{0.83} & \res{10.06}{0.44} & \best{11.31}{0.50}{1.25} & \res{14.24}{0.56} & \best{14.74}{0.17}{0.50} & \textit{\textcolor{gray}{42.66}} \\
SegFormer-B5          & \res{12.05}{0.68} & \best{15.60}{0.59}{3.55} & \res{18.59}{0.47} & \best{21.52}{0.56}{2.93} & \res{24.10}{0.55} & \best{25.47}{0.24}{1.37} & \textit{\textcolor{gray}{49.13}} \\
\bottomrule
\end{tabular}%
}
\end{table}
}

\title{S$^2$-RL: Sample-Set Dual Reinforcement Learning for Generative Semantic Segmentation Dataset Distillation}

\author{%
  \textbf{Haoyu Wang\textsuperscript{1}} \quad
  \textbf{Fei Zhou\textsuperscript{2}} \quad
  \textbf{Qingqing Qiu\textsuperscript{3}} \quad
  \textbf{Lei Zhang\textsuperscript{1}}\thanks{Corresponding author.} \quad
  \textbf{Wei Wei\textsuperscript{1}} \quad
  \textbf{Chen Ding\textsuperscript{4}} \\
  \normalfont\textsuperscript{1}Northwestern Polytechnical University \quad
  \normalfont\textsuperscript{2}The Chinese University of Hong Kong \\
  \normalfont\textsuperscript{3}Zhejiang University \quad
  \normalfont\textsuperscript{4}Xi'an University of Posts \& Telecommunications \\
  \normalfont\url{https://github.com/00why00/S2RL}
}

\begin{document}

\setcounter{footnote}{1} %
\maketitle
\setcounter{footnote}{0}

\begin{abstract}
While dataset distillation has witnessed significant advances in image classification, its extension to dense prediction tasks such as semantic segmentation is plagued by two core bottlenecks: bi-level optimization-based approaches suffer from prohibitive computational costs stemming from pixel-wise gradient unrolling, whereas proxy-based generative methods merely infuse visual textures into fixed semantic masks derived from real data, thus failing to condense rich semantic knowledge into a compact set of informative, novel scene generations.
To address both bottlenecks simultaneously, we propose a novel Sample-Set Reinforcement Learning (S$^2$-RL) framework for generative semantic segmentation dataset distillation (SSDD). Specifically, S$^2$-RL formulates SSDD as a diffusion model based text-to-image generation paradigm, enabling flexible generation of images with informative, novel semantic distributions via a concise text prompt. Furthermore, we fine-tune a diffusion policy via Group Relative Policy Optimization (GRPO), leveraging a sample-set dual reward        
paradigm: for individual samples, it enforces semantic alignment with input text prompts and intra-class feature diversity; for sample sets, it maximizes inter-sample semantic distribution diversity by solving a contextual multi-armed bandit problem. These design choices enable S$^2$-RL to distill large-scale semantic segmentation datasets into a compact set of generated samples that encapsulate diverse, informative semantic knowledge without the need of pixel-wise gradient unrolling.
Extensive experiments on the ADE20K and COCO datasets demonstrate that S$^2$-RL achieves substantial and consistent improvements over state-of-the-art baselines, establishing a new benchmark for SSDD.
\end{abstract}

\begingroup
\setlength{\intextsep}{4pt}
\LockedMotivationFigure
\endgroup

\section{Introduction}

Data-efficient learning has become crucial as deep learning models grow increasingly data-hungry. Dataset distillation \cite{dd} addresses this by condensing massive datasets into compact synthetic surrogates. Existing advances in dataset distillation fall into two primary paradigms: optimization-based methods and proxy-based generative methods. While both paradigms yield strong performance on image classification, their extension to dense prediction tasks such as semantic segmentation remains fraught with substantial challenges. %

Specifically, optimization-based methods (Figure~\ref{fig:motivation_optimization}), such as gradient matching \cite{GM1,GM2,GM3} and distribution matching \cite{DM1,DM2,DM3}, formulate dataset distillation as a bi-level optimization problem %
aiming to align the gradients or distributions of student models on synthetic data with those on the real counterpart. While theoretically rigorous, these methods suffer from computational intractability in semantic segmentation tasks. Unlike image classification, which only relies on sparse supervision, semantic segmentation requires unrolling pixel-wise gradients at full spatial resolution, leading to severe memory overhead and making end-to-end optimization practically infeasible.

To circumvent this memory bottleneck, proxy-based generative methods~\cite{sre2l,d4m} (Figure~\ref{fig:motivation_proxy}) have emerged as an efficient alternative, which bypass bi-level optimization via data inversion or global statistic matching. %
Since global statistics matching cannot comprehensively depict the complex spatial arrangements of multiple categories, %
existing approaches, such as those for object detection \cite{ff} or universal dataset distillation \cite{ff2}, turn to rely on fixed spatial masks from real data for structural regularization. This strategy %
reduces the model to a mere texture filler within static boundaries, rather than a genuine scene composer, and thus fails to synthesize a compact set of novel, informative spatial layouts %
necessary for covering the long-tail distributions of real-world scenes.

To bridge these gaps, we propose a novel \textbf{S}ample-\textbf{S}et \textbf{R}einforcement \textbf{L}earning (S$^2$-RL) framework for generative semantic segmentation dataset distillation (SSDD), as shown in Figure~\ref{fig:motivation_ours}. Unlike existing gradient-heavy or mask-conditioned methods, we reformulate SSDD as a diffusion-based text-to-image generation paradigm and optimize it through reward-driven reinforcement learning without pixel-wise gradient unrolling through downstream segmentation training. By framing the generative model as an agent, S$^2$-RL directly utilizes appropriate rewards to drive policy updates. This framework not only eliminates the need for heavy second-order gradient computations, but also empowers the model to flexibly synthesize a compact set of images with novel, informative contents from concise text prompts, thus enabling effective SSDD.%

However, applying RL to generative dataset distillation introduces a new challenge. Standard RL methods (e.g., DDPO \cite{ddpo}, Flow-GRPO \cite{flow-grpo,grpo-guard}) are typically designed for sample-level optimization, aiming to maximize the quality of individual instances. In contrast, dataset distillation is fundamentally a set-level optimization problem: the objective is to construct a collective dataset that covers the original dataset distribution with fewer samples. Straightforwardly applying standard RL is prone to incur model collapse, where the model over-optimizes for easy patterns while ignoring the long-tail distribution characteristics of real datasets. To address this problem, we introduce a sample-set dual reward mechanism. At the sample-level, for each individual generated sample, we enforce semantic alignment with the input prompt while penalizing collapsed local textures within the same class. At the set-level, we maximize inter-sample semantic distribution diversity by solving a contextual multi-armed bandit problem, enabling the generated compact sample set to appropriately cover the original long-tail dataset distribution. %
Empirical evaluations on ADE20K \cite{ade20k} and COCO \cite{coco} benchmarks demonstrate that S$^2$-RL outperforms state-of-the-art baselines, reconciling generation quality with distributional diversity.

Our main contributions are summarized as follows:
\begin{itemize}[leftmargin=*,itemsep=0mm]	
    \item We propose S$^2$-RL, a novel generative dataset distillation framework that leverages RL to enable efficient image synthesis for dense prediction tasks, bypassing the memory bottlenecks of explicit gradient matching.
    \item We design a dual reward mechanism that balances sample quality and set diversity, mitigating semantic mode collapse.
    \item We obtain new state-of-the-art SSDD performance on ADE20K and COCO datasets.
\end{itemize}

\section{Related Work}

\subsection{Data-Efficient Learning}

\textbf{Coreset Selection.} This paradigm selects a representative subset from the original data. Classical methods include random selection, which samples data stochastically, and uniform selection, which ensures balanced sampling across categories. Beyond these, geometry-based methods leverage feature space structures. Herding \cite{herding1,herding2} selects samples closest to the cluster center of each class to approximate the mean of the distribution. Conversely, K-Center \cite{k-center1,k-center2} aims to minimize the maximum distance between any data point and the selected subset, effectively prioritizing boundary samples to maximize geometric coverage. While efficient, these extraction-based methods are inherently bounded by the finite samples available in the source domain, limiting their ability to bridge distributional gaps through synthesis.

\textbf{Optimization-based Dataset Distillation.} To synthesize informative data, early methods formulated dataset distillation as a bi-level optimization problem. Gradient matching methods \cite{GM1,GM2,GM3} align the gradients or long-term training dynamics of student and teacher models, while distribution matching methods \cite{DM1,DM2,DM3} optimize distribution or feature statistics. Differentiable augmentation~\cite{dsa}, neural feature regression~\cite{frepo}, and difficulty-aligned trajectory matching~\cite{datm} further advance classification distillation. Extending such methods to high-resolution dense prediction remains challenging. In particular, approaches that differentiate through student training face prohibitive memory costs when unrolling pixel-wise computation graphs.

\textbf{Proxy-based Generative Dataset Distillation.} To bypass the heavy optimization, recent works have shifted towards proxy-based generative methods. Model inversion methods \cite{sre2l,MI2,MI3} invert pretrained model statistics (e.g., BatchNorm) to recover training images. Generative attempts \cite{d4m,GDM2,GDM3} leverage the rich priors of generative models to synthesize diverse data distributions by matching global statistics or embedding alignments. Minimax Diffusion~\cite{minimax} additionally balances representativeness and diversity during generative training. However, a critical gap remains when transferring these successes to semantic segmentation. Due to the lack of spatial location information in global statistics, existing methods, such as those for object detection \cite{ff} or universal distillation \cite{ff2}, typically require pre-defined layouts from real data to constrain the generation process. This dependency limits the synthesized dataset to the topological variations already in the training set, hindering the exploration of novel scene layouts.

Unlike optimization methods that face computational barriers, or proxy methods that depend on static mask priors, S$^2$-RL adopts a reward-driven policy optimization paradigm. This allows for the autonomous construction of complex scene topologies without unrolling downstream student-training gradients or using fixed masks as generation conditions.

\subsection{Reinforcement Learning for Diffusion Models}

Reinforcement Learning has recently been integrated with diffusion models to align generation with specific objectives. DDPO \cite{ddpo} treats the multi-step denoising process as a policy, enabling the optimization of non-differentiable reward functions via policy gradient. DPO-Diffusion \cite{dpo} simplifies this by leveraging preference pairs to optimize the policy without an explicit reward model. To further enhance training stability, Flow-GRPO \cite{flow-grpo,grpo-guard} estimates gradients using the relative advantage of sampled groups, eliminating the need for a separate value network.

While these methods have successfully improved sample quality, they primarily focus on sample-level optimization. In the context of dataset distillation, our S$^2$-RL extends these frameworks to a set-level perspective, utilizing RL not just for individual image quality, but to actively explore and cover the diverse semantic distribution required for robust downstream training.

\section{Preliminaries}

\subsection{Problem Formulation}
\afterpage{\LockedMethodFigure}

\textbf{Semantic Segmentation Dataset Distillation.} Given a large-scale real dataset $\mathcal{D}_{real} = \{(x, y) | x \in \mathcal{X}, y \in \mathcal{Y}\}$, where $\mathcal{X} \subset \mathbb{R}^{H \times W \times 3}$ is the image and $\mathcal{Y} \subset \{0, 1\}^{H \times W \times C}$ is the pixel-wise label map for $C$ classes. Our goal is to learn a generative policy $\pi_\theta$ (parameterized by $\theta$) to synthesize a distilled dataset $\mathcal{D}_{syn}(\theta) = \{(\hat{x}_i, \hat{y}_i)\}_{i=1}^{M}$ with size $M \ll |\mathcal{D}_{real}|$. Ideally, a student segmentation network $f_{\phi}$ trained on $\mathcal{D}_{syn}$ should minimize the validation loss on $\mathcal{D}_{real}$. This is classically formulated as a bi-level optimization problem:
\begin{equation}
\begin{aligned}
\label{eq:bi-optim}
\min_{\theta} \quad & \mathbb{E}_{(x,y) \sim \mathcal{D}_{real}} [\mathcal{L}_{val}(f_{\phi^*(\theta)}(x), y)] \\
\text{s.t.} \quad & \phi^*(\theta) = \arg\min_{\phi} \mathbb{E}_{(\hat{x},\hat{y}) \sim \mathcal{D}_{syn}(\theta)} [\mathcal{L}_{train}(f_{\phi}(\hat{x}), \hat{y})],
\end{aligned}
\end{equation}
where $\mathcal{L}$ is the semantic segmentation loss. The policy determines $\mathcal{D}_{syn}(\theta)$ and thereby the trained student $\phi^*(\theta)$; $M$ is the number of retained image--label pairs.

\textbf{The Computational Bottleneck.} Optimization-based methods attempt to solve Eq.~\ref{eq:bi-optim} by computing the hypergradient $\nabla_{\theta} \mathcal{L}_{val}$ via chain rule. However, for dense prediction tasks, the inner-loop optimization requires backpropagating gradients through high-dimensional pixel-wise outputs. Unrolling this computation graph incurs a large memory cost $\mathcal{O}(H \cdot W \cdot C)$, rendering explicit bi-level optimization practically infeasible.

\subsection{Reinforcement Learning Reformulation}

To avoid differentiating through downstream student training, we reformulate dataset distillation as a reward-driven reinforcement learning problem. The pretrained diffusion model acts as the agent, and each text prompt $c$ provides semantic conditioning for a denoising trajectory. Its final image $\hat{x}$ is sampled from the policy $\pi_\theta(\cdot|c)$. The reward $R(\hat{x},c)$ provides a scalar proxy for the sample's utility in downstream segmentation. Consequently, the optimization objective shifts from minimizing validation loss to maximizing the expected reward:
\begin{equation}
\mathcal{J}_{RL}(\theta) = \mathbb{E}_{c \sim p(c), \hat{x} \sim \pi_{\theta}(\cdot|c)} [R(\hat{x}, c)].
\end{equation}

\subsection{Optimization via GRPO}

We optimize $\mathcal{J}_{RL}$ via Group Relative Policy Optimization (GRPO)~\cite{deepseekmath}, which uses group-relative advantages without a separate value network. For each prompt $c$, the rollout policy samples $G$ denoising trajectories, yielding final images $\{\hat{x}_i\}_{i=1}^{G}$. Their rewards are standardized within the prompt group to form advantages $\mathbb{A}_i$, which are reused across the trained denoising steps $\mathcal{T}$. Following diffusion-policy optimization~\cite{ddpo,flow-grpo,grpo-guard}, we maximize the clipped transition-wise surrogate
\begin{equation}
\label{eq:grpo}
\mathcal{J}_{GRPO}(\theta)=\mathbb{E}\!\left[
\frac{1}{G|\mathcal{T}|}\sum_{i=1}^{G}\sum_{t\in\mathcal{T}}w_t
\min\!\left(\widetilde{\rho}_{i,t}\mathbb{A}_i,
\operatorname{clip}(\widetilde{\rho}_{i,t},1-\epsilon,1+\epsilon)\mathbb{A}_i\right)
\right],
\end{equation}
where $\mathbb{A}_i$ is obtained by standardizing $R_i$ as $(R_i-\mu_c)/(\sigma_c+\varepsilon)$, followed by advantage clipping; $\mu_c$ and $\sigma_c$ are the reward mean and standard deviation within the prompt group. Here $\widetilde{\rho}_{i,t}$ is the RatioNorm-regulated current-to-rollout transition ratio, and $w_t$ is the inverse scheduler-step weight. Appendix~\ref{app:grpo_estimator} details the estimator, and Appendix~\ref{app:grpo_reward_details} provides the training settings.

\section{Method}

\subsection{Framework Overview}

As illustrated in Figure \ref{fig:method}, S$^2$-RL reformulates SSDD as an RL-based generation process. We fine-tune a pretrained Stable Diffusion policy $\pi_{\theta}$ to synthesize a dataset $\mathcal{D}_{syn}$ that maximizes downstream semantic segmentation performance. In each iteration, the agent samples a text prompt $c$ and synthesizes an image $\hat{x}$. To optimize this generative process without accessing prohibitive pixel-wise gradients, we employ GRPO to update the policy parameters $\theta$, directly maximizing a total reward signal $R_{total}$ based on the feedback from the synthesized samples.

The efficacy of this optimization relies on our proposed sample-set dual reward mechanism, which evaluates the generation at two distinct granularities. Class-list prompts specify category presence and co-occurrence while leaving spatial arrangements to the generator. In the following sections, we detail how $R_{sample}$ and $R_{set}$ are formulated to steer the policy toward high-fidelity and diverse synthesis.

\subsection{Sample-Level Reward}

The sample-level reward ($R_{sample}$) ensures that each generated sample $\hat{x}$ is not only semantically faithful to the conditioning prompt $c$ but also possesses discriminative pixel-level features suitable for training downstream semantic segmentation models. It consists of two components:

\textbf{Semantic Alignment (\bm{$R_{align}$}).} To ensure the generated image $\hat{x}$ faithfully reflects the target classes in prompt $c$, we compare the predicted masks against the concepts in prompt. Let $\mathcal{C}_{gt}$ be the set of target classes in prompt, and $\mathcal{C}_{pred}$ be the unique classes identified in the predicted mask $P_{pred} = f_{eval}(\hat{x})$, where $f_{eval}$ is the frozen segmentor. The semantic alignment reward is defined as:
\begin{equation}
\label{eq:align}
R_{align}(\hat{x},c)
=
(1-\lambda_{area})
\underbrace{
\frac{|\mathcal{C}_{gt} \cap \mathcal{C}_{pred}|}{|\mathcal{C}_{gt}|}
}_{\text{Concept Recall}}
+
\lambda_{area}
\underbrace{
\frac{1}{H W}
\sum_{k \in \mathcal{C}_{gt}}
\sum_{u=1}^{H}
\sum_{v=1}^{W}
\mathbb{I}\!\left(P_{pred}^{(u,v)} = k\right)
}_{\text{Area Occupancy}} ,
\end{equation}
where $P_{pred}^{(u,v)}$ denotes the predicted class at pixel $(u,v)$, $\lambda_{area}$ is a weighting factor, and $\mathbb{I}[\cdot]$ is the indicator function. The first term quantifies the recall of target categories, ensuring that the model does not miss classes in the text. The second term calculates the proportion of pixels belonging to the target classes, penalizing negligible or unrecognizable regions that fail to contribute valid gradients for training.

\textbf{Intra-class Diversity (\bm{$R_{div}$}).} Merely generating correct objects is insufficient. High-quality semantic segmentation data should also exhibit intra-class feature separability, reducing repetitive local textures for the same class. To address this, we introduce an intra-class diversity reward that operates on dense feature maps extracted by the frozen evaluator. For $n_k$ feature-grid locations assigned to class $k$ within one image, we uniformly sample $M_k=\min(n_k,100)$ vectors $\mathcal{F}_k=\{\mathbf{z}_{k,m}\}_{m=1}^{M_k}$. Let $\mathcal{C}^{\geq 2}_{pred} = \{k \in \mathcal{C}_{pred} : M_k \geq 2\}$. We define the diversity reward as
\begin{equation}
\label{eq:div}
R_{div}(\hat{x})
=
-\frac{1}{|\mathcal{C}^{\geq 2}_{pred}|}
\sum_{k \in \mathcal{C}^{\geq 2}_{pred}}
\frac{2}{M_k(M_k-1)}
\sum_{1 \leq m < n \leq M_k}
\left[
\cos(\mathbf{z}_{k,m}, \mathbf{z}_{k,n})-\delta
\right]_+ ,
\end{equation}
where $[x]_+ = \max(0, x)$ and $\delta$ is a similarity margin. Classes with $M_k<2$ are excluded; we set $R_{div}=0$ when $\mathcal{C}^{\geq 2}_{pred}=\emptyset$. Each frozen evaluator computes the reward using its own feature map and predicted mask, and the resulting scalar rewards are combined across evaluators. The penalty fires only when intra-class similarity exceeds $\delta$, pushing the generator toward semantically consistent yet texturally diverse features.

\subsection{Set-Level Reward}

While sample-level rewards improve individual images, they do not account for the semantic modes covered by other generated samples. To address this, we reformulate the set-level diversity optimization as a contextual multi-armed bandit (MAB) problem \cite{mab}, where the diffusion policy acts as the agent, the prompt $c$ serves as the context, and the arms are defined as the discrete collection of semantic prototypes specifically constructed for each category. Our objective is to encourage balanced arm visitation within each class, guiding the agent to cover diverse semantic modes rather than repeatedly exploiting frequent prototypes. To operationalize this discrete bandit framework within a continuous generative space, we introduce three key components: \textbf{Semantic Arm Construction} to discretize the feature space into class-specific prototypes; \textbf{Robust State-Action Mapping} to identify the pulled arms from generated images; and \textbf{Entropy-Maximizing Reward Mechanism} to incentivize the exploration of scarce prototypes.

\Needspace{34\baselineskip}
\afterpage{\LockedAdeTable\LockedTsneFigure}
\begin{wrapfigure}[27]{r}{0.54\textwidth}
\centering
\vspace{-10pt}
\begin{minipage}{\linewidth}
\captionsetup{type=algorithm,width=\linewidth,margin=0pt,singlelinecheck=false,justification=raggedright,skip=1pt}
\noindent\rule{\linewidth}{0.6pt}
\caption{Set-Level Reward}
\label{alg:sem}
\vspace{-5pt}
\noindent\rule{\linewidth}{0.4pt}
\small
\begin{algorithmic}[1]
\Statep{\textbf{Require:} Batch of images and prompt $\{(\hat{x}_i, c_i)\}_{i=1}^{B}$.}
\Statep{\textbf{Require:} Semantic arm set $\mathcal{P}$, global memory $N_{\mathrm{hist}}$.}
\Statep{\textbf{Ensure:} Reward batch $\mathbf{R}_{set} = \{r_i\}_{i=1}^{B}$.}
\Statep{\textbf{Initialize:} Batch accumulator $\Delta N_{\mathrm{batch}} \leftarrow 0$.}
\Statep{Detect classes and extract features for all $\hat{x}_i$.}
\Statex \textcolor{gray}{// Phase 1: Batch-Aware Online Calibration}
\For{each image $i = 1$ to $B$}
    \For{each detected target class $k$}
        \Statep{Match $\mathbf{z}_{i,k}$ to $\mathbf{p}^*_{i,k}$ via Eq.~\ref{eq:matching}.}
        \Statep{$s_{i,k} \leftarrow \max(0,\cos(\mathbf{z}_{i,k}, \mathbf{p}^*_{i,k}))$.}
        \State $\Delta N_{\mathrm{batch}}(\mathbf{p}^*_{i,k}) \leftarrow \Delta N_{\mathrm{batch}}(\mathbf{p}^*_{i,k}) + s_{i,k}$
    \EndFor
\EndFor
\Statex \textcolor{gray}{// Phase 2: Reward Evaluation}
\For{each image $i = 1$ to $B$}
    \State Initialize image accumulation score $S_i \leftarrow 0$
    \Statep{Get target object count $K_{obj} \leftarrow \operatorname{count}(c_i)$.}
    \For{each detected target class $k$}
        \Statep{$\hat{N}(\mathbf{p}) \leftarrow N_{\mathrm{hist}}(\mathbf{p}) + \Delta N_{\mathrm{batch}}(\mathbf{p})$.}
        \Statep{Evaluate $r(\mathbf{p}^*_{i,k})$ via Eq.~\ref{eq:sem}.}
        \State $S_i \leftarrow S_i + r(\mathbf{p}^*_{i,k})$
    \EndFor
    \State $r_i \leftarrow S_i / \max(1, K_{obj})$
\EndFor
\State \textbf{Global Update:} $N_{\mathrm{hist}} \leftarrow N_{\mathrm{hist}} + \Delta N_{\mathrm{batch}}$
\State \Return $\mathbf{R}_{set}$
\end{algorithmic}
\vspace{-5pt}
\noindent\rule{\linewidth}{0.6pt}
\end{minipage}
\vspace{-10pt}
\end{wrapfigure}

\textbf{Semantic Arm Construction.} To define a finite set of semantic arms, we perform an offline discretization of the real data manifold. We employ DINOv3~\cite{dinov3} to pool normalized features within real class regions using ground-truth training masks. For each category $k$, K-means with $K=50$ centers partitions the region features, and each center is mapped to its nearest observed same-class region feature. These representatives form the discrete arm set $\mathcal{P}_k$, serving as the reference for set-level coverage. If fewer than $K$ regions are available, all are retained. Appendix~\ref{app:prototype_construction} compares this construction with random same-class representatives.

\textbf{Robust State-Action Mapping.} When the agent generates an image $\hat{x}$, we must identify which specific arm has been pulled. This requires a robust mapping from pixel space to the prototype space. We use majority-vote masks from the frozen evaluator ensemble. We then perform spatially-masked pooling on the DINOv3 feature map of $\hat{x}_i$ and normalize the pooled feature to obtain a query vector $\mathbf{z}_{i,k}$ for category $k$. The pulled arm is identified as the nearest prototype in that class's bank:
\begin{equation}
\label{eq:matching}
\mathbf{p}^{*}_{i,k}=\arg\max_{\mathbf{p}\in\mathcal{P}_k}
\cos(\mathbf{z}_{i,k},\mathbf{p}),
\qquad
s_{i,k}=\max\!\left(0,\cos(\mathbf{z}_{i,k},\mathbf{p}^{*}_{i,k})\right).
\end{equation}

\textbf{Entropy-Maximizing Reward Mechanism.} To guide the agent towards underrepresented semantic modes, we design a dynamic reward based on soft visitation counts. These counts accumulate matching similarities for each prototype. For the current batch, we first compute $\Delta N_{\mathrm{batch}}$ and define the batch-aware state $\hat{N}=N_{\mathrm{hist}}+\Delta N_{\mathrm{batch}}$. Every image in the batch is evaluated against this shared state. For each requested class $k$ detected in $\hat{x}_i$, we compute the utility of its matched prototype $\mathbf{p}^{*}_{i,k}$:
\begin{equation}
\label{eq:sem}
r(\mathbf{p}^{*}_{i,k})
=
\underbrace{
s_{i,k}
\left(
\frac{\bar{N}_{k}+1}{\hat{N}(\mathbf{p}^{*}_{i,k})+1}
\right)^{\alpha}
}_{\text{Quality-Weighted Scarcity}}
+
\beta
\underbrace{
\sqrt{
\frac{
\ln\!\left(\max(N_{k},1)\right)
}{
\hat{N}(\mathbf{p}^{*}_{i,k})+1
}
}
}_{\text{Exploration Bonus}} ,
\end{equation}
where $s_{i,k}$ is the matching similarity, $\hat{N}(\mathbf{p}^{*}_{i,k})$ is the batch-aware soft count of the matched prototype, $N_k=\sum_{\mathbf{p}\in\mathcal{P}_k}\hat{N}(\mathbf{p})$ is the total batch-aware soft count of class $k$, and $\bar{N}_{k}=N_k/|\mathcal{P}_k|$ is the average soft count over prototypes of class $k$. The hyperparameters $\alpha$ and $\beta$ control scarcity reweighting and exploration strength, respectively. The image reward $R_{set}$ averages these utilities over all requested classes, with zero contribution from an absent requested class. Historical counts accumulate across policy-training batches and are updated after each full batch is scored, as outlined in Algorithm~\ref{alg:sem}.

\subsection{Total Objective and Theoretical Analysis}

To guide the policy update, we integrate the proposed hierarchical rewards into a unified feedback signal. The total reward $R_{total}$ for a generated image $\hat{x}$ conditioned on prompt $c$ is computed as a weighted summation:
\begin{equation}
R_{total}(\hat{x},c)
=
\gamma_{a} R_{align}(\hat{x},c)
+
\gamma_{d} R_{div}(\hat{x})
+
\gamma_{s} R_{set}(\hat{x},c),
\label{eq:total}
\end{equation}
where $\gamma_{a}, \gamma_{d}, \gamma_{s}$ are hyperparameters balancing semantic alignment, within-class diversity, and prototype coverage. We optimize the diffusion policy $\pi_\theta$ using Eq.~\ref{eq:grpo}.

\textbf{Optimization Behavior.} The combined reward objective biases the policy toward prompt-consistent, feature-diverse, and semantically well-covered samples, discouraging degenerate solutions by jointly leveraging sample-level quality rewards and a set-level coverage reward. We refer readers to Proposition~\ref{tm:all} in Appendix~\ref{app:theory} for a detailed interpretation.

\section{Experiments}

\subsection{Experimental Settings}

\textbf{Datasets and Metrics.} We evaluate our framework on two challenging benchmarks: ADE20K \cite{ade20k} and COCO \cite{coco}. For COCO, we utilize the panoptic annotations converted into semantic segmentation masks to ensure high-quality pixel-wise supervision. As one of the first systematic studies of dataset distillation specifically tailored to semantic segmentation, we establish a standardized evaluation protocol adapted from the image classification task's images per class (IPC) metric; specifically, considering the multi-class nature of segmentation images, we determine the data budget based on global compression ratios rather than fixed class counts. Accordingly, we set the ratios to 0.25\%, 0.5\%, 1.0\% for ADE20K and 0.625\textperthousand, 1.25\textperthousand, 2.5\textperthousand\ for the larger COCO dataset to align with standard distillation scales (e.g., 10/20/50 IPC). We report mean intersection over union (mIoU) and mean pixel accuracy (mAcc) as the primary metrics. For the main comparisons, we construct 5 dataset groups and train 5 student seeds per group, reporting mean and standard deviation over the 25 downstream runs.

\textbf{Baselines.} We compare S$^2$-RL with two categories of baselines: (1) coreset selection methods: We include Random selection, Uniform sampling (class-balanced), K-Center \cite{k-center1,k-center2}, and Herding \cite{herding1,herding2}. We employ a pretrained ResNet-50 \cite{resnet} to extract deep features for distance computation in K-Center and Herding. (2) generative baselines: Stable Diffusion 3.5 Medium (SD3.5)~\cite{sd35} serves as our vanilla generative baseline, and UniDD$^\ast$~\cite{ff2} denotes our reproduced UniDD-inspired proxy-guided segmentation baseline. Implementation details of UniDD$^\ast$ are provided in Appendix~\ref{app:unidd_baseline}.

\textbf{Implementation Details.} (1) RL and Set Construction: We employ Stable Diffusion 3.5 Medium \cite{sd35} as the policy backbone, fine-tuned via LoRA \cite{lora} with a rank of 32. To construct the text conditioning, we utilize the image subset selected by the Uniform baseline. Specifically, we extract all category names present in each selected image and concatenate them into a comma-separated string (e.g., ``person, car, tree'') to serve as the multi-category prompt. The policy is optimized using AdamW \cite{adamw} at learning rate 1e-4. After policy adaptation, we generate candidate images and screen them using the same sample-level and set-level reward functions used in training. A pretrained teacher model then supplies pixel-wise pseudo-labels for the retained images. (2) Evaluation Stage: We independently train a student on each synthesized dataset using MMSegmentation \cite{mmseg}. Our default student is SegFormer-B5 \cite{segformer}, whose MiT-B5 backbone is initialized from ImageNet-1K-pretrained weights for each run. The models are trained for 200 epochs using the AdamW optimizer with a crop size of $512 \times 512$. Additional architectural variations are detailed in the ablation studies.

\subsection{Main Results}

\LockedCocoTable
\LockedArchTable

\textbf{Quantitative Analysis.} Tables~\ref{tab:ade20k} and~\ref{tab:coco} present the results on ADE20K and COCO, where S$^2$-RL consistently achieves the best performance among the compared methods across all compression ratios. Compared with coreset selection strategies, the advantage is particularly evident at low data ratios, where limited image budgets make redundancy more costly. While selection draws from a fixed pool of real images, S$^2$-RL shapes the semantic content and coverage of new training examples through reward-guided synthesis. The DINOv3-based coreset comparison in Appendix~\ref{app:dinov3_coreset} provides a complementary evaluation with stronger selection features. S$^2$-RL also demonstrates a clear advantage over the generative baseline. This highlights the role of RL-driven alignment: while SD3.5 possesses strong general visual priors, S$^2$-RL steers generation toward informative supervision for downstream semantic segmentation. Additional controls appear in Appendix~\ref{app:matched_controls}.

\Needspace{16\baselineskip}
\begin{wraptable}[11]{r}{0.56\textwidth}
\captionsetup{font=small}
\vspace{-8pt}
\centering
\small
\setlength{\tabcolsep}{0pt}
\caption{Semantic/layout diagnostics on ADE20K 1.0\%. C/P: coverage/precision; R/P: recall/precision. Values are means over five stored sets; metric definitions and SDs are in Appendix~\ref{app:semantic_audit}.}
\label{tab:semantic_coverage}
\begin{tabular}{@{}p{0.18\linewidth}>{\centering\arraybackslash}p{0.30\linewidth}>{\centering\arraybackslash}p{0.23\linewidth}>{\centering\arraybackslash}p{0.29\linewidth}@{}}
\toprule
\textbf{Method} & \textbf{Occupancy C/P} & \textbf{Pair R/P} & \textbf{Geometry C/P} \\
\midrule
Uniform & .506/.929 & .592/.932 & .544/.916 \\
SD3.5 & .434/.888 & .539/.941 & .549/.880 \\
UniDD$^\ast$ & .427/.901 & .500/.955 & .464/.862 \\
\rowcolor{LightCyan} S$^2$-RL & .538/.929 & .665/.898 & .693/.939 \\
\bottomrule
\end{tabular}
\end{wraptable}

\textbf{Qualitative Analysis.} We visualize the latent feature distributions in Figure~\ref{fig:tsne}. A pretrained ResNet-50 extracts class-wise embeddings for \textit{Real Data} (\protect\graydot), the \textit{Uniform Selection Baseline} (\protect\bluetriangle), and \textit{S$^2$-RL} (\protect\redstar). In these projections, Uniform samples concentrate around densely populated regions, whereas S$^2$-RL samples occupy broader regions of the class-wise feature space. This pattern is consistent with the set-level reward, which favors underrepresented semantic prototypes alongside sample-level quality. For a compact segmentation dataset, covering a wider range of class-specific appearances can provide more varied supervision within the same image budget.

\textbf{Semantic and Layout Coverage.} To complement the feature-space visualization, Table~\ref{tab:semantic_coverage} evaluates spatial occupancy, category co-occurrence, and relative geometry using a frozen OneFormer~\cite{oneformer} outside the reward pipeline. Occupancy describes where each class appears, category pairs capture which classes occur together, and relative geometry describes their spatial relationships. The same annotator is applied to generated images and held-out real references. S$^2$-RL broadens occupancy and relative-geometry coverage while increasing category-pair recall. Images containing the same categories can still differ in object placement and relative scale. Covering these configurations exposes the student to varied spatial contexts within a fixed image budget. The category-pair results also reveal a coverage--precision trade-off as the range of generated combinations expands. These complementary measurements connect the distributional view to semantic content and scene organization relevant to dense prediction. Definitions and complete results appear in Appendix~\ref{app:semantic_audit}.

\Needspace{12\baselineskip}
\subsection{Ablation Study}

\begin{wraptable}[9]{r}{0.55\linewidth}
\renewcommand{\arraystretch}{0.95}
\captionsetup{font=small}
\setlength{\tabcolsep}{2.5pt}
\vspace{-10pt}
\caption{\textbf{Component effectiveness.} ADE20K performance across distillation ratios.}
\label{tab:ablation-component}
\centering
\small
\begin{tabular}{@{}lccc@{}}
\toprule
\textbf{Method Configuration} & \textbf{0.25\%} & \textbf{0.5\%} & \textbf{1.0\%} \\
\midrule
SD3.5 Baseline & \res{13.54}{0.56} & \res{18.09}{0.90} & \res{21.66}{0.40} \\
+ Alignment ($R_{align}$) & \res{14.82}{0.43} & \res{19.15}{0.57} & \res{22.91}{0.23} \\
+ Diversity ($R_{div}$)   & \res{15.17}{0.61} & \res{20.83}{0.42} & \res{24.73}{0.28} \\
\rowcolor{LightCyan} + Set ($R_{set}$) & \textbf{\res{15.60}{0.59}} & \textbf{\res{21.52}{0.56}} & \textbf{\res{25.47}{0.24}} \\
\bottomrule
\end{tabular}
\end{wraptable}

\textbf{Component Effectiveness Analysis.} To evaluate the contribution of each design element, we incrementally add: 1) Alignment ($R_{align}$), 2) Diversity ($R_{div}$), and 3) Set ($R_{set}$). Table~\ref{tab:ablation-component} shows consistent gains from combining these components across the evaluated data ratios. While $R_{align}$ establishes the semantic foundation for prompt adherence, $R_{div}$ encourages feature variation within class regions, reducing local redundancy. The set reward extends this optimization across images by favoring prototypes underrepresented in the generation history. The results highlight the complementary roles of local sample quality and cross-sample coverage. In a highly compressed dataset, this complementarity helps each image provide useful local supervision while contributing to semantic modes that other images cover less well.

\begin{figure}[t]
\captionsetup{font=small}
\centering
\includegraphics[width=0.9\linewidth]{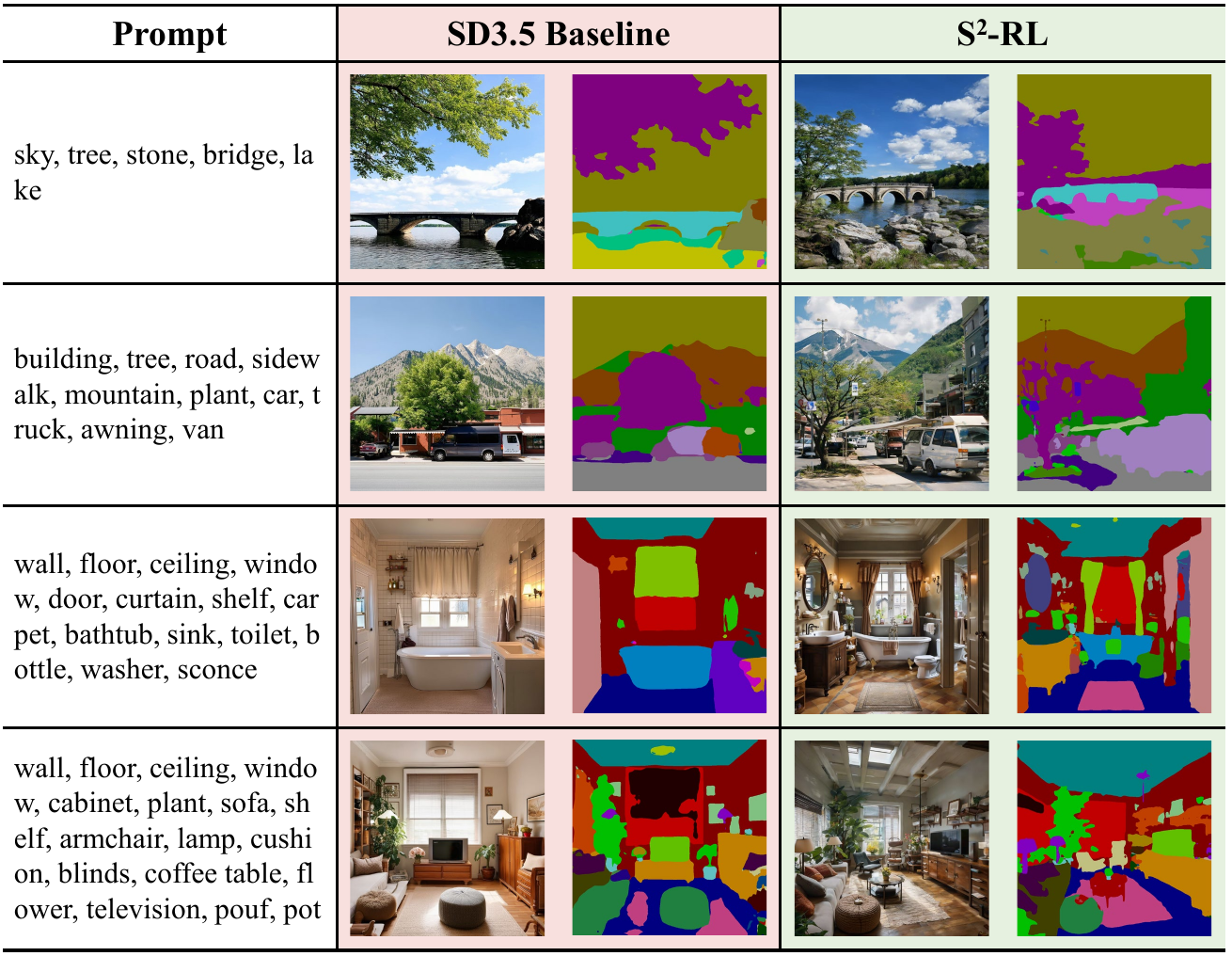}
\caption{Qualitative comparison of complex scene generation. We visualize images and pseudo-labels generated from multi-category prompts. In these examples, the SD3.5 baseline produces simplified scenes that omit some requested object categories, whereas S$^2$-RL synthesizes semantically richer images with more complex scene structure.}
\label{fig:ablation}
\end{figure}

\textbf{Robustness and Generalization.} Table~\ref{tab:ablation-arch} demonstrates the versatility of S$^2$-RL across three dimensions. First, regarding model capacity (Group a), gains persist from ResNet-18 to ResNet-101, showing that the compact dataset supports learning across model scales. Second, regarding backbone architectures (Group b), the benefits extend to both ConvNeXt~\cite{convnext} and Swin Transformer~\cite{swin}. Third, regarding segmentation mechanisms (Group c), improvements also appear with DeepLabV3~\cite{deeplabv3} and SegFormer~\cite{segformer}, indicating compatibility with different decoding designs. Together, these results show that the same distilled image--mask pairs provide useful supervision across the tested student configurations, allowing a compact dataset to be reused for different model requirements. Additional analyses of independent policy-training variability and prototype construction are provided in Appendices~\ref{app:policy_stability} and~\ref{app:prototype_construction}, respectively.

\textbf{Semantic Richness.} We assess data quality with the complex-scene examples in Figure~\ref{fig:ablation}. While the SD3.5 baseline generates texturally plausible images, it often omits objects requested in dense prompts, such as specific furniture in Row 4 or distinct vehicles in Row 2. In contrast, S$^2$-RL synthesizes scenes with higher semantic density, retaining richer structure and detail, from the foliage in Row 1 to the cluttered interior decor in Row 4. Such multi-category scenes provide supervision for several semantic regions and their shared context within each training image. This is valuable for dense prediction, where the student learns object appearance together with its surrounding scene structure. Preserving requested objects also retains boundaries between semantic regions, providing supervision for class transitions alongside region appearance. The qualitative examples complement the category co-occurrence and relative-geometry measurements in Table~\ref{tab:semantic_coverage}.

\section{Conclusion and Future Work}

We propose S$^2$-RL, a generative framework for semantic segmentation dataset distillation that fine-tunes Stable Diffusion via GRPO with a sample-set dual reward, integrating sample-level fidelity and set-level coverage to bypass the computational bottlenecks of bi-level optimization and the static-prior limitation of proxy-based methods. Experiments on ADE20K and COCO show consistent state-of-the-art gains, and our RL-driven paradigm offers a scalable path toward distillation for broader dense-prediction tasks. The framework incurs an upfront policy-adaptation cost and depends on pretrained generative and segmentation priors. Its rewards approximate downstream utility, and pseudo-labels may inherit teacher errors on rare or ambiguous categories. Future work could reduce adaptation and sampling costs, improve pseudo-label reliability through uncertainty-aware annotation, and incorporate lightweight student feedback to better align reward optimization with segmentation performance without full gradient unrolling.

\clearpage
\renewcommand{\acksection}{\section*{Acknowledgments}}
\begin{ack}
This work is supported in part by the National Natural Science Foundation of China under Grant 62372379, Grant 62472359, and Grant 62472350; in part by the Xi'an's Key Industrial Chain Core Technology Breakthrough Project: AI Core Technology Breakthrough under Grant 23ZDCYJSGG0003-2023; in part by Innovation Foundation for Doctor Dissertation of Northwestern Polytechnical University under Grant CX2025092.
\end{ack}

{\footnotesize
\bibliographystyle{plainnat}
\bibliography{references}

@article{dd,
  author       = {Tongzhou Wang and
                  Jun{-}Yan Zhu and
                  Antonio Torralba and
                  Alexei A. Efros},
  title        = {Dataset Distillation},
  journal      = {CoRR},
  volume       = {abs/1811.10959},
  year         = {2018},
}

@inproceedings{GM1,
  author       = {Bo Zhao and
                  Konda Reddy Mopuri and
                  Hakan Bilen},
  title        = {Dataset Condensation with Gradient Matching},
  booktitle    = {9th International Conference on Learning Representations, {ICLR} 2021,
                  Virtual Event, Austria, May 3-7, 2021},
  year         = {2021},
}

@inproceedings{GM2,
  author       = {George Cazenavette and
                  Tongzhou Wang and
                  Antonio Torralba and
                  Alexei A. Efros and
                  Jun{-}Yan Zhu},
  title        = {Dataset Distillation by Matching Training Trajectories},
  booktitle    = {{IEEE/CVF} Conference on Computer Vision and Pattern Recognition,
                  {CVPR} 2022, New Orleans, LA, USA, June 18-24, 2022},
  pages        = {10708--10717},
  publisher    = {{IEEE}},
  year         = {2022},
}

@inproceedings{GM3,
  author       = {Wenliang Zhong and
                  Haoyu Tang and
                  Qinghai Zheng and
                  Mingzhu Xu and
                  Yupeng Hu and
                  Weili Guan},
  title        = {Towards Stable and Storage-efficient Dataset Distillation: Matching
                  Convexified Trajectory},
  booktitle    = {{IEEE/CVF} Conference on Computer Vision and Pattern Recognition,
                  {CVPR} 2025, Nashville, TN, USA, June 11-15, 2025},
  pages        = {25581--25589},
  publisher    = {Computer Vision Foundation / {IEEE}},
  year         = {2025},
}

@inproceedings{DM1,
  author       = {Kai Wang and
                  Bo Zhao and
                  Xiangyu Peng and
                  Zheng Zhu and
                  Shuo Yang and
                  Shuo Wang and
                  Guan Huang and
                  Hakan Bilen and
                  Xinchao Wang and
                  Yang You},
  title        = {{CAFE}: Learning to Condense Dataset by Aligning Features},
  booktitle    = {{IEEE/CVF} Conference on Computer Vision and Pattern Recognition,
                  {CVPR} 2022, New Orleans, LA, USA, June 18-24, 2022},
  pages        = {12186--12195},
  publisher    = {{IEEE}},
  year         = {2022},
}

@inproceedings{DM2,
  author       = {Ganlong Zhao and
                  Guanbin Li and
                  Yipeng Qin and
                  Yizhou Yu},
  title        = {Improved Distribution Matching for Dataset Condensation},
  booktitle    = {{IEEE/CVF} Conference on Computer Vision and Pattern Recognition,
                  {CVPR} 2023, Vancouver, BC, Canada, June 17-24, 2023},
  pages        = {7856--7865},
  publisher    = {{IEEE}},
  year         = {2023},
}

@inproceedings{DM3,
  author       = {Fengli Ran and
                  Xiao Pu and
                  Bo Liu and
                  Xiuli Bi and
                  Bin Xiao},
  title        = {{TGDD}: Trajectory Guided Dataset Distillation with Balanced                              Distribution},
  booktitle    = {Fortieth {AAAI} Conference on Artificial Intelligence, {AAAI}
                  2026},
  publisher    = {{AAAI} Press},
  year         = {2026},
}

@inproceedings{sre2l,
  author       = {Zeyuan Yin and
                  Eric P. Xing and
                  Zhiqiang Shen},
  editor       = {Alice Oh and
                  Tristan Naumann and
                  Amir Globerson and
                  Kate Saenko and
                  Moritz Hardt and
                  Sergey Levine},
  title        = {Squeeze, Recover and Relabel: Dataset Condensation at ImageNet Scale
                  from a New Perspective},
  booktitle    = {Advances in Neural Information Processing Systems 36: Annual Conference
                  on Neural Information Processing Systems 2023, NeurIPS 2023, New Orleans,
                  LA, USA, December 10 - 16, 2023},
  year         = {2023},
}

@inproceedings{d4m,
  author       = {Duo Su and
                  Junjie Hou and
                  Weizhi Gao and
                  Yingjie Tian and
                  Bowen Tang},
  title        = {D\({}^{\mbox{4}}\)M: Dataset Distillation via Disentangled Diffusion
                  Model},
  booktitle    = {{IEEE/CVF} Conference on Computer Vision and Pattern Recognition,
                  {CVPR} 2024, Seattle, WA, USA, June 16-22, 2024},
  pages        = {5809--5818},
  publisher    = {{IEEE}},
  year         = {2024},
}

@inproceedings{ff,
  author       = {Ding Qi and
                  Jian Li and
                  Jinlong Peng and
                  Bo Zhao and
                  Shuguang Dou and
                  Jialin Li and
                  Jiangning Zhang and
                  Yabiao Wang and
                  Chengjie Wang and
                  Cairong Zhao},
  editor       = {Amir Globerson and
                  Lester Mackey and
                  Danielle Belgrave and
                  Angela Fan and
                  Ulrich Paquet and
                  Jakub M. Tomczak and
                  Cheng Zhang},
  title        = {Fetch and Forge: Efficient Dataset Condensation for Object Detection},
  booktitle    = {Advances in Neural Information Processing Systems 38: Annual Conference
                  on Neural Information Processing Systems 2024, NeurIPS 2024, Vancouver,
                  BC, Canada, December 10 - 15, 2024},
  year         = {2024},
}

@inproceedings{ff2,
  author       = {Ding Qi and
                  Jian Li and
                  Junyao Gao and
                  Shuguang Dou and
                  Ying Tai and
                  Jianlong Hu and
                  Bo Zhao and
                  Yabiao Wang and
                  Chengjie Wang and
                  Cairong Zhao},
  title        = {Towards Universal Dataset Distillation via Task-Driven Diffusion},
  booktitle    = {{IEEE/CVF} Conference on Computer Vision and Pattern Recognition,
                  {CVPR} 2025, Nashville, TN, USA, June 11-15, 2025},
  pages        = {10557--10566},
  publisher    = {Computer Vision Foundation / {IEEE}},
  year         = {2025},
}

@inproceedings{ddpo,
  author       = {Kevin Black and
                  Michael Janner and
                  Yilun Du and
                  Ilya Kostrikov and
                  Sergey Levine},
  title        = {Training Diffusion Models with Reinforcement Learning},
  booktitle    = {The Twelfth International Conference on Learning Representations,
                  {ICLR} 2024, Vienna, Austria, May 7-11, 2024},
  publisher    = {OpenReview.net},
  year         = {2024},
}

@article{flow-grpo,
  author       = {Jie Liu and
                  Gongye Liu and
                  Jiajun Liang and
                  Yangguang Li and
                  Jiaheng Liu and
                  Xintao Wang and
                  Pengfei Wan and
                  Di Zhang and
                  Wanli Ouyang},
  title        = {{Flow-GRPO}: Training Flow Matching Models via Online {RL}},
  journal      = {CoRR},
  volume       = {abs/2505.05470},
  year         = {2025},
}

@article{grpo-guard,
  author       = {Jing Wang and
                  Jiajun Liang and
                  Jie Liu and
                  Henglin Liu and
                  Gongye Liu and
                  Jun Zheng and
                  Wanyuan Pang and
                  Ao Ma and
                  Zhenyu Xie and
                  Xintao Wang and
                  Meng Wang and
                  Pengfei Wan and
                  Xiaodan Liang},
  title        = {{GRPO-Guard}: Mitigating Implicit Over-Optimization in Flow Matching
                  via Regulated Clipping},
  journal      = {CoRR},
  volume       = {abs/2510.22319},
  year         = {2025},
}

@article{mab,
  author       = {Aleksandrs Slivkins},
  title        = {Introduction to Multi-Armed Bandits},
  journal      = {Found. Trends Mach. Learn.},
  volume       = {12},
  number       = {1-2},
  pages        = {1--286},
  year         = {2019},
}

@inproceedings{ade20k,
  author       = {Bolei Zhou and
                  Hang Zhao and
                  Xavier Puig and
                  Sanja Fidler and
                  Adela Barriuso and
                  Antonio Torralba},
  title        = {Scene Parsing through {ADE20K} Dataset},
  booktitle    = {2017 {IEEE} Conference on Computer Vision and Pattern Recognition,
                  {CVPR} 2017, Honolulu, HI, USA, July 21-26, 2017},
  pages        = {5122--5130},
  publisher    = {{IEEE} Computer Society},
  year         = {2017},
}

@inproceedings{coco,
  author       = {Tsung{-}Yi Lin and
                  Michael Maire and
                  Serge J. Belongie and
                  James Hays and
                  Pietro Perona and
                  Deva Ramanan and
                  Piotr Doll{\'{a}}r and
                  C. Lawrence Zitnick},
  editor       = {David J. Fleet and
                  Tom{\'{a}}s Pajdla and
                  Bernt Schiele and
                  Tinne Tuytelaars},
  title        = {Microsoft {COCO:} Common Objects in Context},
  booktitle    = {Computer Vision - {ECCV} 2014 - 13th European Conference, Zurich,
                  Switzerland, September 6-12, 2014, Proceedings, Part {V}},
  series       = {Lecture Notes in Computer Science},
  volume       = {8693},
  pages        = {740--755},
  publisher    = {Springer},
  year         = {2014},
}

@book{k-center1,
  title={Facility location: concepts, models, algorithms and case studies},
  author={Farahani, Reza Zanjirani and Hekmatfar, Masoud},
  year={2009},
  publisher={Springer Science \& Business Media}
}

@inproceedings{k-center2,
  author       = {Ozan Sener and
                  Silvio Savarese},
  title        = {Active Learning for Convolutional Neural Networks: {A} Core-Set Approach},
  booktitle    = {6th International Conference on Learning Representations, {ICLR} 2018,
                  Vancouver, BC, Canada, April 30 - May 3, 2018, Conference Track Proceedings},
  year         = {2018},
}

@inproceedings{herding1,
  author       = {Max Welling},
  editor       = {Andrea Pohoreckyj Danyluk and
                  L{\'{e}}on Bottou and
                  Michael L. Littman},
  title        = {Herding dynamical weights to learn},
  booktitle    = {Proceedings of the 26th Annual International Conference on Machine
                  Learning, {ICML} 2009, Montreal, Quebec, Canada, June 14-18, 2009},
  series       = {{ACM} International Conference Proceeding Series},
  volume       = {382},
  pages        = {1121--1128},
  publisher    = {{ACM}},
  year         = {2009},
}

@inproceedings{herding2,
  author       = {Yutian Chen and
                  Max Welling and
                  Alexander J. Smola},
  editor       = {Peter Gr{\"{u}}nwald and
                  Peter Spirtes},
  title        = {Super-Samples from Kernel Herding},
  booktitle    = {{UAI} 2010, Proceedings of the Twenty-Sixth Conference on Uncertainty
                  in Artificial Intelligence, Catalina Island, CA, USA, July 8-11, 2010},
  pages        = {109--116},
  publisher    = {{AUAI} Press},
  year         = {2010},
}

@inproceedings{MI2,
  author       = {Jiawei Du and
                  Xin Zhang and
                  Juncheng Hu and
                  Wenxin Huang and
                  Joey Tianyi Zhou},
  editor       = {Amir Globerson and
                  Lester Mackey and
                  Danielle Belgrave and
                  Angela Fan and
                  Ulrich Paquet and
                  Jakub M. Tomczak and
                  Cheng Zhang},
  title        = {Diversity-Driven Synthesis: Enhancing Dataset Distillation through
                  Directed Weight Adjustment},
  booktitle    = {Advances in Neural Information Processing Systems 38: Annual Conference
                  on Neural Information Processing Systems 2024, NeurIPS 2024, Vancouver,
                  BC, Canada, December 10 - 15, 2024},
  year         = {2024},
}

@inproceedings{MI3,
  author       = {Zhiqiang Shen and
                  Ammar Sherif and
                  Zeyuan Yin and
                  Shitong Shao},
  title        = {{DELT}: {A} Simple Diversity-driven {Early-Late} Training for Dataset
                  Distillation},
  booktitle    = {{IEEE/CVF} Conference on Computer Vision and Pattern Recognition,
                  {CVPR} 2025, Nashville, TN, USA, June 11-15, 2025},
  pages        = {4797--4806},
  publisher    = {Computer Vision Foundation / {IEEE}},
  year         = {2025},
}

@inproceedings{GDM2,
  author       = {Jeffrey A. Chan{-}Santiago and
                  Praveen Tirupattur and
                  Gaurav Kumar Nayak and
                  Gaowen Liu and
                  Mubarak Shah},
  title        = {{MGD3}: Mode-Guided Dataset Distillation using Diffusion Models},
  booktitle    = {Forty-second International Conference on Machine Learning, {ICML}
                  2025, Vancouver, BC, Canada, July 13-19, 2025},
  year         = {2025},
}

@article{GDM3,
  author       = {Yawen Zou and
                  Guang Li and
                  Duo Su and
                  Zi Wang and
                  Jun Yu and
                  Chao Zhang},
  title        = {Dataset Distillation via Vision-Language Category Prototype},
  journal      = {CoRR},
  volume       = {abs/2506.23580},
  year         = {2025},
}

@inproceedings{dpo,
  author       = {Bram Wallace and
                  Meihua Dang and
                  Rafael Rafailov and
                  Linqi Zhou and
                  Aaron Lou and
                  Senthil Purushwalkam and
                  Stefano Ermon and
                  Caiming Xiong and
                  Shafiq Joty and
                  Nikhil Naik},
  title        = {Diffusion Model Alignment Using Direct Preference Optimization},
  booktitle    = {{IEEE/CVF} Conference on Computer Vision and Pattern Recognition,
                  {CVPR} 2024, Seattle, WA, USA, June 16-22, 2024},
  pages        = {8228--8238},
  publisher    = {{IEEE}},
  year         = {2024},
}

@inproceedings{resnet,
  author       = {Kaiming He and
                  Xiangyu Zhang and
                  Shaoqing Ren and
                  Jian Sun},
  title        = {Deep Residual Learning for Image Recognition},
  booktitle    = {2016 {IEEE} Conference on Computer Vision and Pattern Recognition,
                  {CVPR} 2016, Las Vegas, NV, USA, June 27-30, 2016},
  pages        = {770--778},
  publisher    = {{IEEE} Computer Society},
  year         = {2016},
}

@inproceedings{sd35,
  author       = {Patrick Esser and
                  Sumith Kulal and
                  Andreas Blattmann and
                  Rahim Entezari and
                  Jonas M{\"{u}}ller and
                  Harry Saini and
                  Yam Levi and
                  Dominik Lorenz and
                  Axel Sauer and
                  Frederic Boesel and
                  Dustin Podell and
                  Tim Dockhorn and
                  Zion English and
                  Robin Rombach},
  title        = {Scaling Rectified Flow Transformers for High-Resolution Image Synthesis},
  booktitle    = {Forty-first International Conference on Machine Learning, {ICML} 2024,
                  Vienna, Austria, July 21-27, 2024},
  year         = {2024},
}

@inproceedings{lora,
  author       = {Edward J. Hu and
                  Yelong Shen and
                  Phillip Wallis and
                  Zeyuan Allen{-}Zhu and
                  Yuanzhi Li and
                  Shean Wang and
                  Lu Wang and
                  Weizhu Chen},
  title        = {LoRA: Low-Rank Adaptation of Large Language Models},
  booktitle    = {The Tenth International Conference on Learning Representations, {ICLR}
                  2022, Virtual Event, April 25-29, 2022},
  year         = {2022},
}

@inproceedings{adamw,
  author       = {Ilya Loshchilov and
                  Frank Hutter},
  title        = {Decoupled Weight Decay Regularization},
  booktitle    = {7th International Conference on Learning Representations, {ICLR} 2019,
                  New Orleans, LA, USA, May 6-9, 2019},
  year         = {2019},
}

@misc{mmseg,
  title={{MMSegmentation}: OpenMMLab Semantic Segmentation Toolbox and Benchmark},
  author={MMSegmentation Contributors},
  howpublished = {\url{https://github.com/open-mmlab/mmsegmentation}},
  year={2020}
}

@inproceedings{upernet,
  author       = {Tete Xiao and
                  Yingcheng Liu and
                  Bolei Zhou and
                  Yuning Jiang and
                  Jian Sun},
  editor       = {Vittorio Ferrari and
                  Martial Hebert and
                  Cristian Sminchisescu and
                  Yair Weiss},
  title        = {Unified Perceptual Parsing for Scene Understanding},
  booktitle    = {Computer Vision - {ECCV} 2018 - 15th European Conference, Munich,
                  Germany, September 8-14, 2018, Proceedings, Part {V}},
  series       = {Lecture Notes in Computer Science},
  volume       = {11209},
  pages        = {432--448},
  publisher    = {Springer},
  year         = {2018},
}

@inproceedings{segformer,
  author       = {Enze Xie and
                  Wenhai Wang and
                  Zhiding Yu and
                  Anima Anandkumar and
                  Jos{\'{e}} M. {\'{A}}lvarez and
                  Ping Luo},
  editor       = {Marc'Aurelio Ranzato and
                  Alina Beygelzimer and
                  Yann N. Dauphin and
                  Percy Liang and
                  Jennifer Wortman Vaughan},
  title        = {{SegFormer}: Simple and Efficient Design for Semantic Segmentation with
                  Transformers},
  booktitle    = {Advances in Neural Information Processing Systems 34: Annual Conference
                  on Neural Information Processing Systems 2021, NeurIPS 2021, December
                  6-14, 2021, virtual},
  pages        = {12077--12090},
  year         = {2021},
}

@article{deeplabv3,
  author       = {Liang{-}Chieh Chen and
                  George Papandreou and
                  Florian Schroff and
                  Hartwig Adam},
  title        = {Rethinking Atrous Convolution for Semantic Image Segmentation},
  journal      = {CoRR},
  volume       = {abs/1706.05587},
  year         = {2017},
}

@article{dinov3,
  author       = {Oriane Sim{\'{e}}oni and
                  Huy V. Vo and
                  Maximilian Seitzer and
                  Federico Baldassarre and
                  Maxime Oquab and
                  Cijo Jose and
                  Vasil Khalidov and
                  Marc Szafraniec and
                  Seung Eun Yi and
                  Micha{\"{e}}l Ramamonjisoa and
                  Francisco Massa and
                  Daniel Haziza and
                  Luca Wehrstedt and
                  Jianyuan Wang and
                  Timoth{\'{e}}e Darcet and
                  Th{\'{e}}o Moutakanni and
                  Leonel Sentana and
                  Claire Roberts and
                  Andrea Vedaldi and
                  Jamie Tolan and
                  John Brandt and
                  Camille Couprie and
                  Julien Mairal and
                  Herv{\'{e}} J{\'{e}}gou and
                  Patrick Labatut and
                  Piotr Bojanowski},
  title        = {{DINOv3}},
  journal      = {CoRR},
  volume       = {abs/2508.10104},
  year         = {2025},
}

@inproceedings{convnext,
  author       = {Zhuang Liu and
                  Hanzi Mao and
                  Chao{-}Yuan Wu and
                  Christoph Feichtenhofer and
                  Trevor Darrell and
                  Saining Xie},
  title        = {A ConvNet for the 2020s},
  booktitle    = {{IEEE/CVF} Conference on Computer Vision and Pattern Recognition,
                  {CVPR} 2022, New Orleans, LA, USA, June 18-24, 2022},
  pages        = {11966--11976},
  publisher    = {{IEEE}},
  year         = {2022},
}

@inproceedings{swin,
  author       = {Ze Liu and
                  Yutong Lin and
                  Yue Cao and
                  Han Hu and
                  Yixuan Wei and
                  Zheng Zhang and
                  Stephen Lin and
                  Baining Guo},
  title        = {Swin Transformer: Hierarchical Vision Transformer using Shifted Windows},
  booktitle    = {2021 {IEEE/CVF} International Conference on Computer Vision, {ICCV}
                  2021, Montreal, QC, Canada, October 10-17, 2021},
  pages        = {9992--10002},
  publisher    = {{IEEE}},
  year         = {2021},
}

@misc{diffusers,
  author = {Patrick von Platen and Suraj Patil and Anton Lozhkov and Pedro Cuenca and Nathan Lambert and Kashif Rasul and Mishig Davaadorj and Dhruv Nair and Sayak Paul and William Berman and Yiyi Xu and Steven Liu and Thomas Wolf},
  title = {Diffusers: State-of-the-art diffusion models},
  year = {2022},
  publisher = {GitHub},
  journal = {GitHub repository},
  howpublished = {\url{https://github.com/huggingface/diffusers}}
}

@inproceedings{dpt,
  author       = {Ren{\'{e}} Ranftl and
                  Alexey Bochkovskiy and
                  Vladlen Koltun},
  title        = {Vision Transformers for Dense Prediction},
  booktitle    = {2021 {IEEE/CVF} International Conference on Computer Vision, {ICCV}
                  2021, Montreal, QC, Canada, October 10-17, 2021},
  pages        = {12159--12168},
  publisher    = {{IEEE}},
  year         = {2021},
}

@inproceedings{mask2former,
  author       = {Bowen Cheng and
                  Ishan Misra and
                  Alexander G. Schwing and
                  Alexander Kirillov and
                  Rohit Girdhar},
  title        = {Masked-attention Mask Transformer for Universal Image Segmentation},
  booktitle    = {{IEEE/CVF} Conference on Computer Vision and Pattern Recognition,
                  {CVPR} 2022, New Orleans, LA, USA, June 18-24, 2022},
  pages        = {1280--1289},
  publisher    = {{IEEE}},
  year         = {2022},
}

@inproceedings{eomt,
  author       = {Tommie Kerssies and
                  Niccol{\`{o}} Cavagnero and
                  Alexander Hermans and
                  Narges Norouzi and
                  Giuseppe Averta and
                  Bastian Leibe and
                  Gijs Dubbelman and
                  Daan de Geus},
  title        = {Your ViT is Secretly an Image Segmentation Model},
  booktitle    = {{IEEE/CVF} Conference on Computer Vision and Pattern Recognition,
                  {CVPR} 2025, Nashville, TN, USA, June 11-15, 2025},
  pages        = {25303--25313},
  publisher    = {Computer Vision Foundation / {IEEE}},
  year         = {2025},
}

@inproceedings{freestylenet,
  author       = {Han Xue and
                  Zhiwu Huang and
                  Qianru Sun and
                  Li Song and
                  Wenjun Zhang},
  title        = {Freestyle Layout-to-Image Synthesis},
  booktitle    = {{IEEE/CVF} Conference on Computer Vision and Pattern Recognition,
                  {CVPR} 2023, Vancouver, BC, Canada, June 17-24, 2023},
  pages        = {14256--14266},
  publisher    = {{IEEE}},
  year         = {2023},
}

@article{deepseekmath,
  author = {Zhihong Shao and Peiyi Wang and Qihao Zhu and Runxin Xu and Junxiao Song and Xiao Bi and Haowei Zhang and Mingchuan Zhang and Y. K. Li and Y. Wu and Daya Guo},
  title = {{DeepSeekMath}: Pushing the Limits of Mathematical Reasoning in Open Language Models},
  journal = {arXiv preprint arXiv:2402.03300},
  year = {2024},
  url = {https://arxiv.org/abs/2402.03300},
}

@inproceedings{dsa,
  author = {Bo Zhao and Hakan Bilen},
  title = {Dataset Condensation with Differentiable Siamese Augmentation},
  booktitle = {Proceedings of the 38th International Conference on Machine Learning},
  volume = {139},
  pages = {12674--12685},
  publisher = {PMLR},
  year = {2021},
  url = {https://proceedings.mlr.press/v139/zhao21a.html},
}

@inproceedings{frepo,
  author = {Yongchao Zhou and Ehsan Nezhadarya and Jimmy Ba},
  title = {Dataset Distillation using Neural Feature Regression},
  booktitle = {Advances in Neural Information Processing Systems},
  volume = {35},
  year = {2022},
  url = {https://proceedings.neurips.cc/paper_files/paper/2022/hash/3fe2a777282299ecb4f9e7ebb531f0ab-Abstract-Conference.html},
}

@inproceedings{datm,
  author = {Ziyao Guo and Kai Wang and George Cazenavette and Hui Li and Kaipeng Zhang and Yang You},
  title = {Towards Lossless Dataset Distillation via Difficulty-Aligned Trajectory Matching},
  booktitle = {International Conference on Learning Representations},
  year = {2024},
  url = {https://proceedings.iclr.cc/paper_files/paper/2024/hash/fc740286fb4840cb56f99400817967da-Abstract-Conference.html},
}

@inproceedings{minimax,
  author = {Jianyang Gu and Saeed Vahidian and Vyacheslav Kungurtsev and Haonan Wang and Wei Jiang and Yang You and Yiran Chen},
  title = {Efficient Dataset Distillation via Minimax Diffusion},
  booktitle = {IEEE/CVF Conference on Computer Vision and Pattern Recognition},
  pages = {15793--15803},
  year = {2024},
  url = {https://openaccess.thecvf.com/content/CVPR2024/html/Gu_Efficient_Dataset_Distillation_via_Minimax_Diffusion_CVPR_2024_paper.html},
}

@inproceedings{oneformer,
  author = {Jitesh Jain and Jiachen Li and MangTik Chiu and Ali Hassani and Nikita Orlov and Humphrey Shi},
  title = {{OneFormer}: One Transformer to Rule Universal Image Segmentation},
  booktitle = {IEEE/CVF Conference on Computer Vision and Pattern Recognition},
  year = {2023},
  url = {https://arxiv.org/abs/2211.06220},
}

@inproceedings{fid,
  author = {Martin Heusel and Hubert Ramsauer and Thomas Unterthiner and Bernhard Nessler and Sepp Hochreiter},
  title = {{GANs} Trained by a Two Time-Scale Update Rule Converge to a Local Nash Equilibrium},
  booktitle = {Advances in Neural Information Processing Systems},
  volume = {30},
  year = {2017},
  url = {https://papers.nips.cc/paper/2017/hash/8a1d694707eb0fefe65871369074926d-Abstract.html},
}

@inproceedings{lpips,
  author = {Richard Zhang and Phillip Isola and Alexei A. Efros and Eli Shechtman and Oliver Wang},
  title = {The Unreasonable Effectiveness of Deep Features as a Perceptual Metric},
  booktitle = {IEEE/CVF Conference on Computer Vision and Pattern Recognition},
  year = {2018},
  url = {https://arxiv.org/abs/1801.03924},
}

@article{vendi,
  author = {Dan Friedman and Adji Bousso Dieng},
  title = {The {Vendi} Score: A Diversity Evaluation Metric for Machine Learning},
  journal = {Transactions on Machine Learning Research},
  year = {2023},
  url = {https://openreview.net/forum?id=g97OHbQyk1},
}

@inproceedings{kid,
  author = {Miko{\l}aj Bi{\'n}kowski and Danica J. Sutherland and Michael Arbel and Arthur Gretton},
  title = {Demystifying {MMD GANs}},
  booktitle = {International Conference on Learning Representations},
  year = {2018},
  url = {https://arxiv.org/abs/1801.01401},
}

@inproceedings{prdc,
  author = {Muhammad Ferjad Naeem and Seong Joon Oh and Youngjung Uh and Yunjey Choi and Jaejun Yoo},
  title = {Reliable Fidelity and Diversity Metrics for Generative Models},
  booktitle = {Proceedings of the 37th International Conference on Machine Learning},
  volume = {119},
  pages = {7176--7185},
  publisher = {PMLR},
  year = {2020},
  url = {https://proceedings.mlr.press/v119/naeem20a.html},
}
}

\newpage
\appendix
\onecolumn
\section{Theoretical Elaboration}
\label{app:theory}

In this section, we provide additional derivations and interpretations supporting our proposed framework, including the gradient estimation of GRPO and an optimization interpretation of the sample-set dual reward.

\subsection{Gradient Estimation of GRPO}
\label{app:grpo_estimator}

A generated image $\hat{x}=x_0$ is the endpoint of a denoising trajectory $\tau=(x_T,\ldots,x_0)$. Its trajectory log-probability decomposes into transition log-probabilities, up to the initial noise distribution, which is independent of $\theta$:
\begin{equation}
\nabla_\theta\log\pi_\theta(\tau\mid c)
=\sum_{t=1}^{T}\nabla_\theta\log\pi_\theta(x_{t-1}\mid x_t,c).
\end{equation}
This decomposition motivates assigning a terminal reward to the stored denoising transitions without evaluating the intractable marginal probability of the final image. Our implemented update is a clipped transition-wise GRPO surrogate, rather than an exact trajectory-level importance-sampling estimator.

For a prompt $c$, the rollout policy $\pi_{\theta_{old}}$ generates $G$ trajectories. We evaluate the total terminal rewards $R_i$ against the shared batch count state described in Algorithm~\ref{alg:sem}, then form
\begin{equation}
\mathbb{A}_i=\operatorname{clip}\!\left(
\frac{R_i-\mu_c}{\sigma_c+\varepsilon},-5,5\right),
\qquad
\mu_c=\frac{1}{G}\sum_{i=1}^{G}R_i,
\end{equation}
where $\sigma_c$ is the reward standard deviation within this same-prompt group. The reward, count state, and advantage are held fixed during the policy update. The raw importance ratio for a stored transition is
\begin{equation}
\rho_{i,t}(\theta)=\exp\!\left[
\log\pi_\theta(x_{i,t-1}\mid x_{i,t},c)
-\log\pi_{\theta_{old}}(x_{i,t-1}\mid x_{i,t},c)
\right].
\end{equation}
Following GRPO-Guard~\cite{grpo-guard}, RatioNorm regulates the transition log-ratios before exponentiation; we denote the resulting ratio by $\widetilde{\rho}_{i,t}$. Inverse scheduler-step weighting supplies $w_t$ to balance contributions across trained denoising timesteps. The resulting objective is Eq.~\ref{eq:grpo}: each transition's ratio is clipped separately and its contribution is averaged across the group and trained timesteps. In particular, the implementation does not multiply transition ratios into one trajectory ratio and then clip that product.

The clipping width is $\epsilon=2\times10^{-6}$. We use one inner optimization epoch per rollout, refresh the rollout policy, update only LoRA parameters, clip gradient norms to 1.0, and maintain an exponential moving average (EMA). The reference-policy KL coefficient is zero. Thus, zero reference KL does not remove the transition clipping or these other update constraints. Independent-policy variability is assessed separately in Appendix~\ref{app:policy_stability}.

\subsection{Optimization Interpretation of Sample-Set Dual Reward}

In this section, we analyze the optimization behavior encouraged by the sample-set dual reward. The policy parameters $\theta$ are updated via GRPO to maximize the total reward objective
$\mathcal{J}_{total} = \mathbb{E}_{\pi_\theta}[R_{total}]$.
Since the total reward is a linear combination of three components (Eq.~\ref{eq:total}), the policy-gradient update is driven by the standardized advantage of $R_{total}$, $\mathbb{A}_{total}=(R_{total}-\mu_{R_{total}})/(\sigma_{R_{total}}+\varepsilon)$:
\begin{equation}
    \nabla_\theta \mathcal{J} \approx \mathbb{E}_{\hat{x}, c} \left[
    \mathbb{A}_{total}\,
    \nabla_\theta \log \pi_\theta(\hat{x}|c)
    \right].
\end{equation}
Because standardization is non-linear, $\mathbb{A}_{total}$ does not exactly factorize into a weighted sum of per-reward standardized advantages; nonetheless, $R_{total}$ is linear in the three reward components, so an increase (decrease) in any single component, holding the others fixed, monotonically increases (decreases) $\mathbb{A}_{total}$ within the same group. We use this monotonicity to interpret the role of each component below.

\begin{proposition}[Alignment-induced Suppression of Invalid Samples]
\label{tm:align}
For samples whose semantic content is inconsistent with the prompt, the alignment reward assigns a lower relative advantage. Under a GRPO update, this decreases their relative likelihood compared with prompt-consistent samples from the same group.
\end{proposition}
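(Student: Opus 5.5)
We make the statement precise within a single prompt group and then track one GRPO step. Fix a prompt $c$ and its $G$ rollouts $\{\hat{x}_i\}_{i=1}^{G}$ under $\pi_{\theta_{old}}$. Call a sample invalid if its predicted classes miss part of $\mathcal{C}_{gt}$, or its target-class area is smaller, than a prompt-consistent sample $\hat{x}_j$ in the same group. Concretely, we assume $R_{align}(\hat{x}_i,c)<R_{align}(\hat{x}_j,c)$, and we hold the other reward components fixed (ceteris paribus), as in the preceding discussion.

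\textbf{Step 1 (ordering of rewards).} From Eq.~\ref{eq:align}, both the concept-recall term and the area-occupancy term are nondecreasing in the fraction of recovered target classes and target pixels. So an inconsistent sample has strictly smaller $R_{align}$. Since $\gamma_a>0$ and Eq.~\ref{eq:total} is linear, this gives $R_{total,i}<R_{total,j}$.

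\textbf{Step 2 (ordering of advantages).} Within the group, the map $R\mapsto (R-\mu_c)/(\sigma_c+\varepsilon)$ is strictly increasing, because $\mu_c$ and $\sigma_c$ are shared by all members. Clipping to $[-5,5]$ is nondecreasing. Hence $\mathbb{A}_i\le\mathbb{A}_j$, with strict inequality unless both values are clipped. This is exactly the monotonicity remark stated before the proposition.

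\textbf{Step 3 (effect of the update).} We work with the unclipped surrogate, which is valid at the first inner step where $\widetilde{\rho}_{i,t}=1$ and the clip is inactive. The gradient of Eq.~\ref{eq:grpo} is then
\[
\frac{1}{G|\mathcal{T}|}\sum_i \mathbb{A}_i\sum_{t}w_t\nabla_\theta\log\pi_\theta(x_{i,t-1}\mid x_{i,t},c).
\]
Take a small step $\eta$ and Taylor expand $\log\pi_\theta(\tau_i)-\log\pi_\theta(\tau_j)$ to first order, using the trajectory decomposition of Appendix~\ref{app:grpo_estimator}. The change equals $\eta\,g^\top(\nabla\log\pi(\tau_i)-\nabla\log\pi(\tau_j))$, where $g$ is the gradient above.

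\textbf{Main obstacle: the sign of this term.} It is not determined by the advantages alone, because of cross-terms between different trajectories. We would handle this with an interpretive assumption. Either the per-trajectory score vectors are approximately orthogonal, or we adopt the standard tabular/softmax idealization in which each trajectory has its own logit. Under that idealization the logit update is proportional to $\mathbb{A}_i$, and the log-ratio changes by a multiple of $(\mathbb{A}_i-\mathbb{A}_j)<0$. Since the group advantages sum to zero, a negative $\mathbb{A}_i$ directly lowers the absolute likelihood of $\hat{x}_i$. In all cases the relative likelihood $\pi(\hat{x}_i)/\pi(\hat{x}_j)$ decreases.

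We would state plainly that the conclusion is first-order and holds only under this idealization. The clipping with $\epsilon$ only limits the step size and does not change the sign, so the argument carries through.
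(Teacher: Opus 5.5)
Your argument follows the paper's proof: with $R_{div}$ and $R_{set}$ held fixed, a lower $R_{align}$ gives a lower $R_{total}$ and hence a lower group-standardized advantage, and the GRPO step along $\mathbb{A}_i\nabla_\theta\log\pi_\theta$ then pushes the invalid sample's likelihood down relative to the prompt-consistent one. The paper only reads off the sign of the sample's own gradient term $\mathbb{A}_{total}(\hat{x})\nabla_\theta\log\pi_\theta(\hat{x}\mid c)$ and never addresses cross-trajectory terms, so your tabular idealization is a more explicit version of the same step; rely on that branch, since the orthogonal-score alternative gives the relative decrease only when $\mathbb{A}_i<0<\mathbb{A}_j$ or the two score vectors have equal norm.
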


\begin{proof}
Consider a generated sample $\hat{x}$ that fails to match the prompt $c$, such as a sample missing the target objects. In this case, the alignment reward $R_{align}(\hat{x})$ is lower than that of semantically valid alternatives generated for the same prompt; holding $R_{div}$ and $R_{set}$ fixed within the group, the total reward $R_{total}(\hat{x})$ is correspondingly smaller, and after group-wise standardization the sample receives a smaller (typically negative) advantage $\mathbb{A}_{total}(\hat{x})$.

The corresponding policy-gradient contribution is
\begin{equation}
    \Delta \theta \propto
    \mathbb{A}_{total}(\hat{x}) \nabla_\theta \log \pi_\theta(\hat{x}|c).
\end{equation}
When $\mathbb{A}_{total}(\hat{x}) < 0$, this update decreases the relative log-probability of producing $\hat{x}$ under prompt $c$. Therefore, the alignment reward biases the policy away from semantically invalid generations and toward prompt-consistent samples. This effect should be interpreted as a local optimization pressure induced by the reward, rather than as a guarantee of eliminating all invalid samples.
\end{proof}

\begin{proposition}[Diversity Reward Penalizes Feature Collapse]
\label{tm:div}
The diversity reward penalizes highly similar pixel-level features within the same semantic category, thereby discouraging repetitive or collapsed intra-class structures.
\end{proposition}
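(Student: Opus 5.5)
We would mirror the argument used for Proposition~\ref{tm:align}, replacing the alignment reward by $R_{div}$ as defined in Eq.~\ref{eq:div}. The argument has two parts. First, we show that $R_{div}$ is monotonically non-increasing in every intra-class pairwise cosine similarity, and strictly decreasing once a similarity exceeds the margin $\delta$. Second, we transfer this ordering to the GRPO advantage, using the monotonicity of $\mathbb{A}_{total}$ in each reward component noted before Proposition~\ref{tm:align}.

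For the first step, fix an image $\hat{x}$ and a class $k\in\mathcal{C}^{\geq 2}_{pred}$ with sampled features $\mathcal{F}_k$. The per-class term
\[
D_k=\frac{2}{M_k(M_k-1)}\sum_{m<n}\left[\cos(\mathbf{z}_{k,m},\mathbf{z}_{k,n})-\delta\right]_+
\]
is an average of hinge functions. Each hinge is non-decreasing in its pair similarity, is zero below $\delta$, and has slope one above $\delta$. Hence $R_{div}=-\frac{1}{|\mathcal{C}^{\geq 2}_{pred}|}\sum_k D_k$ attains its maximum value $0$ exactly when all intra-class similarities are at most $\delta$. In the extreme collapsed case, where all $\mathbf{z}_{k,m}$ coincide, we get $D_k=1-\delta$, so $R_{div}$ reaches its minimum $-(1-\delta)$ for that class. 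Any change that pushes some pair above $\delta$, or increases a similarity already above $\delta$, strictly lowers $R_{div}$. This formalizes ``penalizes highly similar features.''

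For the second step, we compare two samples from the same prompt group. Suppose $\hat{x}_a$ exhibits more intra-class collapse than $\hat{x}_b$, so that $R_{div}(\hat{x}_a)<R_{div}(\hat{x}_b)$. Holding $R_{align}$ and $R_{set}$ fixed and using $\gamma_d>0$, we obtain $R_{total}(\hat{x}_a)<R_{total}(\hat{x}_b)$. Group standardization is an increasing affine map within a group, so $\mathbb{A}_{total}(\hat{x}_a)<\mathbb{A}_{total}(\hat{x}_b)$; advantage clipping is monotone and therefore preserves this weakly. The policy-gradient contribution $\mathbb{A}_{total}\nabla_\theta\log\pi_\theta(\hat{x}|c)$ then lowers the relative log-likelihood of the collapsed sample compared with the diverse one. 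As in Proposition~\ref{tm:align}, we would state this as a local optimization pressure, not a convergence guarantee.

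The main obstacle is the ceteris paribus assumption. Changing intra-class features also changes the evaluator's predicted masks, and with them the set $\mathcal{C}^{\geq 2}_{pred}$, the normalization $|\mathcal{C}^{\geq 2}_{pred}|$, and possibly $R_{align}$. The random subsampling of $M_k$ vectors adds further noise. We would handle this by stating the result conditionally on fixed predicted masks, with $R_{div}$ taken in expectation over the uniform subsampling. In that setting, monotonicity holds pairwise by linearity of expectation. We would then remark that the per-evaluator aggregation preserves monotonicity, since it combines non-negative weighted scalars.
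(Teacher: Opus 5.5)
Your proposal is correct, but its second half takes a different route from the paper's proof.

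\textbf{Shared first step.} The paper's proof matches your first step in spirit. It rewrites $R_{div}$ as the negative average of the per-class penalties $\mathcal{L}_{div,k}$, and notes that collapse pushes many pairs above $\delta$ and so lowers the reward.

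\textbf{Where the routes diverge.} For the ``discouraging'' clause, the paper computes the feature-space gradient $\nabla_{\mathbf{z}_{k,m}}\mathcal{L}_{div,k}$. This is a sum, over above-margin neighbors, of the component of $\mathbf{z}_{k,n}$ orthogonal to $\mathbf{z}_{k,m}$. The paper then argues that descending along it reduces excessive intra-class similarity. You instead make the monotonicity explicit: hinge slopes, and the extremal values $0$ and $-(1-\delta)$. You then transfer it to the GRPO advantage as in Proposition~\ref{tm:align}, using that two samples from the same realized group share $\mu_c$ and $\sigma_c$.

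\textbf{What each route buys.} The paper's gradient gives a geometric picture of which feature perturbations the penalty favors, and it shows the margin's role, since only active pairs contribute. But GRPO consumes $R_{div}$ only as a scalar, and no gradient flows from the evaluator's features back to $\theta$. So that computation does not, by itself, show the policy is pushed anywhere. Your argument matches the actual update mechanism. It is also explicit about the hidden ceteris paribus conditions:
\begin{itemize}
\item fixed predicted masks, which pin down $\mathcal{C}^{\geq 2}_{pred}$ and $M_k$;
\item expectation over the random subsampling, which stays monotone by linearity;
\item summation across evaluators.
\end{itemize}
The cost is that you say nothing about the geometry of the improving direction in feature space.

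\textbf{Minor imprecision.} Your monotonicity is coordinatewise in the vector of pairwise similarities. Moving a single feature $\mathbf{z}_{k,m}$ changes $M_k-1$ similarities at once. So ``any change that pushes some pair above $\delta$ strictly lowers $R_{div}$'' should be read with the other pairs held fixed.

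Both proofs share the same heuristic final step: a lower advantage lowers the sample's relative likelihood. This is a first-order local statement, not a guarantee.
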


\begin{proof}
Let $\mathcal{F}_k=\{\mathbf{z}_{k,m}\}_{m=1}^{M_k}$ be the set of pixel-level feature vectors belonging to class $k$ in the generated image $\hat{x}$. For classes with at least two feature vectors, the diversity reward in Eq.~\ref{eq:div} can equivalently be written as $R_{div}(\hat{x}) = -\frac{1}{|\mathcal{C}^{\geq 2}_{pred}|}\sum_{k\in\mathcal{C}^{\geq 2}_{pred}} \mathcal{L}_{div,k}$, where the per-class penalty
\begin{equation}
    \mathcal{L}_{div,k} = \frac{2}{M_k(M_k-1)}\sum_{1\leq m<n\leq M_k}
    \max\!\left(0,\cos(\mathbf{z}_{k,m},\mathbf{z}_{k,n})-\delta\right)
\end{equation}
measures the average unordered pairwise cosine similarity exceeding the margin $\delta$. Classes with $M_k<2$ are excluded from the average, and $R_{div}$ is set to zero if no class remains. Maximizing $R_{div}$ therefore corresponds to minimizing each valid $\mathcal{L}_{div,k}$.

When the generated region for class $k$ collapses into repetitive textures or nearly uniform structures, the corresponding feature vectors become highly similar, yielding $\cos(\mathbf{z}_{k,m},\mathbf{z}_{k,n}) > \delta$ for many pairs. This increases $\mathcal{L}_{div,k}$ and lowers $R_{div}$.

The gradient of the per-class penalty with respect to a feature vector has the form
\begin{equation}
    \nabla_{\mathbf{z}_{k,m}} \mathcal{L}_{div,k} \propto
    \sum_{\substack{n: \cos(\mathbf{z}_{k,m},\mathbf{z}_{k,n}) > \delta}}
    \frac{\mathbf{z}_{k,n} - (\mathbf{z}_{k,m} \cdot \mathbf{z}_{k,n})\mathbf{z}_{k,m}}{\|\mathbf{z}_{k,m}\| \|\mathbf{z}_{k,n}\|},
\end{equation}
where the simplified form above follows the normalized-vector form of cosine similarity. In implementation, $R_{div}$ is computed with the cosine-similarity operator on intermediate feature maps from the frozen segmentation evaluators, which normalizes each feature vector internally. The summation is taken over feature pairs whose similarity exceeds the margin. Moving in the negative-gradient direction (i.e., performing gradient descent on $\mathcal{L}_{div,k}$) reduces excessive pairwise similarity among features in the same class. Consequently, the diversity reward discourages trivial intra-class feature collapse and encourages more discriminative local structures within generated semantic regions.
\end{proof}

\begin{proposition}[Scarcity Reward Induces Negative Feedback]
\label{tm:s}
The scarcity term in the set-level reward induces a negative feedback mechanism over semantic prototypes: under-covered prototypes receive larger rewards, while over-covered prototypes receive smaller rewards.
\end{proposition}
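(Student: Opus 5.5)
The plan is to read Proposition~\ref{tm:s} off the closed form of the scarcity term in Eq.~\ref{eq:sem}. Everything else in that equation and in Algorithm~\ref{alg:sem} is held fixed. Fix a class $k$ and a batch, so that the batch-aware state $\hat{N}=N_{\mathrm{hist}}+\Delta N_{\mathrm{batch}}$ is a fixed vector of nonnegative soft counts. Under this state the class total $N_k$ and the average $\bar{N}_k=N_k/|\mathcal{P}_k|$ are constants shared by every query matched to any prototype of class $k$. The scarcity utility of a matched prototype $\mathbf{p}$ with similarity $s$ is then
\[
g(s,\hat{N}(\mathbf{p}))=s\left(\frac{\bar{N}_k+1}{\hat{N}(\mathbf{p})+1}\right)^{\alpha}.
\]
For $\alpha>0$ and $s>0$, this is strictly decreasing in $\hat{N}(\mathbf{p})$ and equals $s$ exactly when $\hat{N}(\mathbf{p})=\bar{N}_k$. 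So at equal matching quality, a prototype with count below the class average (under-covered) gets an amplification factor greater than $1$, and one above the average (over-covered) gets a factor less than $1$. This is the static half of the claim.

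I would add two remarks. First, the exploration bonus $\beta\sqrt{\ln(\max(N_k,1))/(\hat{N}(\mathbf{p})+1)}$ is also decreasing in $\hat{N}(\mathbf{p})$ at fixed $N_k$, so it reinforces rather than opposes the ordering. Second, the $s$ factor only rescales the ranking. The comparison should therefore be stated at equal similarity, or as the ratio $g(s,n_1)/g(s,n_2)=\big((n_2+1)/(n_1+1)\big)^{\alpha}$.

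The dynamic \emph{feedback} half follows from the update rule. Each time an image pulls $\mathbf{p}$, the count $N_{\mathrm{hist}}(\mathbf{p})$ increases by $s\ge 0$. By monotonicity of $g$ in the count, future rewards for that prototype then decrease. Prototypes that are rarely pulled keep small counts. Their relative factor $(\bar{N}_k+1)/(\hat{N}(\mathbf{p})+1)$ grows as $\bar{N}_k$ rises through other visits. This is a negative feedback loop: over-visitation lowers the reward, and under-visitation raises it relative to the class mean.

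To connect this to the policy, I would invoke the monotonicity remark made before Proposition~\ref{tm:align}. Because $R_{total}$ is linear in $R_{set}$, a higher scarcity utility, with other components fixed, gives a higher group-standardized advantage. The GRPO update then raises the relative likelihood of samples landing on scarce prototypes, exactly as in the proof of Proposition~\ref{tm:align}.

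The main obstacle is that $\bar{N}_k$ is not independent of $\hat{N}(\mathbf{p})$: incrementing one prototype's count also raises $\bar{N}_k$ by $1/|\mathcal{P}_k|$. So I must check that the net derivative of $g$ in $n=\hat{N}(\mathbf{p})$ is still negative when $\bar{N}_k$ is allowed to vary with it. I would differentiate $\log g=\alpha[\log(\bar{N}_k+1)-\log(n+1)]$. With $d\bar{N}_k/dn=1/|\mathcal{P}_k|$, the net derivative is negative iff $(n+1)<|\mathcal{P}_k|(\bar{N}_k+1)$. This holds whenever $|\mathcal{P}_k|\ge 1$, because $n\le N_k=|\mathcal{P}_k|\bar{N}_k$. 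Equality can occur only in the degenerate single-prototype case. The exploration bonus needs the analogous check with $\ln N_k$, which grows only logarithmically, so it remains decreasing.

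Finally, I would state the result as a local, fixed-state monotonicity property rather than a convergence guarantee. This matches the tone of Propositions~\ref{tm:align} and~\ref{tm:div}.
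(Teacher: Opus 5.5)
Your proposal is correct and follows essentially the same route as the paper's proof. You fix the shared batch-aware state and observe that $s\bigl((\bar{N}_k+1)/(\hat{N}(\mathbf{p})+1)\bigr)^{\alpha}$ is above or below its value $s$ at an average-covered prototype according as $\hat{N}(\mathbf{p})<\bar{N}_k$ or $\hat{N}(\mathbf{p})>\bar{N}_k$. You then pass this to the GRPO update through the monotonicity of the group-standardized advantage to close the loop. Your check that the derivative in $n=\hat{N}(\mathbf{p})$ stays negative when $\bar{N}_k$ co-varies is a refinement the paper does not make.

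One side remark is wrong, though it is not needed, since the statement and the paper's proof both set the exploration bonus aside. For $B=\beta\sqrt{\ln(\max(N_k,1))/(n+1)}$ with the other counts fixed (so $dN_k/dn=1$), one gets $\frac{d}{dn}\log B^2=\frac{1}{N_k\ln N_k}-\frac{1}{n+1}$. This is positive for small totals, e.g.\ $n=N_k=2$. Moreover, $B$ jumps up from $0$ as $N_k$ crosses $1$. So the logarithmic growth of $\ln N_k$ does not make the bonus decreasing in that regime.
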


\begin{proof}
Fix a class $k$ and let $\hat{N}(\mathbf{p}, t)$ denote the accumulated soft count of prototype $\mathbf{p} \in \mathcal{P}_k$ at training step $t$. Let $\bar{N}_k(t) = \frac{1}{|\mathcal{P}_k|}\sum_{\mathbf{p} \in \mathcal{P}_k} \hat{N}(\mathbf{p}, t)$ denote the average soft count over the prototypes of class $k$. Ignoring the exploration bonus for the moment, the scarcity term in Eq.~\ref{eq:sem} scales as
\begin{equation}
    r(\mathbf{p}, t) \propto
    \left(
    \frac{\bar{N}_k(t)+1}{\hat{N}(\mathbf{p}, t)+1}
    \right)^\alpha.
\end{equation}
If prototype $\mathbf{p}$ is under-covered, i.e., $\hat{N}(\mathbf{p}, t) < \bar{N}_k(t)$, then $r(\mathbf{p}, t)$ is strictly larger than the reward of an average-covered prototype $r(\bar{N}_k(t), \cdot)$. When samples assigned to this prototype are generated, they receive higher relative advantages, increasing their likelihood under the policy update. This accelerates the accumulation of $\hat{N}(\mathbf{p}, t)$.

Conversely, if prototype $\mathbf{p}$ is over-covered, i.e., $\hat{N}(\mathbf{p}, t) > \bar{N}_k(t)$, then its scarcity reward becomes smaller. Samples assigned to this prototype receive lower relative advantages, reducing their relative preference in subsequent updates. This forms a closed-loop negative feedback mechanism that encourages more balanced prototype coverage over training.
\end{proof}

\begin{proposition}[Exploration Bonus for Under-visited Prototypes]
\label{tm:e}
The exploration bonus increases the reward of persistently under-visited prototypes, providing an additional incentive for semantic exploration.
\end{proposition}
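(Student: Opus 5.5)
The plan is to argue exactly as in Proposition~\ref{tm:s}, but isolating the second term of Eq.~\ref{eq:sem}. Fix a class $k$ and write the bonus as
\begin{equation}
b(\mathbf{p},t)=\beta\sqrt{\frac{\ln\left(\max(N_k(t),1)\right)}{\hat{N}(\mathbf{p},t)+1}},
\end{equation}
where $N_k(t)=\sum_{\mathbf{p}'\in\mathcal{P}_k}\hat{N}(\mathbf{p}',t)$. Soft counts only accumulate, because each increment $s_{i,k}\geq 0$. Hence $N_k(t)$ is nondecreasing in $t$, and so is the numerator $\ln(\max(N_k,1))$.

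\textbf{Step 1: monotonicity in the prototype's own count.} For fixed $N_k$, $b$ is strictly decreasing in $\hat{N}(\mathbf{p},t)$ whenever $N_k>1$ and $\beta>0$. Thus a prototype with a smaller count than its class peers receives a strictly larger bonus. This holds regardless of the scarcity term, so the bonus adds to the negative-feedback effect of Proposition~\ref{tm:s} rather than competing with it.

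\textbf{Step 2: growth for persistently under-visited prototypes.} I make ``persistently under-visited'' precise as follows: $\hat{N}(\mathbf{p},t)$ stays bounded, say by $C$, while $N_k(t)\to\infty$ because other prototypes of the class keep being matched. Then
\begin{equation}
b(\mathbf{p},t)\geq\beta\sqrt{\ln N_k(t)/(C+1)},
\end{equation}
which is nondecreasing and in fact diverges. Frequently visited prototypes instead have $\hat{N}$ growing roughly linearly in $N_k$, so their bonus behaves like $\sqrt{\ln N_k/N_k}\to 0$. The gap between the bonus of a neglected prototype and that of a popular one therefore widens over training. This is the classical UCB mechanism~\cite{mab}.

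\textbf{Step 3: translation to the policy update.} Holding $R_{align}$, $R_{div}$ and the scarcity term fixed, a larger bonus raises $R_{set}$ and hence $R_{total}$ (through the positive weight $\gamma_s$). By the monotonicity remark preceding Proposition~\ref{tm:align}, it therefore raises $\mathbb{A}_{total}$ within the same prompt group. The GRPO contribution $\mathbb{A}_{total}\nabla_\theta\log\pi_\theta$ then increases the relative likelihood of samples matched to that prototype.

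The main obstacle is Step 3, for two reasons. First, the bonus also scales the reward of the whole group through $N_k$. Second, standardization removes any component that is common to the group, so only the differences in the bonus across the $G$ samples matter. I would handle this by comparing samples within one group that match different prototypes of the same class. Because $N_k$ is shared by the batch-aware state, only the denominator differs between such samples, and Step 1 gives the sign. As in the earlier propositions, I would present the conclusion as a local optimization pressure, not a convergence guarantee.
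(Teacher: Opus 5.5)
Your proposal is correct and follows the same route as the paper. The paper's proof consists only of your Step~1 comparison (a small $\hat{N}(\mathbf{p},t)$ in the denominator gives a larger bonus than frequently visited prototypes get), followed by your Step~3 translation into a larger relative GRPO advantage, with the caveat that the bonus acts only once an under-visited prototype is actually reached. Your additions are sound refinements that the paper does not spell out: the explicit condition $N_k>1$ (otherwise the bonus vanishes identically), the UCB-style Step~2 (a bounded count with $N_k(t)\to\infty$ forces the bonus to diverge), and the observation that group standardization leaves only the within-group denominator differences under a shared $N_k$.
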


\begin{proof}
Fix a class $k$ and let $N_k(t) = \sum_{\mathbf{p} \in \mathcal{P}_k} \hat{N}(\mathbf{p}, t)$ denote the total soft count over prototypes of class $k$ at training step $t$. The exploration bonus in Eq.~\ref{eq:sem} takes the form
\begin{equation}
    B(\mathbf{p}, t) \propto
    \sqrt{
    \frac{
    \ln\!\left(\max(N_k(t),1)\right)
    }{
    \hat{N}(\mathbf{p}, t)+1
    }
    } .
\end{equation}
For a prototype $\mathbf{p}$ with a small soft count $\hat{N}(\mathbf{p}, t)$, this term becomes large relative to prototypes that have already been frequently visited. Therefore, when a generated sample is assigned to an under-visited prototype, it receives an additional reward bonus and a larger relative advantage in the GRPO update.

This mechanism does not guarantee that every prototype will be sampled with nonzero probability at all times. Rather, it increases the reward of under-visited prototypes once they are reached, thereby encouraging continued exploration and reducing the tendency of the policy to concentrate only on already frequent semantic modes.
\end{proof}

\begin{proposition}[Combined Effect of Sample-Set Dual Reward]
\label{tm:all}
The combined reward objective biases the policy toward prompt-consistent, feature-diverse, and semantically well-covered samples. It discourages degenerate solutions by combining sample-level quality rewards with a set-level coverage reward.
\end{proposition}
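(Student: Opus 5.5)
The plan is to assemble Proposition~\ref{tm:all} from Propositions~\ref{tm:align}--\ref{tm:e}, using the monotonicity observation stated before Proposition~\ref{tm:align}. $R_{total}$ in Eq.~\ref{eq:total} is a nonnegatively weighted linear combination of $R_{align}$, $R_{div}$ and $R_{set}$, with $\gamma_a,\gamma_d,\gamma_s>0$. Within a prompt group, GRPO standardization $(R_i-\mu_c)/(\sigma_c+\varepsilon)$ is a strictly increasing affine map of $R_i$, since $\mu_c$ and $\sigma_c$ are held fixed during the update. Advantage clipping is non-decreasing. Hence the ordering of advantages within a group is the ordering of $R_{total}$, and any change in one component with the others fixed moves $\mathbb{A}_{total}$ in the same direction. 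I would state this explicitly first, as the lemma that every subsequent claim reduces to.

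\textbf{Step 1: the three properties.} Each property in the statement is obtained from an earlier proposition combined with this monotonicity.
\begin{itemize}[leftmargin=*,itemsep=0mm]
\item \emph{Prompt consistency.} Proposition~\ref{tm:align} gives lower advantage, and hence reduced relative likelihood, for prompt-inconsistent samples.
\item \emph{Feature diversity.} Proposition~\ref{tm:div} shows that collapsed intra-class features increase $\mathcal{L}_{div,k}$, which lowers $R_{div}$ and therefore $\mathbb{A}_{total}$.
\item \emph{Semantic coverage.} Propositions~\ref{tm:s} and~\ref{tm:e} show that samples matched to under-covered prototypes receive larger $R_{set}$ than samples matched to over-covered ones. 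This follows because both the scarcity factor and the exploration bonus are decreasing in $\hat{N}(\mathbf{p}^*_{i,k})$, for fixed $s_{i,k}$ and fixed class totals.
\end{itemize}
In each case, the policy-gradient contribution $\mathbb{A}_{total}\nabla_\theta\log\pi_\theta(\hat{x}|c)$ shifts relative probability mass toward the favoured samples within the group.

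\textbf{Step 2: ruling out the natural degenerate solutions.} I would identify the obvious collapse modes and show that each is penalized by at least one term that the others do not compensate.
\begin{itemize}[leftmargin=*,itemsep=0mm]
\item \emph{Ignoring the prompt.} Empty or off-prompt images get low concept recall and low area occupancy. They also receive zero $R_{set}$ contribution from absent requested classes, so two terms penalize them together.
\item \emph{Repeated textures.} Prompt-correct images built from repeated textures are penalized by $R_{div}$.
\item \emph{Mode collapse across images.} Always producing the same class appearance concentrates the soft count on one prototype. Its scarcity ratio $(\bar N_k+1)/(\hat N+1)$ then falls below $1$ while other prototypes rise above $1$, so the set term penalizes it even when both sample terms are maximal.
\end{itemize}
Together these show that no single-reward optimum is also an optimum of the combined signal among these collapse modes, which is the content of ``discourages degenerate solutions.''

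\textbf{Main obstacle.} The hardest step is the interaction between the terms: they are not independent, and the set reward is non-stationary. First, $s_{i,k}$ multiplies the scarcity factor, so diversity that pushes features away from every prototype could lower $R_{set}$. Second, $\hat N$ changes after each batch. I would not claim a global optimum or convergence. Instead I would frame the result, as the earlier propositions do, as a \emph{local, per-update} statement: within a single group, with the count state frozen, the component rewards are fixed functions and the monotonicity argument applies exactly. The trade-offs between components are then controlled by the weights $\gamma_a,\gamma_d,\gamma_s$ rather than eliminated. I would close by noting explicitly that the result describes optimization pressure, not a guarantee of balanced coverage.
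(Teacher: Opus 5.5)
Your Step~1, together with the per-update framing and caveat in your last paragraph, is essentially the paper's proof. The paper chains Propositions~\ref{tm:align}, \ref{tm:div}, \ref{tm:s} and~\ref{tm:e} through the within-group monotonicity remark stated just before Proposition~\ref{tm:align}, and then disclaims exact uniformity and convergence. Step~2 has no counterpart in the paper, and as written it asserts more than the rewards deliver.

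First, ``penalized by at least one term that the others do not compensate'' fails in general. $R_{div}$ is computed over all predicted classes rather than the requested ones, and lies in $[-(1-\delta),0]$ per evaluator, so being off-prompt costs nothing in $R_{div}$. An off-prompt image with varied textures can therefore beat, in $R_{total}$, a prompt-consistent image with repetitive textures. Per evaluator, the diversity swing $\gamma_d(1-\delta)$ equals $\gamma_a$ under the ADE20K weights and is $2\gamma_a$ under the COCO weights. Meanwhile $\gamma_s R_{set}$ of the prompt-consistent image can be small when its matched prototypes are heavily over-covered. Which sample wins is thus a weight-dependent trade-off, as your own last paragraph concedes, so Step~2 and that paragraph are in tension.

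Second, the mode-collapse bullet argues from the absolute size of the scarcity ratio. But group standardization, the basis of your own lemma, removes any reward component shared by the whole group. Suppose every rollout for a prompt lands on the same over-covered prototype of class $k$. Then $\hat N$, $\bar N_k$ and $N_k$ are common to all of them, and the deficit cancels in $R_i-\mu_c$. The only remaining within-group variation of that class's utility enters through $s_{i,k}$, which favours samples \emph{closer} to that very prototype. The set term pushes away from a mode only through rollouts that happen to reach other prototypes; this is exactly the caveat recorded in Proposition~\ref{tm:e}.

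Either drop Step~2 or recast it as a within-group comparison against a same-prompt sample that is weakly better in every component. The statement is a claim about optimization bias, so it needs nothing beyond your Step~1.
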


\begin{proof}
Let $\mathcal{S}$ be the generation space and let $\mathcal{S}_{valid} \subset \mathcal{S}$ denote the subset of samples that are semantically consistent with the prompt. The total reward introduces three complementary optimization effects:

\begin{enumerate}[leftmargin=*,itemsep=0mm]
    \item \textbf{Semantic alignment.} By Proposition~\ref{tm:align}, prompt-inconsistent samples receive lower relative advantages, which reduces their relative likelihood compared with semantically valid alternatives generated for the same prompt.

    \item \textbf{Feature diversity.} By Proposition~\ref{tm:div}, repetitive intra-class feature structures receive lower diversity rewards, which discourages low-information textures and encourages more discriminative local structures.

    \item \textbf{Set-level coverage.} By Propositions~\ref{tm:s} and~\ref{tm:e}, under-covered semantic prototypes receive larger set-level rewards, creating a negative feedback mechanism that promotes broader prototype coverage over the distilled set.
\end{enumerate}

Therefore, the sample-level rewards improve the local quality of individual generated images, while the set-level reward encourages global semantic coverage across the generated set. The combined objective does not impose exact uniformity or guarantee perfect convergence, but it provides a principled optimization bias toward informative and diverse dense-prediction training samples.
\end{proof}

\section{Detailed Implementation Settings}

In this section, we provide the comprehensive hyperparameter settings and architectural details required to reproduce our experiments. Our framework is implemented using PyTorch and the Diffusers library \cite{diffusers} based on GRPO Guard \cite{grpo-guard}.

\subsection{Architecture Details}

\textbf{Policy Model.} We utilize Stable Diffusion 3.5 Medium ~\cite{sd35} as our backbone policy $\pi_\theta$. To ensure training efficiency and prevent catastrophic forgetting of the pretrained generative priors, we adopt low-rank adaptation (LoRA) ~\cite{lora} for fine-tuning. 
\begin{itemize}[leftmargin=*,itemsep=0mm]
    \item \textbf{Base Model:} Stable Diffusion 3.5 Medium (2.5B parameters).
    \item \textbf{LoRA Configuration:} We inject LoRA layers into the attention mechanisms of the MMDiT. The LoRA rank is set to 32, with an alpha value of 64.
    \item \textbf{Trainable Parameters:} Only the LoRA weights ($\sim$ 70 MB) are updated during the RL phase.
\end{itemize}

\textbf{Reward Models.} Our dual-reward mechanism relies on three types of frozen pretrained models, each playing a distinct role:
\begin{itemize}[leftmargin=*,itemsep=0mm]
    \item \textbf{Semantic Evaluator Ensemble.} For ADE20K we use UPerNet-ConvNeXt-Large~\cite{upernet,convnext}, SegFormer-B5~\cite{segformer}, and DPT-Large~\cite{dpt}; for COCO we use Mask2Former-Swin-Large~\cite{mask2former}, EOMT-Large~\cite{eomt}, and EOMT-Base. Each frozen evaluator computes alignment from its own mask and diversity from its own intermediate feature grid, using at most 100 locations per predicted class. We sum the resulting scalar sample rewards. Raw feature vectors from different architectures are never compared. Per-pixel majority voting combines the masks only for DINOv3 region pooling in the set-level reward.
    \item \textbf{Pseudo-Label Teacher.} The downstream student is trained on synthesized images paired with pseudo-labels produced by a single, stronger frozen segmentor: UPerNet-ConvNeXt-XLarge for ADE20K and Mask2Former-Swin-Large for COCO. The same teacher is used to label images from \emph{all} synthetic baselines (SD3.5, UniDD$^\ast$, and S$^2$-RL), so the annotation procedure and downstream training protocol are shared across methods.
    \item \textbf{Feature Extractor.} The set-level reward (Eq.~\ref{eq:sem}) is computed on dense patch features from a frozen DINOv3 ViT-L model~\cite{dinov3}. The semantic mask from the segmentor ensemble is downsampled by nearest-neighbor interpolation to match the DINOv3 patch grid, and DINOv3 features are L$_2$-normalized along the channel dimension before any cosine-similarity computation. DINOv3 is used only for semantic-arm construction and matching in $R_{set}$.
\end{itemize}

\subsection{UniDD-inspired Proxy-guided Baseline}
\label{app:unidd_baseline}

We implement a UniDD-inspired proxy-guided baseline as an additional generative comparison. UniDD~\cite{ff2} uses task-specific proxy models to extract task-relevant supervision and guide diffusion-based dataset distillation. Since the official UniDD implementation is not publicly available, we implement a segmentation-oriented proxy-guided baseline using a released pretrained FreestyleNet~\cite{freestylenet} checkpoint as the layout-to-image generator, while following the proxy-guided sampling and refinement principle of UniDD. This implementation corresponds to the UniDD$^\ast$ entries in Table~\ref{tab:ade20k} and Table~\ref{tab:coco}.

The proxy-guided baseline initializes generation from the masks selected by the Uniform baseline. This keeps the layout source and compression budget aligned with the real-data subset baselines, while replacing the corresponding real images with generated images. FreestyleNet takes dense semantic layouts as conditions and synthesizes images aligned with the input masks, making it suitable for adapting proxy-guided generation to semantic segmentation.

The proxy segmentation model is dataset-specific. For ADE20K, we use SegFormer-B0 as the proxy segmentation model. For COCO, we use MaskFormer-R50 as the proxy segmentation model. During generation, the proxy model scores the semantic consistency between the generated image and the input mask, and this score is used for refinement and filtering. After image generation, we use the same pseudo-labeling teacher as S$^2$-RL to obtain pixel-wise annotations, so that all synthetic baselines are evaluated with the same downstream training pipeline.

We follow the original UniDD hyperparameter setting as closely as possible. Each image is generated with 250 sampling steps. The filtering threshold is set to 0.75. A generated image is accepted if its proxy score exceeds this threshold. If the proxy score remains below the threshold for three consecutive attempts, we keep the highest-scoring image among the attempted samples. This fallback rule preserves a fixed dataset size for every compression ratio and prevents failed generations from changing the number of downstream training images.

We report this method as UniDD$^\ast$ to indicate that it is a segmentation-oriented adaptation of UniDD rather than the official implementation. Compared with the vanilla SD3.5 baseline, UniDD$^\ast$ uses mask-conditioned generation and task-specific proxy filtering. Compared with S$^2$-RL, it does not fine-tune the diffusion policy with sample-set rewards, but instead relies on iterative proxy-guided sampling from the pretrained FreestyleNet generator.

\subsection{Hyperparameters}

We list the detailed hyperparameters used for training S$^2$-RL in Table ~\ref{tab:hyperparams}. The optimization is conducted on a per-dataset basis (ADE20K and COCO).

\begin{table}[ht]
    \centering
    \setlength{\tabcolsep}{20pt}
    \caption{Detailed Hyperparameters for S$^2$-RL Training.}
    \label{tab:hyperparams}
    \begin{tabular}{l|cc}
    \toprule
    \textbf{Hyperparameter} & \textbf{ADE20K} & \textbf{COCO} \\
    \midrule
    \multicolumn{3}{l}{\textit{Optimization (GRPO)}} \\
    Optimizer & \multicolumn{2}{c}{AdamW \cite{adamw}} \\
    Learning Rate & \multicolumn{2}{c}{$1 \times 10^{-4}$} \\
    Batch Size (Prompts) & \multicolumn{2}{c}{16} \\
    Group Size ($G$) & 8 & 16 \\
    Ratio Clipping Width ($\epsilon$) & \multicolumn{2}{c}{$2\times10^{-6}$} \\
    Advantage Clipping & \multicolumn{2}{c}{$[-5,5]$} \\
    Inner Epochs per Rollout & \multicolumn{2}{c}{1} \\
    Gradient Clipping & \multicolumn{2}{c}{1.0} \\
    KL Penalty Coefficient & \multicolumn{2}{c}{0.0} \\
    EMA & \multicolumn{2}{c}{true} \\
    \midrule
    \multicolumn{3}{l}{\textit{Reward Configuration}} \\
    Recall vs. Occupancy Weight ($\lambda_{area}$) & \multicolumn{2}{c}{0.3} \\
    Diversity Margin ($\delta$) & \multicolumn{2}{c}{0.8} \\
    Scarcity Decay Factor ($\alpha$) & \multicolumn{2}{c}{1.0} \\
    Exploration Bonus Weight ($\beta$) & \multicolumn{2}{c}{0.5} \\
    \midrule
    \multicolumn{3}{l}{\textit{Total Objective Weights (Eq.~\ref{eq:total})}} \\
    Alignment Weight ($\gamma_{a}$) & 1.0 & 1.0 \\
    Diversity Weight ($\gamma_{d}$) & 5.0 & 10.0 \\
    Set Weight ($\gamma_{s}$) & 3.0 & 5.0 \\
    \midrule
    \multicolumn{3}{l}{\textit{Clustering (Set Reward)}} \\
    Number of Clusters ($K$) & \multicolumn{2}{c}{50} \\
    Clustering Algorithm & \multicolumn{2}{c}{K-Means (Euclidean distance)} \\
    \midrule
    \multicolumn{3}{l}{\textit{Generate Configuration}} \\
    Inference Scheduler & \multicolumn{2}{c}{FlowMatchEulerDiscreteScheduler} \\
    Classifier-Free Guidance & \multicolumn{2}{c}{7.0} \\
    Inference Steps & \multicolumn{2}{c}{50} \\
    \bottomrule
    \end{tabular}
\end{table}

\paragraph{Zero-KL setting.}
We use the zero-KL setting adopted in the GRPO-Guard~\cite{grpo-guard} training configuration, where the KL loss is not applied. GRPO-Guard stabilizes GRPO-style optimization for diffusion/flow models through regulated clipping, implemented by ratio normalization and gradient reweighting across denoising timesteps. These operations restore more balanced importance-ratio distributions and reduce timestep-wise gradient imbalance during policy optimization.

\subsection{GRPO Reward Computation Details}
\label{app:grpo_reward_details}

We describe the reward computation process used during GRPO fine-tuning. In each training iteration, we sample 16 text prompts from the prompt pool. For each prompt, the rollout policy generates a group of 8 trajectories for ADE20K or 16 for COCO, each producing one candidate image. Rewards are computed on the final images, while the policy update uses the stored denoising trajectories to evaluate the clipped per-step importance-ratio objective described in Appendix~\ref{app:grpo_estimator}.

For each generated image, we compute the sample-level reward terms, including semantic alignment and intra-class feature diversity. The semantic alignment reward evaluates whether the generated image is consistent with the target semantic content specified by the prompt. The intra-class diversity reward evaluates whether the image contains discriminative class-level features rather than repetitive or collapsed visual patterns. These two terms are computed independently for each image and provide local supervision for sample quality.

The set-level reward is computed on the same generated batch rather than on a separate reward batch. After extracting semantic features from all generated images in the current iteration, each feature is assigned to its nearest semantic prototype. We then accumulate the prototype-count increment induced by the current mini-batch. The set-level reward is evaluated using both the historical prototype counts and the current batch-level increment, before committing the new counts to the global memory. This batch-aware update prevents reward inflation \emph{across} iterations rather than within a single batch: samples that match the same prototype within a single iteration receive the same exploration bonus, but historical counts are updated only after the entire batch has been evaluated, so later iterations correctly see the marginal coverage contributed by the previous one.

After combining the sample-level and set-level rewards, GRPO computes advantages within each same-prompt group. Specifically, the $G$ rewards belonging to one prompt are normalized within that group, and the resulting advantages are clipped to $[-5,5]$ before use in the surrogate. We do not normalize rewards across all 16 prompts in the iteration. This design ensures that the policy compares alternative generations under the same semantic condition, rather than mixing reward scales across unrelated prompts.

The sample-level and set-level rewards serve different roles in optimization. The sample-level rewards guide each individual image toward semantic correctness and feature discriminability. The set-level reward tracks prototype visitation during policy training and assigns larger bonuses to under-covered semantic modes. Therefore, the set-level reward does not duplicate the sample-level objectives; instead, it introduces a dataset-level exploration signal that encourages the policy to construct a more balanced and informative synthetic set.

\subsection{Computational Resources}
\label{app:computational_resources}

Table~\ref{tab:construction_cost} summarizes the construction cost and peak GPU memory of different dataset construction pipelines on ADE20K at the 1.0\% budget. For selection-based baselines, the reported cost corresponds to feature extraction and subset selection. For generative baselines, the reported cost includes the generation or adaptation stage required by the corresponding pipeline; for UniDD$^\ast$, the FreestyleNet training cost is taken from the original FreestyleNet paper because our experiments use its released pretrained checkpoint.

\begin{table}[t]
\centering
\caption{Construction cost and peak GPU memory on ADE20K at the 1.0\% budget.}
\label{tab:construction_cost}
\begin{tabular}{lcc}
\toprule
Method & Construction Cost & Peak GPU Memory \\
\midrule
Random & selection $<$1 min & N/A \\
Uniform & selection $<$1 min & N/A \\
K-Center & selection $\sim$10 min & $\sim$4 GB \\
Herding & selection $\sim$10 min & $\sim$4 GB \\
SD3.5 Baseline & inference $\sim$1 GPU-hr & $\sim$40 GB \\
UniDD$^\ast$ & \makecell{FreestyleNet training $\sim$48 GPU-hrs \\ inference $\sim$10 GPU-hrs} & $\sim$30 GB \\
S$^2$-RL & \makecell{RL training $\sim$48 GPU-hrs \\ inference $\sim$1 GPU-hr} & $\sim$46 GB \\
\bottomrule
\end{tabular}
\end{table}

The selection-based baselines only operate on the original training set and therefore require substantially lower construction cost. Random and Uniform selection do not require feature extraction, while K-Center and Herding additionally compute selection features before subset construction.

The SD3.5 baseline directly samples images from the pretrained text-to-image model and does not include task-specific policy adaptation. UniDD$^\ast$ uses FreestyleNet for mask-conditioned generation and performs proxy-guided refinement during inference. The FreestyleNet training cost in Table~\ref{tab:construction_cost} is converted from the FreestyleNet supplementary material~\cite{freestylenet}, which reports approximately 2 days of training on ADE20K using a single NVIDIA A100 GPU, corresponding to approximately 48 GPU-hours. In our experiments, UniDD$^\ast$ starts from the released pretrained FreestyleNet checkpoint, and the measured inference cost corresponds to proxy-guided generation with this checkpoint.

S$^2$-RL uses pretrained SD3.5 as the initial policy and performs reward-guided RL fine-tuning on our RTX Pro 6000 GPUs. The reported S$^2$-RL cost includes the RL adaptation stage and the subsequent synthetic set generation stage. Downstream student training is conducted under the same protocol for all constructed training sets.

\subsection{Generator and Sampling-Budget Controls}
\label{app:matched_controls}

\begin{table}[htbp]
\captionsetup{font=small}
\centering
\small
\setlength{\tabcolsep}{3pt}
\caption{Additional controls on ADE20K 1.0\% with SegFormer-B5. Both blocks use SD3.5; costs in (b) are approximate generative GPU-hours. Scores are mean $\pm$ SD over 5 sets $\times$ 5 student seeds.}
\label{tab:matched_controls}
\begin{tabular}{lccc}
\toprule
Method & GPU-hours & mIoU (\%) & mAcc (\%) \\
\midrule
\multicolumn{4}{l}{\textit{(a) Shared generator backbone}} \\
UniDD$^\ast$ (SD3.5 control) & -- & \res{23.95}{0.41} & \res{33.80}{0.92} \\
S$^2$-RL & -- & \textbf{\res{25.47}{0.24}} & \textbf{\res{35.58}{0.57}} \\
\midrule
\multicolumn{4}{l}{\textit{(b) Approximately matched generative computation}} \\
SD3.5 best-of-$N$ & $\sim$50 & \res{23.29}{0.36} & \res{33.04}{0.83} \\
S$^2$-RL & $\sim$49 & \textbf{\res{25.47}{0.24}} & \textbf{\res{35.58}{0.57}} \\
\bottomrule
\end{tabular}
\end{table}

\textbf{Shared generator backbone.} For the ADE20K 1.0\% control in Table~\ref{tab:matched_controls}(a), we adapt SD3.5 to accept semantic masks following the FreestyleNet conditioning approach, train it on ADE20K, and apply the UniDD-inspired proxy-guided procedure. Both methods use SegFormer-B5 with five constructed sets and five student seeds per set. The original multi-budget UniDD$^\ast$ results in Tables~\ref{tab:ade20k} and~\ref{tab:coco} use the FreestyleNet-based implementation described in Appendix~\ref{app:unidd_baseline}.

\textbf{Approximately matched generative computation.} The unadapted SD3.5 best-of-$N$ control uses the same class-list prompts and generates approximately 200 candidates per prompt. Candidates are screened with the same sample-level and set-level reward functions used in policy training, retaining one image per prompt and 202 images per set. S$^2$-RL generates four candidates per prompt after policy adaptation and retains the same dataset size. Best-of-$N$ uses approximately 50 GPU-hours for generation; S$^2$-RL uses approximately 48 for adaptation and 1 for generation. These totals compare generative GPU-hours, without a complete accounting of preprocessing, reward evaluation, pseudo-labeling, or downstream training. Both methods follow the same five-set, five-student-seed evaluation protocol.

\textbf{Generator and Compute Controls.} Table~\ref{tab:matched_controls}(a) compares S$^2$-RL with a mask-conditioned UniDD$^\ast$ adaptation using the same SD3.5 backbone. The advantage persists with a shared generator, highlighting the value of coupling flexible text-conditioned generation with sample-set rewards for segmentation distillation. Table~\ref{tab:matched_controls}(b) compares policy adaptation with best-of-$N$ sampling under a similar generative budget. Both approaches use the same reward functions to screen candidates, and the adapted policy yields stronger downstream performance. This suggests a benefit from shaping the distribution from which candidates are drawn when constructing a compact training set.

\subsection{Downstream Training Details}
We train student models independently on each distilled dataset. Each SegFormer-B5 run initializes its MiT-B5 backbone from ImageNet-1K-pretrained weights.
\begin{itemize}[leftmargin=*,itemsep=0mm]
    \item \textbf{Framework:} MMSegmentation ~\cite{mmseg}.
    \item \textbf{Training Schedule:} 200 epochs.
    \item \textbf{Batch Size}: 16.
    \item \textbf{Crop Size:} $512 \times 512$.
    \item \textbf{Data Augmentation:} Same as the default settings of MMSegmentation. (Random Resize, Random Crop, Random Flip, and Photo Metric Distortion.)
\end{itemize}

\section{Additional Visualizations}

In this section, we provide extensive qualitative results to demonstrate the visual fidelity and semantic diversity of the synthetic datasets generated by S$^2$-RL. We showcase all generated images from both ADE20K and COCO datasets.

\subsection{Qualitative Results on ADE20K}
Figures~\ref{fig:ade_0.25}, \ref{fig:ade_0.5}, and \ref{fig:ade_1_2} display all synthetic images generated for the ADE20K dataset using the first generation seed. The results highlight our model's ability to synthesize complex scene layouts, such as cluttered bedrooms and diverse street scenes, strictly following the semantic constraints defined in the prompts.

\subsection{Qualitative Results on COCO}
Figures~\ref{fig:coco_0.625}, \ref{fig:coco_1.25}, and \ref{fig:coco_2.5_3} present the generation results on the COCO dataset using the first generation seed. Compared to ADE20K, COCO contains more object-centric scenes with varied backgrounds. S$^2$-RL effectively handles these variations, producing high-fidelity objects with diverse textures and lighting conditions.

\section{Extended Ablation Studies}

All experiments in this section use ADE20K at the 1.0\% budget. The coreset, reward-weight, group-size, prototype, and evaluator ablations use UPerNet with a ResNet-50 backbone; independent-policy evaluation uses SegFormer-B5. We distinguish variation across policy training runs from variation across downstream student seeds.

\subsection{DINOv3-controlled Coreset Reruns}
\label{app:dinov3_coreset}

We further study the influence of feature extractors in coreset selection. The standard K-Center and Herding baselines follow their conventional setup and use ResNet-50 features for selection. Since S$^2$-RL uses DINOv3 features inside the set-level reward computation, we additionally rerun K-Center and Herding with DINOv3 features while keeping the downstream training protocol unchanged.

This analysis is conducted on ADE20K at the 1.0\% budget under the canonical UPerNet + ResNet-50 downstream evaluation protocol. The purpose of this controlled experiment is to isolate the effect of the feature extractor used for subset selection. We independently train the same downstream segmentation architecture on each selected subset.

\begin{table}[ht]
\centering
\caption{DINOv3-controlled coreset reruns on ADE20K at the 1.0\% budget. This controlled analysis uses the canonical UPerNet + ResNet-50 downstream student.}
\label{tab:dinov3_coreset}
\begin{tabular}{lccc}
\toprule
Method & Selection Feature & mAcc (\%) & mIoU (\%) \\
\midrule
K-Center & ResNet-50 & 19.73$\pm$0.51 & 14.35$\pm$0.46 \\
K-Center & DINOv3 & 21.53$\pm$0.63 & 14.98$\pm$0.37 \\
Herding & ResNet-50 & 16.21$\pm$0.23 & 11.88$\pm$0.14 \\
Herding & DINOv3 & 17.55$\pm$0.26 & 12.66$\pm$0.15 \\
Random & N/A & 16.61$\pm$0.67 & 12.00$\pm$0.46 \\
Uniform & N/A & 21.03$\pm$0.92 & 14.83$\pm$0.57 \\
S$^2$-RL & DINOv3 reward only & \textbf{23.07$\pm$0.34} & \textbf{15.43$\pm$0.20} \\
\bottomrule
\end{tabular}
\end{table}

Replacing ResNet-50 with DINOv3 improves the two coreset baselines. K-Center improves from 14.35 to 14.98 mIoU, and Herding improves from 11.88 to 12.66 mIoU. This shows that stronger representation features can improve subset selection quality.

S$^2$-RL still achieves the best performance under the same downstream evaluation protocol. This indicates that the improvement of S$^2$-RL is not solely explained by the use of DINOv3 features. Instead, feature-based selection and reward-driven scene construction play different roles: DINOv3 improves the representation space for selecting existing samples, while S$^2$-RL uses the same type of semantic representation to guide generative exploration and construct new high-density training samples.

\subsection{Impact of Reward Balancing Weights}
The total reward in Eq.~\ref{eq:total} is a weighted sum of alignment ($\gamma_a$), diversity ($\gamma_d$), and scarcity ($\gamma_s$) terms. We investigate the sensitivity of the downstream performance to variations in these weights.

\textbf{Analysis.} As shown in Table~\ref{tab:sens_gamma}, S$^2$-RL demonstrates remarkable robustness to hyperparameter variations. While the optimal configuration ($\gamma_d=5.0, \gamma_s=3.0$) yields a peak mIoU of 15.43\%, deviating from these values results in only marginal fluctuations. Notably, even when the diversity weight is drastically increased to 10.0 or the scarcity weight is adjusted, the model consistently maintains a mIoU above 14.66\%, with a maximum performance drop of less than 0.8\%. This stability indicates that our framework relies on the intrinsic synergy of the proposed reward components rather than precise hyperparameter tuning, ensuring reliable performance across a broad range of settings.

\begin{table}[ht]
   \centering
   \caption{Impact of varying reward weights on downstream metrics (ADE20K 1\%).}
   \label{tab:sens_gamma}
   \begin{tabular}{ccc|cc}
   \toprule
   $\gamma_a$ & $\gamma_d$ & $\gamma_s$ & mAcc & mIoU \\
   \midrule
   1.0 & 3.0 & 3.0 & \res{21.65}{0.29} & \res{14.66}{0.16} \\
   1.0 & 3.0 & 5.0 & \res{22.40}{0.36} & \res{15.13}{0.16} \\
   1.0 & 5.0 & 3.0 & \textbf{\res{23.07}{0.34}} & \textbf{\res{15.43}{0.20}} \\
   1.0 & 5.0 & 5.0 & \res{22.47}{0.35} & \res{15.29}{0.19} \\
   1.0 & 10.0 & 3.0 & \res{22.18}{0.37} & \res{15.04}{0.25} \\
   1.0 & 10.0 & 5.0 & \res{22.91}{0.27} & \res{15.31}{0.23} \\
   \bottomrule
   \end{tabular}
\end{table}

\subsection{GRPO Group Size and Policy Variability}
\label{app:policy_stability}
The group size $G$ in GRPO determines the accuracy of the baseline estimation for variance reduction. We evaluate the model performance with $G \in \{4, 8, 16\}$.

\textbf{Analysis.} As shown in Table ~\ref{tab:sens_g}, increasing $G$ stabilizes the gradient estimate, leading to faster convergence and higher final mIoU. However, the performance saturates around $G=8$, after which the computational overhead outweighs the marginal gains. We choose $G=8$ for ADE20K; the COCO configuration uses $G=16$.

\begin{table}[ht]
    \centering
    \caption{Sensitivity to GRPO group size $G$ on downstream mIoU (ADE20K 1\%).}
    \label{tab:sens_g}
    \begin{tabular}{l|ccc}
    \toprule
    $G$ & 4 & 8 & 16 \\
    \midrule
    mIoU (\%) & \res{14.96}{0.25} & \res{15.43}{0.20} & \res{15.52}{0.19} \\
    \bottomrule
    \end{tabular}
\end{table}

\textbf{Independent policy runs.} We independently train three S$^2$-RL policies on ADE20K and evaluate their 1.0\% constructed sets with SegFormer-B5. After averaging downstream student-seed results within each policy, the mean and standard deviation across policies are $25.18\pm0.30$ mIoU and $35.90\pm0.77$ mAcc. This measures policy-training variation alongside the variation across constructed sets and student seeds reported in the main comparison.

\subsection{Prototype Number and Construction}
\label{app:prototype_k}
\label{app:prototype_construction}

We study the effect of the number of semantic prototypes, or semantic arms, used in the set-level reward. The main experiments use $K=50$ prototypes for each class. We additionally evaluate $K=25$ and $K=100$ on ADE20K at the 1.0\% budget.

\begin{table}[ht]
\centering
\caption{Effect of the number of semantic prototypes $K$ on ADE20K at the 1.0\% budget.}
\label{tab:prototype_k}
\begin{tabular}{ccc}
\toprule
$K$ & mAcc (\%) & mIoU (\%) \\
\midrule
25 & 22.30$\pm$0.38 & 15.00$\pm$0.16 \\
50 & \textbf{23.07$\pm$0.34} & \textbf{15.43$\pm$0.20} \\
100 & 22.58$\pm$0.87 & 15.24$\pm$0.55 \\
\bottomrule
\end{tabular}
\end{table}

The results show that S$^2$-RL is not highly sensitive to moderate changes in $K$. All three settings produce similar downstream performance, with a maximum difference of 0.43 mIoU. The default setting $K=50$ achieves the best result. Increasing $K$ from 50 to 100 does not further improve performance and leads to larger variance, suggesting that overly fine-grained prototype partitioning may make the set-level reward less stable under a small data budget.

In the current implementation, the same number of prototypes is used for each class. This simple design provides a uniform semantic-arm structure across categories, while the scarcity term in the set-level reward assigns larger bonuses to under-covered semantic modes during generation.

\textbf{Prototype construction.} At $K=50$, we compare nearest-observed K-means representatives with randomly selected same-class region features. Each representative is an observed region feature rather than a cluster-center vector. Both variants repeat policy adaptation from the same pretrained initialization, keeping the other reward and training settings fixed. Table~\ref{tab:prototype_construction} reports means and standard deviations over five downstream UPerNet--ResNet-50 seeds. The clustered representatives yield better downstream performance, suggesting that prototype placement complements the choice of bank size.

\begin{table}[ht]
\centering
\caption{Prototype construction at $K=50$ on ADE20K 1.0\%, evaluated with UPerNet--ResNet-50 over five student seeds.}
\label{tab:prototype_construction}
\begin{tabular}{lcc}
\toprule
Prototype bank & mIoU (\%) & mAcc (\%) \\
\midrule
Random same-class regions & \res{14.82}{0.37} & \res{22.21}{0.51} \\
K-means nearest real regions & \textbf{\res{15.43}{0.20}} & \textbf{\res{23.07}{0.34}} \\
\bottomrule
\end{tabular}
\end{table}

\subsection{Reward Evaluator Robustness}
\label{app:evaluator_robustness}

S$^2$-RL uses frozen segmentation models as reward evaluators during diffusion policy optimization. In the main setting, we use an ensemble evaluator to provide a more stable reward signal. We further study how the choice of reward evaluator affects the final distilled set by replacing the ensemble with a single frozen segmentation model.

This analysis keeps the generator prior, RL training protocol, reward formulation, and downstream evaluation protocol fixed. The only changed component is the reward evaluator used during policy optimization. All experiments are conducted on ADE20K at the 1.0\% budget using the canonical UPerNet + ResNet-50 downstream student.

\begin{table}[ht]
\centering
\caption{Reward-evaluator robustness under a fixed SD3.5 generator prior on ADE20K at the 1.0\% budget.}
\label{tab:evaluator_robustness}
\begin{tabular}{lc}
\toprule
Reward Evaluator(s) & mIoU (\%) \\
\midrule
UPerNet only & 14.52$\pm$0.22 \\
SegFormer-B5 only & 15.01$\pm$0.36 \\
DPT-Large only & 14.85$\pm$0.25 \\
Ensemble & \textbf{15.43$\pm$0.20} \\
\bottomrule
\end{tabular}
\end{table}

The results show that S$^2$-RL remains effective with different frozen evaluators under the same generator prior. Among the single-evaluator variants, SegFormer-B5 achieves the highest mIoU, followed by DPT-Large and UPerNet. The ensemble evaluator performs best, indicating that aggregating complementary segmentation priors provides a more reliable reward signal for policy optimization.

This controlled study separates evaluator choice from the generator prior. Under the fixed SD3.5 prior, the performance improvement of S$^2$-RL does not depend on a single reward-evaluator architecture. Instead, the reward-driven generation process remains effective across multiple segmentation evaluators, while the ensemble setting provides the strongest overall result.

\section{Independent Semantic and Layout Evaluation}
\label{app:semantic_audit}

\subsection{Protocol and Metric Definitions}

We evaluate five stored constructed sets per method at ADE20K 1.0\% (202 images per set) and COCO 2.5\textperthousand\ (295 images per set). A frozen OneFormer-Swin-L~\cite{oneformer} annotates both the constructed images and the official held-out validation reference (2,000 ADE20K and 5,000 COCO images). All four methods, including real-image Uniform selection, use this same annotation source. OneFormer was unused by the original reward, candidate-selection, pseudo-labeling, and student-training pipelines for these sets. This diagnostic is therefore independent of the reward evaluator, although it still relies on pretrained dataset priors.

For semantic appearance, we mask-pool ImageNet-1K V2 ResNet-50 layer-4 features extracted at $448\times448$ resolution; this feature extractor is also unused by the reward. We discard class regions smaller than 0.1\% of image area and repeat the analysis at 0.2\%. The OneFormer checkpoint revisions are fixed to \texttt{4a5bac8e64f82681a12db2e151a4c2f4ce6092b2} (ADE20K) and \texttt{3a263017ca5c75adbea145f25f81b118243d4394} (COCO).

\textbf{Local-support coverage and precision (C/P).} For a held-out descriptor, its self-excluded real 5-nearest-neighbor distance defines a fixed support radius. Coverage is the fraction of held-out descriptors with a constructed descriptor inside their support ball; precision is the fraction of constructed descriptors supported by the union of these fixed real-reference balls. Feature C/P applies this rule to mask-pooled ResNet features, and occupancy C/P to normalized $4\times4$ class-occupancy descriptors. Both are macro-averaged over reference-eligible classes. These are local-support diagnostics, distinct from standard generative precision/recall.

\textbf{Pair recall/precision and geometry C/P.} Pair recall and precision compare the unique category pairs co-occurring within images. For each category pair, geometry uses one descriptor per image from the two class-union masks: horizontal and vertical centroid displacement, log area ratio, and centroid distance normalized by the image diagonal. Geometry C/P applies the fixed-reference support rule separately to pairs with at least ten held-out observations and macro-averages over eligible pairs. These descriptors measure pairwise spatial support, not object boundaries, higher-order scene graphs, or full topology. We also report class-conditional Vendi~\cite{vendi} on the pooled ResNet features.

\textbf{Statistical scope.} Tables~\ref{tab:semantic_features} and~\ref{tab:semantic_geometry} give means and sample standard deviations across five stored sets. A content-hash check found no constructed/reference or cross-method image overlap, but images can be reused across sets of the same method. The reported variation is descriptive set-level variation, not independent-policy uncertainty or a significance test.

\subsection{Semantic and Pairwise Layout Results}

\begin{table}[ht]
\centering
\small
\setlength{\tabcolsep}{5pt}
\caption{Reward-independent feature and occupancy diagnostics at the 0.1\% area cutoff. Mean $\pm$ sample SD across five stored sets. C: coverage; P: precision.}
\label{tab:semantic_features}
\begin{tabular}{lccccc}
\toprule
Dataset/method & Feature C & Feature P & Occupancy C & Occupancy P & Vendi \\
\midrule
ADE Uniform & \res{.525}{.015} & \res{.878}{.010} & \res{.506}{.016} & \res{.929}{.010} & \res{2.664}{.023} \\
ADE SD3.5 & \res{.370}{.015} & \res{.777}{.018} & \res{.434}{.016} & \res{.888}{.019} & \res{2.397}{.005} \\
ADE UniDD$^\ast$ & \res{.443}{.018} & \res{.834}{.019} & \res{.427}{.011} & \res{.901}{.024} & \res{2.587}{.025} \\
ADE S$^2$-RL & \res{.447}{.010} & \res{.796}{.011} & \res{.538}{.016} & \res{.929}{.005} & \res{2.389}{.025} \\
COCO Uniform & \res{.314}{.011} & \res{.875}{.008} & \res{.304}{.014} & \res{.967}{.005} & \res{2.659}{.005} \\
COCO SD3.5 & \res{.240}{.005} & \res{.884}{.015} & \res{.262}{.003} & \res{.967}{.008} & \res{2.183}{.029} \\
COCO UniDD$^\ast$ & \res{.210}{.007} & \res{.818}{.008} & \res{.274}{.007} & \res{.971}{.006} & \res{2.527}{.031} \\
COCO S$^2$-RL & \res{.286}{.004} & \res{.869}{.009} & \res{.301}{.003} & \res{.964}{.007} & \res{2.215}{.017} \\
\bottomrule
\end{tabular}
\end{table}

\begin{table}[ht]
\centering
\small
\setlength{\tabcolsep}{9pt}
\caption{Reward-independent category-pair and relative-geometry diagnostics at the 0.1\% area cutoff. Mean $\pm$ sample SD across five stored sets.}
\label{tab:semantic_geometry}
\begin{tabular}{lcccc}
\toprule
Dataset/method & Pair R & Pair P & Geometry C & Geometry P \\
\midrule
ADE Uniform & \res{.592}{.008} & \res{.932}{.011} & \res{.544}{.012} & \res{.916}{.005} \\
ADE SD3.5 & \res{.539}{.019} & \res{.941}{.008} & \res{.549}{.020} & \res{.880}{.012} \\
ADE UniDD$^\ast$ & \res{.500}{.017} & \res{.955}{.002} & \res{.464}{.015} & \res{.862}{.016} \\
ADE S$^2$-RL & \res{.665}{.015} & \res{.898}{.010} & \res{.693}{.006} & \res{.939}{.004} \\
COCO Uniform & \res{.534}{.024} & \res{.975}{.005} & \res{.334}{.022} & \res{.836}{.023} \\
COCO SD3.5 & \res{.475}{.007} & \res{.981}{.002} & \res{.288}{.013} & \res{.778}{.010} \\
COCO UniDD$^\ast$ & \res{.476}{.010} & \res{.979}{.005} & \res{.277}{.011} & \res{.781}{.014} \\
COCO S$^2$-RL & \res{.578}{.018} & \res{.970}{.009} & \res{.424}{.009} & \res{.872}{.015} \\
\bottomrule
\end{tabular}
\end{table}

S$^2$-RL has the highest category-pair recall and pair-macro geometry coverage and precision on both datasets. This broader pair coverage comes with lower pair precision. Uniform has the strongest generic feature coverage and slightly higher COCO occupancy coverage ($.304$ versus $.301$). Vendi is mixed: S$^2$-RL is slightly below SD3.5 on ADE20K and slightly above it on COCO, while remaining below Uniform and UniDD$^\ast$ on both. These results support broader measured category-pair and relative-geometry support rather than universal diversity improvement.

\textbf{Threshold and aggregation sensitivity.} Table~\ref{tab:semantic_sensitivity} combines the stricter 0.2\% area cutoff with alternative precision aggregations. The ordering in pair recall and pair-macro geometry C/P persists at the stricter cutoff. Micro precision counts every constructed-set observation, including unsupported classes or pairs; real-frequency-weighted geometry precision weights eligible pairs by held-out occurrence frequency. Geometry precision favors S$^2$-RL under pair-macro and real-frequency weighting, while micro-precision results are mixed.

\begin{table}[ht]
\centering
\small
\setlength{\tabcolsep}{4pt}
\caption{Threshold and aggregation sensitivity. Pair R/P and geometry C/P use the 0.2\% area cutoff; the three precision columns list 0.1\% / 0.2\%. Values are descriptive five-set means.}
\label{tab:semantic_sensitivity}
\begin{tabular}{lccccc}
\toprule
Dataset/method & \makecell{Pair\\R/P} & \makecell{Geometry\\C/P} & \makecell{Feature\\micro P} & \makecell{Geometry\\micro P} & \makecell{Real-weighted\\geometry P} \\
\midrule
ADE Uniform & .571/.926 & .537/.907 & .851/.852 & .822/.804 & .959/.955 \\
ADE SD3.5 & .529/.935 & .545/.879 & .792/.793 & .856/.833 & .948/.948 \\
ADE UniDD$^\ast$ & .492/.951 & .475/.867 & .839/.837 & .857/.838 & .940/.941 \\
ADE S$^2$-RL & .645/.896 & .680/.937 & .795/.800 & .822/.799 & .971/.971 \\
COCO Uniform & .517/.972 & .336/.836 & .868/.866 & .918/.909 & .946/.944 \\
COCO SD3.5 & .464/.977 & .296/.778 & .865/.864 & .927/.914 & .924/.919 \\
COCO UniDD$^\ast$ & .460/.977 & .283/.785 & .814/.811 & .929/.922 & .929/.927 \\
COCO S$^2$-RL & .568/.965 & .429/.873 & .825/.823 & .919/.906 & .958/.957 \\
\bottomrule
\end{tabular}
\end{table}

\textbf{Complementary image-level diagnostics.} Table~\ref{tab:generic_diagnostics} reports FID~\cite{fid}, KID~\cite{kid}, and PRDC~\cite{prdc} from InceptionV3 pool-3 features. FID uses full-training-set reference statistics; KID and PRDC use the held-out reference, with 50 KID subsets of size 100 and PRDC $k=5$. Matched LPIPS~\cite{lpips} uses AlexNet (v0.1) on up to 100 disjoint image pairs with mask class-set Jaccard similarity at least 0.3, using the original teacher masks for synthetic images and ground-truth masks for Uniform. Results are means and sample standard deviations across the same five stored sets. These metrics do not consistently track segmentation utility: relative to unadapted SD3.5, S$^2$-RL has worse ADE20K FID/KID, modestly better COCO distribution scores, and lower matched LPIPS on both datasets. The semantic/layout gains therefore coexist with mixed appearance-space results.

\begin{table}[ht]
\centering
\small
\setlength{\tabcolsep}{4pt}
\caption{Generic image-distribution and matched perceptual-diversity diagnostics. Lower FID/KID and higher recall, coverage, and LPIPS are preferred by these metrics.}
\label{tab:generic_diagnostics}
\begin{tabular}{lccccc}
\toprule
Dataset/method & FID & KID $\times10^3$ & PRDC recall & PRDC coverage & Matched LPIPS \\
\midrule
ADE Uniform & \res{101.31}{0.74} & \res{1.83}{0.66} & \res{0.791}{0.014} & \res{0.381}{0.019} & \res{0.6996}{0.0043} \\
ADE SD3.5 & \res{118.08}{1.72} & \res{10.10}{1.24} & \res{0.592}{0.012} & \res{0.265}{0.009} & \res{0.6566}{0.0048} \\
ADE UniDD$^\ast$ & \res{106.75}{1.68} & \res{4.49}{1.25} & \res{0.749}{0.016} & \res{0.352}{0.014} & \res{0.6993}{0.0052} \\
ADE S$^2$-RL & \res{122.21}{1.05} & \res{15.71}{1.27} & \res{0.550}{0.042} & \res{0.264}{0.027} & \res{0.5754}{0.0023} \\
COCO Uniform & \res{108.20}{1.58} & \res{0.99}{0.36} & \res{0.791}{0.012} & \res{0.253}{0.020} & \res{0.7043}{0.0113} \\
COCO SD3.5 & \res{120.98}{1.33} & \res{9.00}{0.89} & \res{0.587}{0.023} & \res{0.218}{0.012} & \res{0.6867}{0.0087} \\
COCO UniDD$^\ast$ & \res{110.53}{0.87} & \res{3.03}{0.27} & \res{0.747}{0.011} & \res{0.252}{0.012} & \res{0.7090}{0.0041} \\
COCO S$^2$-RL & \res{120.89}{1.48} & \res{8.13}{0.98} & \res{0.606}{0.016} & \res{0.222}{0.011} & \res{0.6292}{0.0033} \\
\bottomrule
\end{tabular}
\end{table}

\section{Limitations and Future Work}

While S$^2$-RL establishes a new state-of-the-art for semantic segmentation dataset distillation, we identify several limitations inherent to the current framework and outline promising directions for future research.

\subsection{Limitations}

\textbf{Dataset Construction Cost.} S$^2$-RL incurs an upfront policy-adaptation cost of approximately 48 GPU-hours, followed by multi-step diffusion sampling. Selection-based methods are substantially cheaper to construct. Although policy optimization avoids differentiating through dense student-training updates, the intended use is offline extreme compression with repeated reuse of a constructed set; the measured generative budget does not establish an end-to-end lifecycle speedup.

\textbf{Dependency on pretrained Priors.} Our framework relies on the rich semantic priors encapsulated in the pretrained Stable Diffusion 3.5 and the frozen segmentor reward model. Consequently, the performance of S$^2$-RL is bounded by the capabilities of these foundation models. For extremely rare categories not well-represented in the Stable Diffusion training corpus, the policy may struggle to generate high-fidelity textures even with RL fine-tuning.

\textbf{Pseudo-Label Fidelity.} Generated training masks can inherit teacher errors, especially for small or ambiguous regions.

\textbf{Reward Proxy Gap.} We utilize a frozen segmentor as a proxy to estimate the downstream utility of generated samples. While effective, this is an approximation of the true student learning dynamics. There exists a theoretical gap between maximizing the frozen segmentor's confidence and minimizing the student's validation loss. In rare cases, the agent might learn to hack the reward by generating adversarial features that satisfy the frozen segmentor but are less transferable to a student model trained independently on the constructed dataset.

\subsection{Future Work}

\textbf{Extension to Other Dense Prediction Tasks.} The proposed sample-set dual reward paradigm is generic and not limited to semantic segmentation. Future work could extend S$^2$-RL to other dense prediction tasks, such as instance segmentation by incorporating box-level alignment rewards and depth estimation by replacing the semantic evaluator with a depth consistency reward.

\textbf{Student-in-the-Loop Optimization.} To bridge the reward proxy gap mentioned above, future iterations could incorporate feedback directly from the student model being trained. Although calculating gradients through the student involves high memory costs (the original bi-level bottleneck), one could employ meta-learning techniques or periodic reward model updates to approximate the student's needs without full gradient unrolling.

\section{Broader Impacts}
\label{app:broader_impacts}

This work studies semantic segmentation dataset distillation under compact data budgets. Its primary positive impact is improving the efficiency of dense prediction research by reducing the number of annotated training samples required for downstream model development. By constructing compact synthetic training sets, S$^2$-RL may lower storage, annotation, and iteration costs for semantic segmentation experiments, which can make dense prediction model development more accessible in resource-constrained settings.

The method also inherits several limitations and risks from pretrained generative and segmentation models. Since S$^2$-RL relies on pretrained text-to-image generators and frozen reward evaluators, the generated samples may reflect biases, artifacts, or category imbalances present in these models or in the original training datasets. If such distilled sets are used without careful validation, downstream models may inherit these biases or perform unevenly across rare or visually ambiguous categories. Therefore, the generated data should be treated as a compact training surrogate for research evaluation rather than as a replacement for carefully curated real annotations in safety-critical applications.

\begin{figure}[p]
    \centering
    \includegraphics[width=1.0\linewidth]{fig/time_1_ratio_0.0025.pdf}
    \caption{Generated samples on ADE20K with 0.25\% data ratio.}
    \label{fig:ade_0.25}
\end{figure}

\begin{figure}[p]
    \centering
    \includegraphics[width=1.0\linewidth]{fig/time_1_ratio_0.005.pdf}
    \caption{Generated samples on ADE20K with 0.5\% data ratio.}
    \label{fig:ade_0.5}
\end{figure}

\begin{figure}[p]
    \centering
    \includegraphics[width=1.0\linewidth]{fig/time_1_ratio_0.01_part1.pdf}
    \caption{Generated samples on ADE20K with 1.0\% data ratio. (Continued on next page)}
    \label{fig:ade_1_1}
\end{figure}

\begin{figure}[p]
    \ContinuedFloat 
    \centering
    \includegraphics[width=1.0\linewidth]{fig/time_1_ratio_0.01_part2.pdf}
    \caption{(Continued) Generated samples on ADE20K with 1.0\% data ratio.}
    \label{fig:ade_1_2}
\end{figure}

\begin{figure}[p]
    \centering
    \includegraphics[width=1.0\linewidth]{fig/coco_time_1_ratio_0.000625.pdf}
    \caption{Generated samples on COCO with 0.625\textperthousand\ data ratio.}
    \label{fig:coco_0.625}
\end{figure}

\begin{figure}[p]
    \centering
    \includegraphics[width=1.0\linewidth]{fig/coco_time_1_ratio_0.00125.pdf}
    \caption{Generated samples on COCO with 1.25\textperthousand\ data ratio.}
    \label{fig:coco_1.25}
\end{figure}

\begin{figure}[p]
    \centering
    \includegraphics[width=1.0\linewidth]{fig/coco_time_1_ratio_0.0025_part1.pdf}
    \caption{Generated samples on COCO with 2.5\textperthousand\ data ratio. (Continued on next page)}
    \label{fig:coco_2.5_1}
\end{figure}

\begin{figure}[p]
    \ContinuedFloat 
    \centering
    \includegraphics[width=1.0\linewidth]{fig/coco_time_1_ratio_0.0025_part2.pdf}
    \caption{(Continued) Generated samples on COCO with 2.5\textperthousand\ data ratio. (Continued on next page)}
    \label{fig:coco_2.5_2}
\end{figure}

\begin{figure}[p]
    \ContinuedFloat 
    \centering
    \includegraphics[width=1.0\linewidth]{fig/coco_time_1_ratio_0.0025_part3.pdf}
    \caption{(Continued) Generated samples on COCO with 2.5\textperthousand\ data ratio.}
    \label{fig:coco_2.5_3}
\end{figure}

\clearpage
\newpage
\section*{NeurIPS Paper Checklist}

\begin{enumerate}

\item {\bf Claims}
    \item[] Question: Do the main claims made in the abstract and introduction accurately reflect the paper's contributions and scope?
    \item[] Answer: \answerYes{} %
    \item[] Justification: The abstract and introduction state the main contributions of S$^2$-RL, including the reformulation of semantic segmentation dataset distillation as reward-driven diffusion policy optimization and the proposed sample-set dual reward design. The claims are supported by experiments on ADE20K and COCO and are scoped to semantic segmentation dataset distillation under compact data budgets.
    \item[] Guidelines:
    \begin{itemize}
        \item The answer \answerNA{} means that the abstract and introduction do not include the claims made in the paper.
        \item The abstract and/or introduction should clearly state the claims made, including the contributions made in the paper and important assumptions and limitations. A \answerNo{} or \answerNA{} answer to this question will not be perceived well by the reviewers. 
        \item The claims made should match theoretical and experimental results, and reflect how much the results can be expected to generalize to other settings. 
        \item It is fine to include aspirational goals as motivation as long as it is clear that these goals are not attained by the paper. 
    \end{itemize}

\item {\bf Limitations}
    \item[] Question: Does the paper discuss the limitations of the work performed by the authors?
    \item[] Answer: \answerYes{} %
    \item[] Justification: The main text discusses upfront construction cost and possible pseudo-label errors. The appendix further discusses dependence on pretrained models and the reward proxy gap between frozen evaluator scores and downstream student learning dynamics.
    \item[] Guidelines:
    \begin{itemize}
        \item The answer \answerNA{} means that the paper has no limitation while the answer \answerNo{} means that the paper has limitations, but those are not discussed in the paper. 
        \item The authors are encouraged to create a separate ``Limitations'' section in their paper.
        \item The paper should point out any strong assumptions and how robust the results are to violations of these assumptions (e.g., independence assumptions, noiseless settings, model well-specification, asymptotic approximations only holding locally). The authors should reflect on how these assumptions might be violated in practice and what the implications would be.
        \item The authors should reflect on the scope of the claims made, e.g., if the approach was only tested on a few datasets or with a few runs. In general, empirical results often depend on implicit assumptions, which should be articulated.
        \item The authors should reflect on the factors that influence the performance of the approach. For example, a facial recognition algorithm may perform poorly when image resolution is low or images are taken in low lighting. Or a speech-to-text system might not be used reliably to provide closed captions for online lectures because it fails to handle technical jargon.
        \item The authors should discuss the computational efficiency of the proposed algorithms and how they scale with dataset size.
        \item If applicable, the authors should discuss possible limitations of their approach to address problems of privacy and fairness.
        \item While the authors might fear that complete honesty about limitations might be used by reviewers as grounds for rejection, a worse outcome might be that reviewers discover limitations that aren't acknowledged in the paper. The authors should use their best judgment and recognize that individual actions in favor of transparency play an important role in developing norms that preserve the integrity of the community. Reviewers will be specifically instructed to not penalize honesty concerning limitations.
    \end{itemize}

\item {\bf Theory assumptions and proofs}
    \item[] Question: For each theoretical result, does the paper provide the full set of assumptions and a complete (and correct) proof?
    \item[] Answer: \answerYes{} %
    \item[] Justification: The paper presents theoretical propositions as optimization interpretations. The main text states the high-level proposition, and the appendix provides detailed derivations and supporting propositions for GRPO gradient estimation and the sample-set dual reward mechanism.
    \item[] Guidelines:
    \begin{itemize}
        \item The answer \answerNA{} means that the paper does not include theoretical results. 
        \item All the theorems, formulas, and proofs in the paper should be numbered and cross-referenced.
        \item All assumptions should be clearly stated or referenced in the statement of any theorems.
        \item The proofs can either appear in the main paper or the supplemental material, but if they appear in the supplemental material, the authors are encouraged to provide a short proof sketch to provide intuition. 
        \item Inversely, any informal proof provided in the core of the paper should be complemented by formal proofs provided in appendix or supplemental material.
        \item Theorems and Lemmas that the proof relies upon should be properly referenced. 
    \end{itemize}

    \item {\bf Experimental result reproducibility}
    \item[] Question: Does the paper fully disclose all the information needed to reproduce the main experimental results of the paper to the extent that it affects the main claims and/or conclusions of the paper (regardless of whether the code and data are provided or not)?
    \item[] Answer: \answerYes{} %
    \item[] Justification: The paper specifies datasets, compression ratios, baselines, downstream architectures, evaluation metrics, training schedules, reward definitions, and key hyperparameters. Additional implementation details, UniDD$^\ast$ baseline settings, computational resources, and ablation protocols are provided in the appendix.
    \item[] Guidelines:
    \begin{itemize}
        \item The answer \answerNA{} means that the paper does not include experiments.
        \item If the paper includes experiments, a \answerNo{} answer to this question will not be perceived well by the reviewers: Making the paper reproducible is important, regardless of whether the code and data are provided or not.
        \item If the contribution is a dataset and\slash or model, the authors should describe the steps taken to make their results reproducible or verifiable. 
        \item Depending on the contribution, reproducibility can be accomplished in various ways. For example, if the contribution is a novel architecture, describing the architecture fully might suffice, or if the contribution is a specific model and empirical evaluation, it may be necessary to either make it possible for others to replicate the model with the same dataset, or provide access to the model. In general. releasing code and data is often one good way to accomplish this, but reproducibility can also be provided via detailed instructions for how to replicate the results, access to a hosted model (e.g., in the case of a large language model), releasing of a model checkpoint, or other means that are appropriate to the research performed.
        \item While NeurIPS does not require releasing code, the conference does require all submissions to provide some reasonable avenue for reproducibility, which may depend on the nature of the contribution. For example
        \begin{enumerate}
            \item If the contribution is primarily a new algorithm, the paper should make it clear how to reproduce that algorithm.
            \item If the contribution is primarily a new model architecture, the paper should describe the architecture clearly and fully.
            \item If the contribution is a new model (e.g., a large language model), then there should either be a way to access this model for reproducing the results or a way to reproduce the model (e.g., with an open-source dataset or instructions for how to construct the dataset).
            \item We recognize that reproducibility may be tricky in some cases, in which case authors are welcome to describe the particular way they provide for reproducibility. In the case of closed-source models, it may be that access to the model is limited in some way (e.g., to registered users), but it should be possible for other researchers to have some path to reproducing or verifying the results.
        \end{enumerate}
    \end{itemize}

\item {\bf Open access to data and code}
    \item[] Question: Does the paper provide open access to the data and code, with sufficient instructions to faithfully reproduce the main experimental results, as described in supplemental material?
    \item[] Answer: \answerYes{} %
    \item[] Justification: Code and reproduction instructions are available at \url{https://github.com/00why00/S2RL}. Released datasets and policy weights are linked from the repository and hosted at \url{https://huggingface.co/0why0/S2RL}. The README covers data preparation, training-set construction, and downstream evaluation.
    \item[] Guidelines:
    \begin{itemize}
        \item The answer \answerNA{} means that paper does not include experiments requiring code.
        \item Please see the NeurIPS code and data submission guidelines (\url{https://neurips.cc/public/guides/CodeSubmissionPolicy}) for more details.
        \item While we encourage the release of code and data, we understand that this might not be possible, so \answerNo{} is an acceptable answer. Papers cannot be rejected simply for not including code, unless this is central to the contribution (e.g., for a new open-source benchmark).
        \item The instructions should contain the exact command and environment needed to run to reproduce the results. See the NeurIPS code and data submission guidelines (\url{https://neurips.cc/public/guides/CodeSubmissionPolicy}) for more details.
        \item The authors should provide instructions on data access and preparation, including how to access the raw data, preprocessed data, intermediate data, and generated data, etc.
        \item The authors should provide scripts to reproduce all experimental results for the new proposed method and baselines. If only a subset of experiments are reproducible, they should state which ones are omitted from the script and why.
        \item At submission time, to preserve anonymity, the authors should release anonymized versions (if applicable).
        \item Providing as much information as possible in supplemental material (appended to the paper) is recommended, but including URLs to data and code is permitted.
    \end{itemize}

\item {\bf Experimental setting/details}
    \item[] Question: Does the paper specify all the training and test details (e.g., data splits, hyperparameters, how they were chosen, type of optimizer) necessary to understand the results?
    \item[] Answer: \answerYes{} %
    \item[] Justification: The experimental section describes the datasets, compression ratios, metrics, baselines, downstream training protocol, optimizer, crop size, and training schedule. The appendix further reports RL hyperparameters, reward computation details, downstream training details, and computational resource usage.
    \item[] Guidelines:
    \begin{itemize}
        \item The answer \answerNA{} means that the paper does not include experiments.
        \item The experimental setting should be presented in the core of the paper to a level of detail that is necessary to appreciate the results and make sense of them.
        \item The full details can be provided either with the code, in appendix, or as supplemental material.
    \end{itemize}

\item {\bf Experiment statistical significance}
    \item[] Question: Does the paper report error bars suitably and correctly defined or other appropriate information about the statistical significance of the experiments?
    \item[] Answer: \answerYes{} %
    \item[] Justification: The main comparisons report mean $\pm$ standard deviation over five constructed sets and five student seeds per set. The appendix separately reports three independent policy runs, five-seed prototype ablations, and descriptive variation across five stored sets for semantic/layout diagnostics. The stored-set diagnostics are not treated as independent policy replications or significance tests.
    \item[] Guidelines:
    \begin{itemize}
        \item The answer \answerNA{} means that the paper does not include experiments.
        \item The authors should answer \answerYes{} if the results are accompanied by error bars, confidence intervals, or statistical significance tests, at least for the experiments that support the main claims of the paper.
        \item The factors of variability that the error bars are capturing should be clearly stated (for example, train/test split, initialization, random drawing of some parameter, or overall run with given experimental conditions).
        \item The method for calculating the error bars should be explained (closed form formula, call to a library function, bootstrap, etc.)
        \item The assumptions made should be given (e.g., Normally distributed errors).
        \item It should be clear whether the error bar is the standard deviation or the standard error of the mean.
        \item It is OK to report 1-sigma error bars, but one should state it. The authors should preferably report a 2-sigma error bar than state that they have a 96\% CI, if the hypothesis of Normality of errors is not verified.
        \item For asymmetric distributions, the authors should be careful not to show in tables or figures symmetric error bars that would yield results that are out of range (e.g., negative error rates).
        \item If error bars are reported in tables or plots, the authors should explain in the text how they were calculated and reference the corresponding figures or tables in the text.
    \end{itemize}

\item {\bf Experiments compute resources}
    \item[] Question: For each experiment, does the paper provide sufficient information on the computer resources (type of compute workers, memory, time of execution) needed to reproduce the experiments?
    \item[] Answer: \answerYes{} %
    \item[] Justification: The appendix includes a computational resources table reporting construction cost and peak GPU memory for selection-based baselines, SD3.5, UniDD$^\ast$, and S$^2$-RL. It also states the GPU type used for S$^2$-RL fine-tuning and reports the training and inference costs needed to construct the distilled sets.
    \item[] Guidelines:
    \begin{itemize}
        \item The answer \answerNA{} means that the paper does not include experiments.
        \item The paper should indicate the type of compute workers CPU or GPU, internal cluster, or cloud provider, including relevant memory and storage.
        \item The paper should provide the amount of compute required for each of the individual experimental runs as well as estimate the total compute. 
        \item The paper should disclose whether the full research project required more compute than the experiments reported in the paper (e.g., preliminary or failed experiments that didn't make it into the paper). 
    \end{itemize}
    
\item {\bf Code of ethics}
    \item[] Question: Does the research conducted in the paper conform, in every respect, with the NeurIPS Code of Ethics \url{https://neurips.cc/public/EthicsGuidelines}?
    \item[] Answer: \answerYes{} %
    \item[] Justification: The work uses public datasets and pretrained models for semantic segmentation dataset distillation and does not involve human-subject experiments, private data collection, or harmful data acquisition. The study is conducted as a technical investigation of compact training set construction.
    \item[] Guidelines:
    \begin{itemize}
        \item The answer \answerNA{} means that the authors have not reviewed the NeurIPS Code of Ethics.
        \item If the authors answer \answerNo, they should explain the special circumstances that require a deviation from the Code of Ethics.
        \item The authors should make sure to preserve anonymity (e.g., if there is a special consideration due to laws or regulations in their jurisdiction).
    \end{itemize}

\item {\bf Broader impacts}
    \item[] Question: Does the paper discuss both potential positive societal impacts and negative societal impacts of the work performed?
    \item[] Answer: \answerYes{} %
    \item[] Justification: The paper discusses potential positive and negative societal impacts in Appendix. The positive impacts include reducing annotation, storage, and iteration costs for dense prediction research, while the negative impacts include possible inherited biases, artifacts, and pseudo-label errors from pretrained generative and segmentation models.
    \item[] Guidelines:
    \begin{itemize}
        \item The answer \answerNA{} means that there is no societal impact of the work performed.
        \item If the authors answer \answerNA{} or \answerNo, they should explain why their work has no societal impact or why the paper does not address societal impact.
        \item Examples of negative societal impacts include potential malicious or unintended uses (e.g., disinformation, generating fake profiles, surveillance), fairness considerations (e.g., deployment of technologies that could make decisions that unfairly impact specific groups), privacy considerations, and security considerations.
        \item The conference expects that many papers will be foundational research and not tied to particular applications, let alone deployments. However, if there is a direct path to any negative applications, the authors should point it out. For example, it is legitimate to point out that an improvement in the quality of generative models could be used to generate Deepfakes for disinformation. On the other hand, it is not needed to point out that a generic algorithm for optimizing neural networks could enable people to train models that generate Deepfakes faster.
        \item The authors should consider possible harms that could arise when the technology is being used as intended and functioning correctly, harms that could arise when the technology is being used as intended but gives incorrect results, and harms following from (intentional or unintentional) misuse of the technology.
        \item If there are negative societal impacts, the authors could also discuss possible mitigation strategies (e.g., gated release of models, providing defenses in addition to attacks, mechanisms for monitoring misuse, mechanisms to monitor how a system learns from feedback over time, improving the efficiency and accessibility of ML).
    \end{itemize}
    
\item {\bf Safeguards}
    \item[] Question: Does the paper describe safeguards that have been put in place for responsible release of data or models that have a high risk for misuse (e.g., pre-trained language models, image generators, or scraped datasets)?
    \item[] Answer: \answerNA{} %
    \item[] Justification: The paper does not release a new general-purpose image generator, scraped dataset, or high-risk pretrained model. The experiments rely on existing public datasets and pretrained models, and the generated samples are used for semantic segmentation dataset distillation evaluation.
    \item[] Guidelines:
    \begin{itemize}
        \item The answer \answerNA{} means that the paper poses no such risks.
        \item Released models that have a high risk for misuse or dual-use should be released with necessary safeguards to allow for controlled use of the model, for example by requiring that users adhere to usage guidelines or restrictions to access the model or implementing safety filters. 
        \item Datasets that have been scraped from the Internet could pose safety risks. The authors should describe how they avoided releasing unsafe images.
        \item We recognize that providing effective safeguards is challenging, and many papers do not require this, but we encourage authors to take this into account and make a best faith effort.
    \end{itemize}

\item {\bf Licenses for existing assets}
    \item[] Question: Are the creators or original owners of assets (e.g., code, data, models), used in the paper, properly credited and are the license and terms of use explicitly mentioned and properly respected?
    \item[] Answer: \answerYes{} %
    \item[] Justification: The paper cites the datasets, pretrained models, and software frameworks used in the experiments, including ADE20K, COCO, Stable Diffusion 3.5, FreestyleNet, DINOv3, segmentation models, and MMSegmentation. We follow their intended research-use settings and provide code that relies on users obtaining the corresponding assets from their official sources and respecting their licenses.
    \item[] Guidelines:
    \begin{itemize}
        \item The answer \answerNA{} means that the paper does not use existing assets.
        \item The authors should cite the original paper that produced the code package or dataset.
        \item The authors should state which version of the asset is used and, if possible, include a URL.
        \item The name of the license (e.g., CC-BY 4.0) should be included for each asset.
        \item For scraped data from a particular source (e.g., website), the copyright and terms of service of that source should be provided.
        \item If assets are released, the license, copyright information, and terms of use in the package should be provided. For popular datasets, \url{paperswithcode.com/datasets} has curated licenses for some datasets. Their licensing guide can help determine the license of a dataset.
        \item For existing datasets that are re-packaged, both the original license and the license of the derived asset (if it has changed) should be provided.
        \item If this information is not available online, the authors are encouraged to reach out to the asset's creators.
    \end{itemize}

\item {\bf New assets}
    \item[] Question: Are new assets introduced in the paper well documented and is the documentation provided alongside the assets?
    \item[] Answer: \answerYes{} %
    \item[] Justification: The implementation is released at \url{https://github.com/00why00/S2RL}, with a README covering dependencies, data preparation, and commands for training-set construction and downstream evaluation.
    \item[] Guidelines:
    \begin{itemize}
        \item The answer \answerNA{} means that the paper does not release new assets.
        \item Researchers should communicate the details of the dataset\slash code\slash model as part of their submissions via structured templates. This includes details about training, license, limitations, etc. 
        \item The paper should discuss whether and how consent was obtained from people whose asset is used.
        \item At submission time, remember to anonymize your assets (if applicable). You can either create an anonymized URL or include an anonymized zip file.
    \end{itemize}

\item {\bf Crowdsourcing and research with human subjects}
    \item[] Question: For crowdsourcing experiments and research with human subjects, does the paper include the full text of instructions given to participants and screenshots, if applicable, as well as details about compensation (if any)? 
    \item[] Answer: \answerNA{} %
    \item[] Justification: The paper does not involve crowdsourcing, user studies, or research with human subjects.
    \item[] Guidelines:
    \begin{itemize}
        \item The answer \answerNA{} means that the paper does not involve crowdsourcing nor research with human subjects.
        \item Including this information in the supplemental material is fine, but if the main contribution of the paper involves human subjects, then as much detail as possible should be included in the main paper. 
        \item According to the NeurIPS Code of Ethics, workers involved in data collection, curation, or other labor should be paid at least the minimum wage in the country of the data collector. 
    \end{itemize}

\item {\bf Institutional review board (IRB) approvals or equivalent for research with human subjects}
    \item[] Question: Does the paper describe potential risks incurred by study participants, whether such risks were disclosed to the subjects, and whether Institutional Review Board (IRB) approvals (or an equivalent approval/review based on the requirements of your country or institution) were obtained?
    \item[] Answer: \answerNA{} %
    \item[] Justification: The paper does not involve human subjects, crowdsourcing, or collection of personal data, so IRB approval or equivalent human-subjects review is not applicable.
    \item[] Guidelines:
    \begin{itemize}
        \item The answer \answerNA{} means that the paper does not involve crowdsourcing nor research with human subjects.
        \item Depending on the country in which research is conducted, IRB approval (or equivalent) may be required for any human subjects research. If you obtained IRB approval, you should clearly state this in the paper. 
        \item We recognize that the procedures for this may vary significantly between institutions and locations, and we expect authors to adhere to the NeurIPS Code of Ethics and the guidelines for their institution. 
        \item For initial submissions, do not include any information that would break anonymity (if applicable), such as the institution conducting the review.
    \end{itemize}

\item {\bf Declaration of LLM usage}
    \item[] Question: Does the paper describe the usage of LLMs if it is an important, original, or non-standard component of the core methods in this research? Note that if the LLM is used only for writing, editing, or formatting purposes and does \emph{not} impact the core methodology, scientific rigor, or originality of the research, declaration is not required.
    \item[] Answer: \answerNA{} %
    \item[] Justification: The core method does not use LLMs as an important, original, or non-standard component. Text prompts are constructed from semantic category names rather than generated or optimized by an LLM.
    \item[] Guidelines:
    \begin{itemize}
        \item The answer \answerNA{} means that the core method development in this research does not involve LLMs as any important, original, or non-standard components.
        \item Please refer to our LLM policy in the NeurIPS handbook for what should or should not be described.
    \end{itemize}

\end{enumerate}

\end{document}